\theoremstyle{plain}
\newtheorem{theorem}{Theorem}[section]
\newtheorem{proposition}[theorem]{Proposition}
\newtheorem{corollary}[theorem]{Corollary}
\theoremstyle{definition}
\newtheorem{assumption}[theorem]{Assumption}
\theoremstyle{remark}
\icmltitlerunning{On Sampling with Approximate Transport Maps}
\newcommand{\checkm}{\ding{51}}
\newcommand{\crossm}{\ding{53}}
\newcommand{\reparamalgs}{neutra-MCMC}
\newcommand{\propalgs}{flow-MCMC}
\newcommand{\isalgs}{neural-IS}
\newcommand{\neutraflow}{neutra-flow-MCMC}
\newcommand{\R}{\mathbb{R}}
\newcommand{\rset}{\mathbb{R}}
\newcommand{\base}{\rho}
\newcommand{\target}{\pi}
\newcommand{\pushforward}[2]{\lambda_{#1}^{#2}}
\newcommand{\Talpha}{T_{\alpha}}
\newcommand{\Talphainv}{\Talpha^{-1}}
\newcommand{\pushforwardalpha}{\pushforward{\Talpha}{\base}}
\newcommand{\pushbackwardalpha}{\lambda_{\Talphainv}^{\target}}
\DeclareMathOperator\erf{erf}
\newcommand{\norm}[1]{\left\lVert#1\right\rVert}
\newcommand{\abs}[1]{\left\lvert#1\right\rvert}
\def\rmd{\mathrm{d}}
\def\eqsp{\,}
\def\msa{\mathsf{A}}
\def\msb{\mathsf{B}}
\def\mse{\mathsf{E}}
\def\mss{\mathsf{S}}
\def\ball{\mathrm{B}}
\def\mix{\mathrm{mix}}
\def\TV{\mathrm{TV}}
\def\KS{\mathrm{KS}}
\def\gauss{\mathcal{N}}
\def\Idd{\operatorname{I}_d}
\def\dist{\mathrm{dist}}
\newcommand{\isEquivTo}[1]{\underset{#1}{\sim}}
\DeclareMathOperator*{\argmax}{arg\,max}
\begin{document}

\twocolumn[
\icmltitle{On Sampling with Approximate Transport Maps}

\icmlsetsymbol{equal}{*}

\begin{icmlauthorlist}
\icmlauthor{Louis Grenioux}{x}
\icmlauthor{Alain Oliviero Durmus}{x}
\icmlauthor{Éric Moulines}{x}
\icmlauthor{Marylou Gabrié}{x}
\end{icmlauthorlist}

\icmlaffiliation{x}{École Polytechnique}

\icmlcorrespondingauthor{Louis Grenioux}{louis.grenioux@polytechnique.edu}
\icmlcorrespondingauthor{Alain Oliviero Durmus}{alain.durmus@polytechnique.edu}
\icmlcorrespondingauthor{Éric Moulines}{eric.moulines@polytechnique.edu}
\icmlcorrespondingauthor{Marylou Gabrié}{marylou.gabrie@polytechnique.edu}

\icmlkeywords{Normalizing flows, MCMC, Sampling, Bayesian inference}

\vskip 0.3in
]

\printAffiliationsAndNotice{}  %

\begin{abstract}
  Transport maps can ease the sampling of distributions with non-trivial geometries by transforming them into distributions that are easier to handle. The potential of this approach has risen with the development of Normalizing Flows (NF) which are maps parameterized with deep neural networks trained to push a reference distribution towards a target. NF-enhanced samplers recently proposed blend (Markov chain) Monte Carlo methods with either (i) proposal draws from the flow or (ii) a flow-based reparametrization.
  In both cases, the quality of the learned transport %
  conditions performance.
  The present work clarifies for the first time the relative strengths and weaknesses of these two approaches. Our study concludes that multimodal targets can be reliably handled with flow-based proposals 
  up to moderately high dimensions. In contrast, methods relying on reparametrization struggle with multimodality but are more robust otherwise in high-dimensional settings and under poor training.
  To further illustrate the influence of target-proposal adequacy, we also derive a new quantitative bound for the mixing time of the Independent Metropolis-Hastings sampler. 
\end{abstract}

\section{Introduction}
Creating a transport map between an intractable distribution of interest and a tractable reference distribution can be a powerful strategy for facilitating inference. Namely, if a bijective map $T: \rset^d \to \rset^d$ transports a tractable distribution $\rho$ on $\rset^d$ to a target $\pi$ on the same space, then the expectation of any test function $f: \rset^d \to \R$ under the target distribution can also be written as the expectation value under the reference distribution
\begin{align*}
    \pi(f) = \int_{\rset^d} f(x) \rmd\pi(x) = \int_{\rset^d} f(T(x)) |J_T(x)| \rmd\base(x)\, ,
\end{align*}
where $J_T$ is the Jacobian determinant of $T$.
However, the intractability of the target is traded here with the difficult task of finding the map $T$.

According to Brenier's theorem \cite{Brenier1991}, if $\rho$ is absolutely continous 
then an exact mapping $T$ between $\rho$ and $\pi$ always exists.
Such a map may be known analytically, as in some field theories in physics \cite{luscher_trivializing_2009}. Otherwise, it can be approximated by optimizing a parameterized version of the map.
While learned maps typically suffer from approximation and estimation errors, a sufficiently accurate approximated map is still valuable when combined with a reweighting scheme such as a Markov chain Monte Carlo (MCMC) or Importance Sampling (IS).

The choice of the parametrization 
must ensure that the mapping is invertible and that the Jacobian determinant
remains easy to compute. Among the first works combining approximate transport and Monte Carlo methods, \cite{Parno2018transport_mcmc} proposed using triangular maps. Over the years, the term Normalising Flow, introduced initially to refer to a Gaussianizing map \cite{Tabak2010}, has become a common name for highly flexible transport maps, usually parameterized with neural networks, that allow efficient computations of inverses and Jacobians \cite{Papamakarios2021,Kobyzev2021}. NFs were developed in particular for generative modelling and are now also a central tool for Monte Carlo algorithms based on transport maps.

While the issue of estimating the map is of great interest in the context of sampling, it is not the focus of this paper; see e.g. \cite{rezende_variational_2015, Parno2018transport_mcmc,muller_neural_2019,Noe2019}. Instead, we focus on comparing the performance of algorithmic trends among NF-enhanced samplers developed simultaneously. On the one hand, flows have been used as reparametrization maps that improve the geometry of the target before running local traditional samplers such as Hamiltonian Monte Carlo (HMC) \cite{Parno2018transport_mcmc,Hoffman2019neutra,Noe2019,Cabezas2022}. We refer to these strategies as \emph{\reparamalgs} methods. On the other hand, the push-forward of the NF base distribution through the map has also been used as an independent proposal in IS \cite{muller_neural_2019,Noe2019}, an approach coined \emph{\isalgs}, and in MCMC updates \cite{Albergo2019,Gabrie2022,Samsonov2022} among others. We refer to the latter as \emph{\propalgs} methods.

Despite the growing number of applications of NF-enhanced samplers, such as in Bayesian inference \cite{karamanis_accelerating_2022,wong_flowmc_2022}, Energy Based Model learning \cite{Nijkamp2020}, statistical mechanics, \cite{McNaughton2020}, lattice QCD \cite{abbott_gauge-equivariant_2022} or chemistry \cite{mahmoud_accurate_2022},
a comparative study of the methods of \reparamalgs, \propalgs~and \isalgs~is lacking. The present work fills this gap:
\vspace{-0.3cm}
\begin{itemize}
    \item We systematically compare the robustness of algorithms with respect to key performance factors: imperfect flow learning, poor conditioning and complex geometries of the target distribution,  multimodality and high-dimensions (Section \ref{sec:exp_intuitions}).
    \vspace{-0.2cm}
    \item We show that \propalgs~and \isalgs~can handle mutlimodal distributions up to moderately high-dimensions while \reparamalgs~is hindered in mixing between modes by the approximate nature of learned flows. 
    \vspace{-0.2cm}
    \item For unimodal targets, we find that \reparamalgs~is more reliable than \propalgs~and \isalgs~ given low-quality flows.
    \vspace{-0.2cm}
    \item We provide a new theoretical result on the mixing time of the independent Metropolis-Hastings (IMH) sampler by leveraging for the first time, to the best of our knowledge, a local approximation condition on the importance weights (Section \ref{sec:theory}). 
    \vspace{-0.2cm}
    \item Intuitions formed on synthetic controlled cases are confirmed in real-world applications (Section \ref{sec:realtasks}).
\end{itemize}

The code to reproduce the experiments is available at \href{https://github.com/h2o64/flow_mcmc}{https://github.com/h2o64/flow\_mcmc}.

\section{Background}

\subsection{Normalizing flows}
\label{sec:nf-background}

Normalizing flows are a class of probabilistic models combining a $C^1$-diffeomorphism $T : \rset^d \to \rset^d$ and a fully tractable probability distribution $\base$ on $\rset^d$. The \emph{push-forward} of $\base$ by $T$ is defined as the distribution of $X = T(Z)$ for $Z \sim \base$ and has a density given by the change of variable as
\begin{align}
    \pushforward{T}{\base}(x) = \base(T^{-1}(x)) |J_{T^{-1}}(x)| \, ,
\end{align}
where $|J_T|$ denotes the Jacobian determinant of $T$. Similarly, given a probability density $\pi$ on $\rset^d$, the \emph{push-backward} of $\pi$ through $T$ is defined as the distribution of $Z = T^{-1}(X)$ for $X \sim \pi$ and has a density given by $\pushforward{T^{-1}}{\pi}$. 

A parameterized family of $C^1$-diffeomorphisms $\{\Talpha\}_{\alpha \in \mathbb{A}}$ then defines a family of distributions $\{\pushforwardalpha\}_{\alpha \in \mathbb{A}}$. This construction has recently been popularised by the use of neural networks \cite{Kobyzev2021,Papamakarios2021} for generative modelling and sampling applications.

In the context of sampling, the flow is usually trained so that the push-forward distribution approximates the target distribution %
i.e., $\pushforwardalpha \approx \target$. As will be discussed in more detail below, the flow can then be used as a reparametrization map or for drawing proposals. Note that we are interested in situations where samples from $\target$ are not available a priori when training flows for sampling applications. In that case, the reverse Kullback-Leiber divergence (KL)
\begin{align}
    \mathrm{KL}(\pushforwardalpha || \pi) = \int \log(\pushforwardalpha(x) / \pi(x)) \pushforwardalpha(x) \rmd x
\end{align}
can serve as a training target, since it can be efficiently estimated with i.i.d. samples from $\base$. Minimizing the reverse KL amounts to variational inference with a NF candidate model \cite{rezende_variational_2015}. This objective is also referred to in the literature as \emph{self-training} or \emph{training by energy}; it is notoriously prone to mode collapse \cite{Noe2019,jerfel_variational_2021,hackett_flow-based_2021}. On the other hand, the forward KL
\begin{align}
    \mathrm{KL}(\pi || \pushforwardalpha) = \int \log(\target(x)/\pushforwardalpha(x)) \target(x) \rmd x
\end{align}
is more manageable but more difficult to estimate because it is an expectation over $\target$. Remedies include importance reweighting \cite{muller_neural_2019}, adaptive MCMC training \cite{Parno2018transport_mcmc, Gabrie2022}, and sequential approaches to the target distribution \cite{McNaughton2020,arbel_annealed_2021,karamanis_accelerating_2022,Midgley2022}.
Regardless of which training strategy is used, the learned model $\pushforwardalpha$ always suffers from approximation and estimation errors with respect to the target $\target$. However, the approximate transport map $\Talpha$ can be used to produce high quality Monte Carlo estimators using the strategies described in the next Section.

\subsection{Sampling with transport maps} \label{sec:sampling_with_T}
Since NFs can be efficiently sampled from, they can easily be integrated in Monte Carlo methods relying on proposal distributions, such as IS and certain MCMCs. 

\paragraph{\isalgs}
Importance Sampling uses a tractable proposal distribution, here denoted by $\lambda$, to calculate expected values with respect to $\pi$ \cite{Tokdar2010}. Assuming that the support of $\pi$ is included in the support of $\lambda$, we denote 
\begin{equation}
\label{eq:def_weight_function}
    w(x) = \pi(x)/\lambda(x)
\end{equation}
the importance weight function, and define the self-normalized importance sampling estimator (SNIS) \cite{Robert2005} of the expectation of $f$ under $\pi$ as
$$ \hat{\pi}_N(f) = \sum_{i=1}^N w_N^i f(X^i)$$
\noindent where $X^{1:N}$ $N$ are i.i.d. samples from $\lambda$ and
\begin{align}
    \label{eq:self-normed-weights}
     w_N^i = \left. w(X^i) \middle/ \sum_{j=1}^N w(X^j) \right.
\end{align} are the self-normalized importance weights.
For IS to be effective, the proposal $\lambda$ must be close enough to $\pi$ in $\chi$-square distance (see \citep[Theorem~1]{Agapiou2017}), which makes IS also notably affected by the curse of dimensionality (e.g., \citep[Section 2.4.1]{Agapiou2017}).  

Adaptive IS proposed to consider parametric proposals adjusted to match the target as closely as possible. NFs are  suited to achieve this goal: they define a manageable push-forward density, can be easily sampled, and are very expressive. IS using flows as proposals is known as Neural-IS \cite{muller_neural_2019} or Boltzmann Generator \cite{Noe2019}. NFs were also used in methods building on IS to specifically estimate normalization constants \cite{jia_normalizing_2019,ding_deepbar_2021,wirnsberger_targeted_2020}.

\paragraph{Flow-based independent proposal MCMCs}
Another way to leverage a tractable $\pushforwardalpha \approx \target$ is to use it as a proposal in an MCMC with invariant distribution $\target$. In particular, the flow can be used as a proposal for IMH.

Metropolis-Hastings is a two-step iterative algorithm relying on a proposal Markov kernel, here denoted by $Q(x^{(k)}, \rmd x) = q(x^{(k)},x) \rmd x$. At iteration $k + 1$ a candidate $x$ is sampled from $Q$ conditionally to the previous state $x^{(k)}$ and the next state is set according to the rule 
\begin{align}
    x^{(k+1)} = \left\{ \begin{array}{ll}
         x & \text{w. prob. }  \mathrm{acc}(x^{(k)}, x)   \\
         x^{(k)} & \text{w. prob. }  1 -\mathrm{acc}(x^{(k)}, x)
    \end{array}
    \right. ,
\end{align}
where, given a target $\target$, the acceptance probability is
\begin{align}
    \mathrm{acc}(x^{(k)}, x) = \min\left(1,\frac{q(x,x^{(k)}) \target(x)}{q(x, x^{(k)})\target(x^{(k)})}\right).
\end{align}
To avoid vanishing acceptance probabilities, the Markov kernel is usually chosen to propose local updates, as in Metropolis-adjusted Langevin (MALA) \cite{Roberts1996} or Hamiltonian Monte Carlo (HMC) \cite{duane_hybrid_1987, Neal2011}, which exploit the local geometry of the target. These local MCMCs exhibit a tradeoff between update size and acceptability leading in many cases to a long decorrelation time for the resulting chain. Conversely, NFs can serve as non-local proposals in the \emph{independent} Metropolis Hastings setting $q(x, x') = \pushforwardalpha(x')$.

Since modern computer hardware allows a high degree of parallelism, it may also be advantageous to consider Markov chains with multiple proposals at each iteration, such as multiple-try Metropolis \cite{Liu2000, Craiu2007} or iterative sampling-importance resampling (i-SIR) \cite{Tjelmeland2004, Andrieu2010}. The latter has been combined with NFs in 
\cite{Samsonov2022}. Setting the number of parallel trials to $N > 1$ in i-SIR, $N-1$ propositions are drawn at iteration $(k+1)$:
$$
    x_{l} \sim \pushforwardalpha \text{ for } l=2 \cdots N
$$
and $x_{1}$ is set equal to the previous state $x^{(k)}$. The next state $x^{(k+1)}$ is drawn from the set $\{x_l\}_{l=1}^N$ according to the self-normalized weights $w^l_N$ computed as in \eqref{eq:self-normed-weights}. If $x^{(k+1)}$ is not equal to $x^{(k)}$, the MCMC state has been fully refreshed. 

In what follows, we refer to MCMC methods that use NFs as independent proposals, in IMH or in i-SIR, as \emph{\propalgs}. These methods suffer from similar pitfalls as IS. If the proposal is not ``close enough" to the target, the importance function $w(x) = \target(x) / \pushforwardalpha(x)$ fluctuates widely, leading to long rejection streaks that become all the more difficult to avoid as the dimension increases. 

\paragraph{Flow-reparametrized local-MCMCs}
A third strategy of NF-enhanced samplers is to use the reverse map $\Talpha^{-1}$ defined by the flow to transport $\pi$ into a distribution 
which, it is hoped, will be easier to capture with standard local samplers such as MALA or HMC. This strategy was discussed by \cite{Parno2018transport_mcmc} in the context of triangular flows and by \cite{Noe2019} and \cite{Hoffman2019neutra} with modern normalizing-flows, we keep the denomination of \emph{\reparamalgs}~from the latter. 
\reparamalgs~amounts to sampling the push-backward of the target $\pushbackwardalpha$ with local MCMCs before mapping the samples back through $\Talpha$. It can be viewed as a reparametrization of the space or a spatially-dependent preconditioning step that has similarities to Riemannian MCMC \cite{Girolami2011,Hoffman2019neutra}. Indeed, local MCMCs notoriously suffer from poor conditioning. For example, one can show that MALA has a mixing time\footnote{See Section \ref{sec:theory} for a definition of the mixing time.} scaling as $O(\kappa \sqrt{d})$ - where $\kappa$ is the conditioning number of the target distribution - provided the target distribution is log-concave \cite{Wu2021, Chewi2020, Dwivedi2018}.

Nevertheless, \reparamalgs~does not benefit from the fast decorrelation of \propalgs~methods, since updates remain local.
Still, local-updates might be precisely the ingredient making \reparamalgs~escape the curse of dimensionality in some scenarios. Indeed, if the distribution targeted is log-concave, the mixing time of MALA mentioned above depends only sub-linearly on the dimension. The question becomes, when do scaling abilities of \reparamalgs s provided by locality allow to beat \isalgs~and \propalgs s?

In a recent work, \cite{Cabezas2022} also used a flow reparametrization for the Elliptical Slice Sampler (ESS) \cite{Murray2010} which is gradient free and parameter free\footnote{\cite{Cabezas2022} uses ESS with a fixed covariance parameter $\Sigma = I_d$. This choice is justified by the fact that using the \reparamalgs~trick amounts to sampling something close to the base of the flow which is $\mathcal{N}(0,I_d)$}. Notably, ESS is able to cross energy barriers to tackle multimodal targets more efficiently than MALA or HMC  \cite{Natarovskii2021}.

In our experiments we include different versions of \reparamalgs s and indicate in parentheses the sampler used on the push-backward: either MALA, HMC or ESS.

\section{Synthetic case studies} 
\label{sec:exp_intuitions}

\isalgs, \reparamalgs~and \propalgs~build on 
well-studied Monte Carlo scheme with known strengths and weaknesses (see \cite{rubinstein_simulation_2017} for a textbook).
Most of their limitations would be lifted if an exact transport between the base distribution and the target were available, as sampling from the flow would directly amounts to sampling from the target distribution. However, learned maps are imperfect, which leaves open a number of questions about the expected performance of NF-enhanced samplers: Which of the methods is most sensitive to the quality of the transport approximation? How do they work on challenging multimodal destinations? And how do they scale with dimension? In this Section, we present systematic synthetic case studies answering the questions above.

In all of our experiments, we selected the samplers' hyperparameters by optimizing case-specific performance metrics. The length of chains was chosen to be twice the number of steps required to satisfy the $\hat{ R}$ diagnosis \cite {Gelman1992} at the 1.1-threshold for the fastest converging algorithm. We used MALA as local sampler as it is suited for the log-concave distributions considered, faster and easier to tune than HMC. Evaluation metrics are fully descibed in App. \ref{app:sliced_metrics}.

\subsection{\reparamalgs s are robust to imperfect training} 
\label{sec:unimodal}

\begin{figure*}[t]
    \begin{subfigure}{0.40\linewidth}
        \centering
        \raisebox{-2cm}{\includegraphics[width=\linewidth]{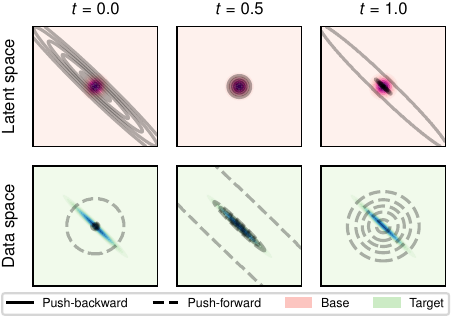}}
        \label{fig:3flows:t}
    \end{subfigure}
    \begin{subfigure}{0.55\linewidth}
        \centering
        \raisebox{-2cm}{\includegraphics[width=\linewidth]{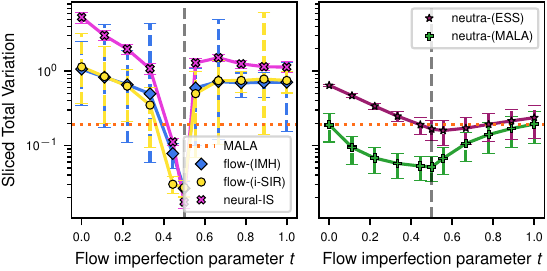}}
        \label{fig:3flows:sliced_tv_dim_128}
    \end{subfigure}
    \caption{\textbf{(Left)} \textbf{Push-backwards $\pushforward{T_t}{\base}$ and push-forwards
    $\pushforward{T^{-1}_t}{\pi}$ as a function of the flow imperfection parameter $t$.}
    \textbf{(Right) Sliced TV distances of different samplers depending on the quality of the flow $t$}, using 256 chains of length 1400 initialized with draws from NF with $T_t$. \isalgs~was evaluated with 14000 samples. Results were qualitatively unchanged for $d=16,32,64,256$.
    }
    \label{fig:3flows:main}
\end{figure*}

Provided an exact transport is available, drawing independent samples from the base and pushing them through the maps generates i.i.d. samples from the target. This is equivalent to running \isalgs~and finding that importance weights \eqref{eq:def_weight_function} are uniformly equal. 
With an approximate transport map, it is not clear which sampling strategy to prefer. 

In practice, the quality of the learned flow depends on many parameters: expressiveness, training objective, optimization procedure etc.
In the first experiments that we now present, we design a framework in which the quality of the flow can be adjusted manually.

Our first target distribution is a multivariate Gaussian ($d=128$) with an ill-conditioned covariance. We define an analytical flow with a scalar quality parameter $t \in [0,1]$ leading to a perfect transport at $t=0.5$ and an over-concentrated (respectively under-concentrated) push-forward at $t=0$ (resp. $t=1$)\footnote{This mimics the behavior when fitting the closest Gaussian of type $\mathcal{N}(0,\sigma I_d)$ with the forward KL if $t =0$ and with the backward KL if $t =1$.}, as shown on Fig. \ref{fig:3flows:main}. All the experimental details are reported in App \ref{app:3flows}.

For the perfect flow, \isalgs~yields the most accurate samples as expected, closely followed by \propalgs s. More interestingly, \reparamalgs~(MALA) quickly outperforms \propalgs~as the flow shifts away from the exact transport, both towards over-spreading and over-concentrating the mass (Fig \ref{fig:3flows:main}). A low-quality flow leads rapidly to zero-acceptance of NF proposals or very poor participation ratios\footnote{The participation ratio of a sample of $N$ IS proposal  $1 / {\sum_{i=1}^{N} {w^i_N}^2} \in [1, N]$ tracks the number of samples contributing in practice to the computation of an SNIS estimator.} for IS which translates into \isalgs~and \propalgs~being even less efficient that MALA (see Fig. \ref{fig:3flows_funnel:wierd_energies} in App. \ref{app:wierd_energies}). Conversely, \reparamalgs s are found to be robust as imperfect pre-conditioning is still an improvement over a simple MALA. 
These findings are confirmed by repeating the experiment on Neal's Funnel distribution in App. \ref{app:funnel} (Fig. \ref{fig:funnel:main}), 
for which the conditioning number of  the target 
distribution highly fluctuates over its support.

Finally note that \propalgs~methods using a multiple-try scheme, here i-SIR, remain more efficient than \reparamalgs~for a larger set of flow imperfection compared to the single-try IMH scheme. This advantage is understandable: an acceptable proposal is more likely to be available among multiple tries (see Fig. \ref{fig:3flows_funnel:wierd_energies} in App. \ref{app:3flows}). If  multiple-try schemes are more costly per iteration, wall-clock times may be comparable thanks to parallelization (see Fig. \ref{fig:real_exp:speedup} in App. \ref{app:computational}).

\subsection{\reparamalgs~may not mix between modes} 
\label{sec:mog}
MCMCs with global updates or IS can effectively capture multiple modes, provided the proposal distribution is well matched to the target. MCMCs with local updates, on the other hand, usually cannot properly sample multimodal targets due to a prohibitive mixing time\footnote{Indeed, the Eyring-Kramers law shows that the exit time of a bassin of attraction of the Langevin diffusion has expectation which scales exponentially in the depth of the mode; see e.g. \cite{Bovier2005} and the reference therein.}. Therefore, the performance in multimodal environments of \reparamalgs~methods coupled with local samplers depends on the ability of the flow to lower energy barriers while keeping a simple geometry in the push-backward of the target. %

We first examined the push-backward of common flow architectures trained by likelihood maximization on 2d target distributions. Both for a mixture of 4 isotropic Gaussians (Fig. \ref{fig:highdim_mog:recap} left) and for the Two-moons distribution (Fig. \ref{fig:many_flows} in App. \ref{app:two_moons}), the approximate transport map is unable to erase energy barriers and creates an intricate push-backward landscape. Visualizing chains in latent and direct spaces makes it is clear that: \reparamalgs~(MALA) mixing is hindered by the complex geometry, the gradient free \reparamalgs~(ESS) mixes more successfully, while \propalgs~(i-SIR) is even more efficient.

We systematically extended the experiment on 4-Gaussians by training a RealNVP \cite{Dinh2017density} with maximum likelihood for a scale of increasing dimensions (see App. \ref{app:highdim_mog} for all experiment details).
We tested the relative ability of the \isalgs, \reparamalgs~and \propalgs~algorithms to %
represent the relative weights of each Gaussian component
by building histograms of the visited modes within a chain and comparing them with the perfect uniform histogram using a median squared error (Fig. \ref{fig:highdim_mog:recap} middle). As dimension increases and the quality of the transport map presumably decreases, the ability of \reparamalgs~(MALA) to change bassin worsens. Using \reparamalgs~with ESS enables an approximate mixing between modes but only \propalgs s recover the exact histograms robustly up to $d=256$. In other words, dimension only heightens the performance gap between NF-enhanced samplers observed in small dimension. Note however that the acceptance of independent proposals drops with dimension such that \propalgs~is also eventually limited by dimension (Fig. \ref{fig:highdim_mog:global_acc_avg_kl} in App. \ref{app:highdim_mog}). Not included in the plots for readability, \isalgs~behaves similarly to \propalgs.

In fact, it is expected that exact flows mapping a unmiodal base distribution to a multimodal target distribution are difficult to learn \cite{Cornish2020}. More precisely, Theorem 2.1 of the former reference shows that the bi-Lipschitz constant\footnote{$\mathrm{BiLip}(f)$ is the bi-Lipschitz constant of $f$ is defined as the infimum over $M \in [1,\infty]$ such that $M^{-1} \norm{z - z'} \leq \norm{f(z) - f(z')} \leq M \norm{z - z'}$ for all $z$ and $z'$ different.} of a transport map between two distributions with different supports is infinite. Here we provide a complementary result on the illustrative uni-dimensional case of a Gaussian mixture target $\target$ and standard normal base $\base$:  

\begin{proposition}\label{th:prop_bogachev_bilip}
    Let $\pi = \mathcal{N}(-a,\sigma^2)/2 + \mathcal{N}(a, \sigma^2) / 2$ with $a > 0$, $\sigma > 0$ and $\base = \mathcal{N}(0, 1)$. The unique increasing flow mapping $\pi$ to $\base$ denoted $T_{\pi,\base}$ verifies that
    \begin{equation}\label{eq:bogachev_1d_lower_bound}
        \mathrm{BiLip}(T_{\pi,\base}) \geq \frac{\rmd T^{-1}_{\pi,\base}}{\rmd z}(0) = \sigma \exp\left(\frac{a^2}{2\sigma^2}\right)\eqsp.
    \end{equation}
\end{proposition}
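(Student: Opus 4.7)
The plan is to use the explicit form of the unique increasing transport map in one dimension and then read off the derivative at zero by the change-of-variable formula.

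First I would note that in dimension one, the unique monotone transport map from $\pi$ to $\rho$ is given in closed form by $T_{\pi,\rho} = \Phi^{-1} \circ F_\pi$, where $\Phi$ is the standard normal CDF and $F_\pi$ is the CDF of $\pi$. Uniqueness is standard (any transport between two absolutely continuous measures on $\mathbb{R}$ with connected supports must be this monotone rearrangement). Its inverse is $T_{\pi,\rho}^{-1} = F_\pi^{-1} \circ \Phi$.

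Next I would exploit the symmetry of $\pi$ and $\rho$ about the origin. Since $\pi(-x)=\pi(x)$ we get $F_\pi(0)=1/2=\Phi(0)$, hence $T_{\pi,\rho}(0)=0$ and also $T_{\pi,\rho}^{-1}(0)=0$. Because $\pi$ and $\rho$ have smooth positive densities, both $T_{\pi,\rho}$ and its inverse are $C^1$, and the push-forward identity $T_{\pi,\rho\,\sharp}\,\pi=\rho$ gives the standard relation
\begin{equation*}
    T'_{\pi,\rho}(x)=\frac{\pi(x)}{\rho(T_{\pi,\rho}(x))}\eqsp.
\end{equation*}
Evaluating at $x=0$ and using $\pi(0)=\frac{1}{\sigma\sqrt{2\pi}}\exp(-a^2/(2\sigma^2))$ together with $\rho(0)=\frac{1}{\sqrt{2\pi}}$, the inverse function theorem yields
\begin{equation*}
    \frac{\rmd T^{-1}_{\pi,\rho}}{\rmd z}(0)=\frac{1}{T'_{\pi,\rho}(0)}=\sigma\exp\!\left(\frac{a^2}{2\sigma^2}\right).
\end{equation*}

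Finally, the lower bound on $\mathrm{BiLip}(T_{\pi,\rho})$ follows from the general fact that a bi-Lipschitz $C^1$ map $f$ satisfies $\mathrm{BiLip}(f)=\mathrm{BiLip}(f^{-1})\ge |(f^{-1})'(z)|$ pointwise. Applying this at $z=0$ gives the claimed inequality. The argument is essentially routine once the one-dimensional monotone transport is written explicitly; the only mild subtlety is making sure the pointwise derivative of $T^{-1}_{\pi,\rho}$ lower-bounds the bi-Lipschitz constant, which I would justify by a short one-line application of the mean value theorem rather than invoking heavier machinery.
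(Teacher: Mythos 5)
Your proof is correct and follows essentially the same route as the paper: write the unique increasing transport as the CDF-based map $\Phi^{-1}\circ F_\pi$, evaluate its derivative at the symmetry point $0$, and lower-bound the bi-Lipschitz constant by the derivative of the inverse there. The only (harmless) difference is that you obtain $T'(0)$ from the push-forward identity $T'(x)=\pi(x)/\rho(T(x))$ and symmetry rather than by differentiating the explicit $\erf$ expressions, and your justification of $\mathrm{BiLip}(T)\ge |(T^{-1})'(0)|$ via the mean value theorem is in fact slightly cleaner than the paper's chain of inequalities.
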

The proof of Proposition \ref{th:prop_bogachev_bilip}, showing the exponential scaling of the bi-Lipschitz constant in the distance between modes, is postponed to Appendix \ref{app:bogachev}.

Overall, these results show that \isalgs~and \propalgs~are typically more effective for multimodal targets than \reparamalgs. Note further that it has also been proposed to use mixture base distributions \cite{izmailov_semi-supervised_2020} or mixture of NFs \cite{Noe2019, Gabrie2022,hackett_flow-based_2021} to accommodate for multimodal targets provided that prior knowledge of the modes' structure is available. These mixture models can be easily plugged in \isalgs~and \propalgs, however it is unclear how to combine them with \reparamalgs. 
Finally, mixing \reparamalgs~and \propalgs~schemes by alternating between global updates (crossing energy barriers) and flow-preconditioned local-steps (robust withing a mode) seems promising, in particular when properties of the target distributions are not known a priori. It will be referred below as \emph{\neutraflow}. A proposition along these lines was also made in \cite{Grumitt2022}. 

\begin{figure*}
    \centering
    \begin{subfigure}{0.40\linewidth}
        \centering
        \raisebox{-2cm}{\includegraphics[width=\linewidth]{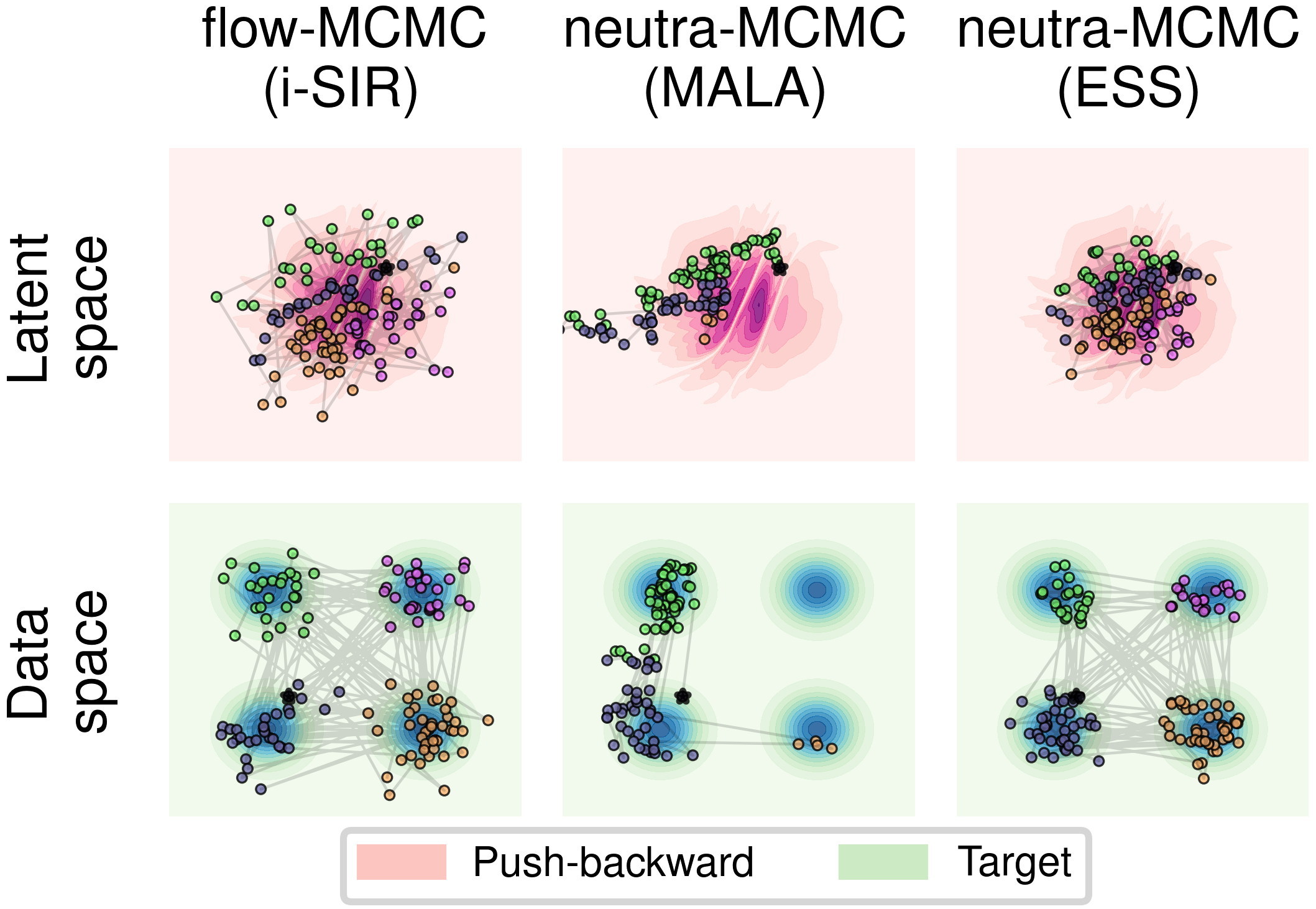}}
        \label{fig:highdim_mog:samples_neutra}
    \end{subfigure}%
    \begin{subfigure}{0.30\linewidth}
        \centering
        \raisebox{-2cm}{\includegraphics[width=\linewidth]{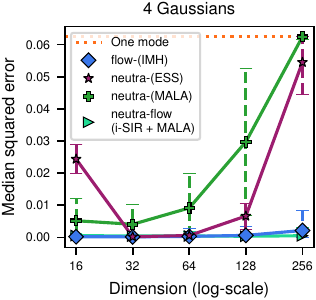}}
        \label{fig:highdim_mog:l2_dist}
    \end{subfigure}%
    \begin{subfigure}{0.30\linewidth}
        \centering
        \raisebox{-2cm}{\includegraphics[width=\linewidth]{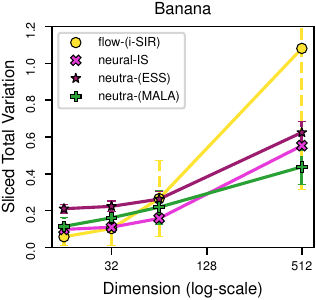}}
        \label{sec:dim_scaling}
    \end{subfigure}
    \caption{\textbf{(Left) Example chains of NF-enhanced walkers with a 2d target mixture of 4 Gaussians.} 
    The 128-step MCMC chain is colored according to the closest mode in the data space (bottom row) with corresponding location in the latent space (top row). The complex geometry of the push-backward $\pushforward{T_\alpha}{\pi}$ hinders the mixing of local-update algorithms. MALA's step-size was chosen to reach 75\% of acceptance. \textbf{(Middle)} \textbf{Median squared error of the histograms of visited modes of 4 Gaussians per chain against the perfect uniform histogram as a function of dimension.} 512 chains of 1000-steps on average were used. \textbf{(Right) Sliced total variation in sampling the Banana distribution in increasing dimension} using a RealNVP. 128 chains of 1024-steps were used.}
    \label{fig:highdim_mog:recap}
\end{figure*}

\subsection{\propalgs s are the most impacted by dimension}

To investigate the effect of dimension, we ran a systematic experiment on the Banana distribution, a unimodal distribution with complicated geometry (details in App. \ref{app:banana}). We compared NF-enhanced samplers powered by RealNVPs
trained to optimize the backward KL and found that  
a crossover occurs in moderate dimensions
: \isalgs~and \propalgs~algorithms are more efficient in small dimension but are more affected by the increase in dimensions compared to \reparamalgs~algorithms (Fig. \ref{fig:highdim_mog:recap} Right).

\section{New mixing rates for IMH}
 \label{sec:theory}

As illustrated by the previous experiment, learning and sampling are expected to be more challenging when dimension increases. To better understand the interplay between flow-quality and dimension, 
we examined the  case of the IMH sampler applied to a strongly log-concave target $\pi$ and proposal $\mathcal{N}(0,\sigma^2 I_d)$\footnote{Nevertheless, we develop a theory for a generic proposal in Section~\ref{app:proof_cond} trough Theorem~\ref{th:cond_imh}}, $\sigma >0$, with density denoted by $q_{\sigma}$. %

To this end, we consider the following assumption on the importance weight function  $w_\sigma(x) = \pi(x)/q_{\sigma}(x)$: 

\begin{assumption}
\label{ass:main_flow_quality}
For any $R \geq 0$, there exists $C_R \geq 0$ such that for any $x,y \in \ball(0,R) = \{z \, :\, \norm{z} < R\}$:
\begin{equation}
    \abs{\log w_\sigma(x) - \log w_\sigma(y)}
    \leq C_R \abs{x-y} \eqsp.
\end{equation}
\end{assumption}
 The constant $C_R$ appearing in Assumption~\ref{ass:main_flow_quality}, for $R \geq 0$, represents the quality of the  proposal with respect to the target $\pi$ locally on $\ball(0,R)$. Indeed, if $q_{\sigma} = \pi$ on $\ball(0,R)$,  this constant is zero. 
In particular, $q_{\sigma} \approx \pi$ with $q_{\sigma}$ and $\pi$ smooth and $\nabla \log w_\sigma(x) \approx 0$ on $\ball(0,R)$ would result in Assumption~\ref{ass:main_flow_quality} holding with a small constant $C_R$. 
In contrast to existing analyses of IMH which assume $w_{\sigma}$ to be uniformly bounded to obtain explicit convergence rates, here  we only assume a smooth local condition on $\log(w_{\sigma})$, namely that it is locally Lipschitz. Note this latter condition is milder than assuming the former. 
 To the best of our knowledge, it is the first time that such a condition is considered for IMH; a thorough comparison of our contribution with the literature is postponed to Section~\ref{sec:related}. 

While we relax existing conditions on $w_{\sigma}$, we restrict our study to the following particular class of targets:
\begin{assumption}
\label{ass:strong_convex_main}
The target $\pi$ is positive and $- \log \pi$ is $m$-strongly convex on $\rset^d$ and attains its minimum at $0$.
\end{assumption}

Denote by $P_{\sigma}$, the IMH Markov kernel with target $\pi$ and proposal $\mathcal{N}(0,\sigma^2 I_d)$. In our next result, we analyze the mixing time of $P_{\sigma}$, defined for an accuracy $\epsilon > 0$ and an initial distribution $\mu$ as 
\begin{equation}
   \tau_{mix}(\mu,\epsilon) = \inf \{n \in \mathbb{N} ~:~ \norm{ \mu P^n_{\sigma} - \pi}_{\TV} \leq \epsilon \} \eqsp.
\end{equation}
$\tau_{mix}$ quantifies the number of MCMC steps needed to bring the total variation distance \footnote{For two distribution $\mu,\nu$ on $\rset^d$, $\norm{\mu - \nu}_{\TV} = \sup_{\msa \in \mathcal{B}(\rset^d)} \abs{\mu(\msa) - \nu(\msa)}$.} between the Markov chain and its invariant distribution bellow $\epsilon$.

\begin{theorem}[Explicit mixing time bounds for IMH]\label{th:asympto_bound_main}
    Assume Assumption-\ref{ass:main_flow_quality}-\ref{ass:strong_convex_main} hold. Let $0 < \epsilon < 1$ and $\mu$ a $\beta$-warm distribution with respect to $\pi$ \footnote{For any Borel set $\mse$, $\mu(\mse) \leq \beta \pi(\mse)$}. Suppose in addition that $C_{R} \leq \log 2 \sqrt{m} / 32$ with
    \begin{equation}
        R \geq   C \sqrt{d}\max\left(\sigma, \frac{1}{\sqrt{m}}\right) (1+ \abs{\log^{\alpha}(\epsilon / \beta)}/d^{\alpha/2}) \eqsp,
    \end{equation}
    for some explicit numerical constant $C \geq 0$ and exponent $\alpha >0$. 
    Then the mixing time of IMH is bounded as
    \begin{equation}
        \tau_{mix}(\mu,\epsilon) \leq 128 \log\left(\frac{2\beta}{\epsilon}\right) \max\left(1, \frac{128^2 C_R^2}{\log(2)^2 m}\right)\eqsp.
    \end{equation}
\end{theorem}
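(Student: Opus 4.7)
My plan would be to prove the bound via a conductance argument that leverages the local, rather than global, control on the importance weight $w_\sigma$. The classical IMH analysis of Mengersen--Tweedie requires $\sup w_\sigma < \infty$ and yields a spectral gap of order $1/\sup w_\sigma$; here $\sup w_\sigma$ may be infinite, so I would instead truncate the state space to a ball $K = \ball(0,R)$ on which Assumption~\ref{ass:main_flow_quality} guarantees that $\log w_\sigma$ is $C_R$-Lipschitz, and then absorb the truncation error via a Lovász--Simonovits style $s$-conductance argument.

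The preparatory step is to pick $R$ large enough that both $\pi$ and the proposal $\gauss(0,\sigma^2 \Idd)$ place all but $\epsilon/(4\beta)$ mass inside $K$. Strong log-concavity of $-\log\pi$ with constant $m$ delivers sub-Gaussian tails of the form $\pi(\norm{x}\geq R) \leq \exp(-c m (R-C\sqrt{d/m})^2)$ (via Bakry--Émery or a direct Herbst argument), and a matching tail bound is immediate for the Gaussian proposal. The warm-start hypothesis $\mu \leq \beta\pi$ then propagates these bounds to the chain via invariance. This is how the lower bound $R \gtrsim \sqrt{d}\max(\sigma,1/\sqrt{m})(1+\abs{\log(\epsilon/\beta)}^{\alpha}/d^{\alpha/2})$ appearing in the statement should emerge.

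The heart of the proof is to lower bound the conductance $\Phi$ of the IMH kernel restricted to $K$. The local Lipschitz estimate yields $w_\sigma(y)/w_\sigma(x) \geq \exp(-C_R \norm{x-y})$ for $x,y \in K$, so the Metropolis--Hastings acceptance ratio is close to $1$ on pairs of nearby points. Coupling this pointwise comparison with the Cheeger inequality for strongly log-concave measures (which gives an isoperimetric constant of order $\sqrt{m}$) should produce $\Phi \gtrsim \sqrt{m}/C_R$, and the Lovász--Simonovits theorem then delivers $\tau_{\mix}(\mu,\epsilon) \lesssim \log(\beta/\epsilon)/\Phi^2 \lesssim (C_R^2/m) \log(\beta/\epsilon)$; the threshold $C_R \leq \log(2)\sqrt{m}/32$ is what aligns the absolute constants with the form $128 \log(2\beta/\epsilon)\max(1, 128^2 C_R^2/(\log 2)^2 m)$.

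The hardest part, in my view, will be the joint choice of $R$ and the $s$-conductance parameter. The tail estimates push $R$ up, while $C_R$ may depend on $R$, so Assumption~\ref{ass:main_flow_quality} and the threshold on $C_R$ must be compatible with $R$ being large enough for concentration; in the statement this is formalized by taking $C_R$ as a free input and requiring both conditions simultaneously. I would isolate the conductance computation in a standalone lemma for a generic proposal (the Theorem~\ref{th:cond_imh} promised in Section~\ref{app:proof_cond}) and then specialize to Gaussian proposals and strongly log-concave targets to recover Theorem~\ref{th:asympto_bound_main} with the explicit constants.
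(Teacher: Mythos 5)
Your plan is correct and matches the paper's own proof essentially step for step: a Lov\'asz--Simonovits $s$-conductance argument (Theorem~\ref{th:lovasz}), concentration of both $\pi$ and the Gaussian proposal on a ball of radius $R \sim \sqrt{d}\max(\sigma,1/\sqrt{m})$ to control the truncation error, the local Lipschitz bound on $\log w_\sigma$ to show kernels at nearby points are TV-close, and the isoperimetric constant $\psi(\pi)=\log 2\,\sqrt{m}$ for strongly log-concave targets, with the generic-proposal conductance bound isolated exactly as in Theorem~\ref{th:cond_imh} before specializing in Corollary~\ref{th:nice_th_cor}. The only cosmetic difference is that the key comparison is $\norm{P(x,\cdot)-P(y,\cdot)}_{\TV}\le C_R\norm{x-y}$ rather than the acceptance probability itself being near one, but this is the same mechanism you describe.
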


The proof of Theorem \ref{th:asympto_bound_main} is postponed to App. \ref{app:proof_cond}.
It shows that if $C_R$ is bounded by a constant independent of the dimension for $R$ of order at least $\sqrt{d}$, then the mixing time is also independent of the dimension, which recovers easy consequences of existing analyses \cite{Roberts2011,Wang2022}.
In contrast to these works, Theorem~\ref{th:asympto_bound_main} can be applied to the illustrative case where $\pi = \mathcal{N}(0,I_d)$ and $\sigma = 1 + \lambda$ considering the error term $\lambda$ either positive \emph{or} negative (for which $w_\sigma$ is not uniformly bounded). In that case, Theorem~\ref{th:asympto_bound_main} shows that reaching a precision $\epsilon$ with a fixed number of MCMC steps $n$ 
requires $\lambda$ to decrease as $\mathcal{O}(1/d)$ (the detailed derivation is postponed to App~\ref{app:proof_cond}). Finally, note that we do not assume that $\pi$ is $L$-smooth, i.e., $\nabla \log \pi$ is Lipschitz in Theorem~\ref{th:asympto_bound_main}. This condition  is generally considered 
in existing results on MALA and HMC for strongly log-concave target distributions; see \cite{Dwivedi2018,Chen2020}.

\section{Related Works}
\label{sec:related}

\textbf{Comparison of NF-enhanced samplers} Several papers have investigated the difficulty of \propalgs~algorithms in scaling with dimension \cite{del_debbio_efficient_2021,abbott_aspects_2022}. Hurdles arising from multimodality were also discussed in \cite{hackett_flow-based_2021,nicoli_machine_2022} in the context of \propalgs~methods. Meanwhile, the authors of \cite{Hoffman2019neutra} argued that the success of their \reparamalgs~was bound to the quality of the flow but did not provide experiments in this direction. To the best of our knowledge, no thorough comparative study of the different NF-enhanced samplers was performed prior to this work. 

As previously mentioned, \cite{Grumitt2022} proposed to mix local NF-preconditioned steps with NF Metropolis-Hastings steps, i.e., to combine \reparamalgs~and \propalgs~methods. However, the focus of these authors was on the aspect of performing deterministic local updates using an instantaneous estimate of the density of walkers provided by the flow. More related to the present work, they present a rapid ablation study in their Appendix D.

Enhancing Sequential Monte Carlo \cite{del_moral_sequential_2006} with NFs has also been investigated by
\cite{arbel_annealed_2021, karamanis_accelerating_2022}. These methods are more involved and require the definition of a collection of target distributions approaching the final distribution of interest. They could not be directly compared to \isalgs, \reparamalgs~and \propalgs. We also note that \cite{invernizzi_skipping_2022} recently proposed another promising method to assist sampling with flows, in the context of replica exchange MCMCs. 

\textbf{IMH analysis} Most analyses establishing quantitative convergence bounds rely on the condition that the ratio $\pi/q$ be uniformly bounded \cite{Yang2021,Brown2021,Wang2022}. In these works, it is shown that IMH is  uniformly geometric in total variation or Wasserstein distances.  Our contribution relaxes the uniform boundedness condition on $\pi/q$ by restricting our study to the class of strongly log-concave targets.

The analysis of local MCMC samplers, such as MALA or HMC for sampling from a strongly log-concave target is now well developed; see e.g., \cite{Dwivedi2018, Chen2020, Chewi2020, Wu2021}. These works rely on the notion of $s$-conductance for a reversible Markov kernel and on the results developed in \cite{Lovasz1993} connecting this notion to the kernel's mixing time. This strategy has been successively applied to numerous MCMC algorithm since then; e.g., \cite{Lovasz1999, Vempala2005, Vempala2007, Karthekeyan2010, Mou2019, Cousins2014, Laddha2020, Narayanan2022}.
We follow the same path in the proof of Theorem~\ref{th:asympto_bound_main}.

Finally, while \cite{Roberts2011} establish a general convergence for IMH under mild assumption, exploiting this result turns out to be difficult. In particular, we believe it cannot be made quantitative if $\pi/q$ is unbounded since their main convergence result involves an intractable expectation with respect to the target distribution. 
 
\section{Benchmarks on real tasks}
\label{sec:realtasks}

In this Section we compare NF-enhanced samplers beyond the previously discussed synthetic examples. Our main findings hold for real world use-cases. An extra experiment on high dimension with image dataset is available in App. \ref{app:cifar10}.

\subsection{Molecular system} \label{sec:aldp}
Our first experiment is the alanine dipeptide molecular system, which consists of 22 atoms in an implicit solvent. Our goal is to capture the Boltzmann distribution at temperature $T = 300 K$ of the atomic 3D coordinates, which is known to be multimodal. We have used the flow trained in \cite{Midgley2022} to drive the samplers and generated 2d projections of the outputs in Figure~\ref{fig:aldp:samples_energies}. \reparamalgs~methods are not perfectly mixing between modes, while \propalgs~properly explores the weaker modes.
For more details, see App. \ref{app:aldp}.
\begin{figure}[t]
    \centering
        \centering
        \includegraphics[width=\linewidth]{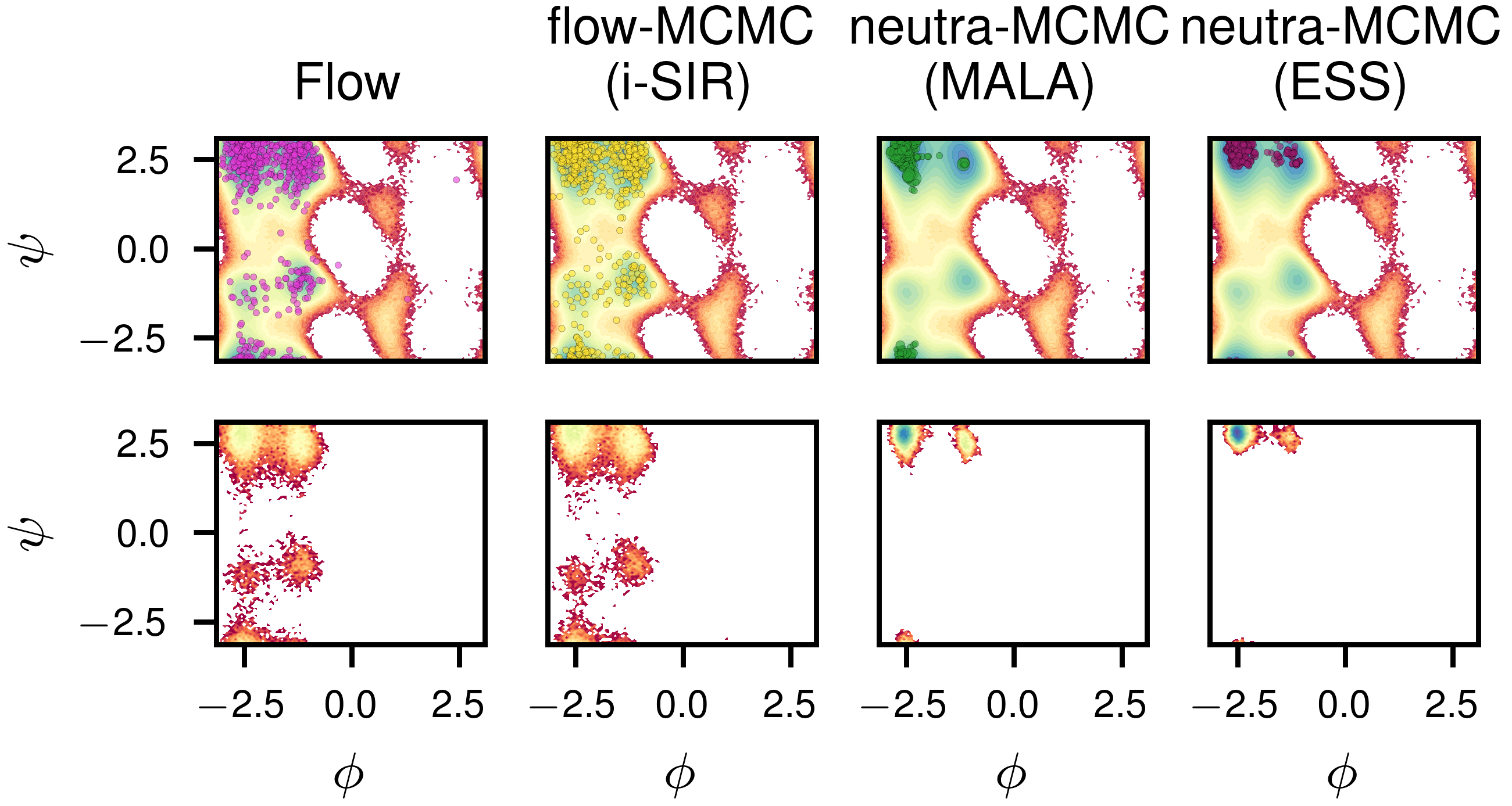}
    \caption{\textbf{Sampled configurations of alanine-dipeptide projected from 66 Cartesian coordinates to 2 dihedral angles $\phi$ and $\psi$} (see 
    App. \ref{app:aldp}). \textbf{(Top)} Samples from the flow (left) and samples from a single MCMC chain of the different NF-samplers are shown as bright-colored points on colored background displaying the log histogram of exact samples at $T=300 K$ obtained by a Replica Exchange Molecular Dynamics simulation of \cite{vincent_stimper_2022_6993124}.  \textbf{(Bottom)} Log-histograms of samples from the flow (left) and from 256 MCMC chains started at the same location. \label{fig:aldp:samples_energies}}
    \vspace{-0.3cm}
\end{figure}
\subsection{Sparse logistic regression}

\begin{table}[t]
\caption{Predictive posterior distribution for Bayesian sparse logistic regression on the German credit dataset. \label{tbl:log_reg:pred_post}}
\begin{center}
\begin{small}
\begin{tabular}{lc}
\hline
    \toprule
    \thead{Sampler} & \thead{Average predictive \\ log-posterior distribution} \\
    \midrule
    \reparamalgs~(HMC) & -191.1 $\pm$ 0.1 \\
    \neutraflow~(i-SIR + HMC)  & -194.1 $\pm$ 1.6 \\
    \propalgs~(i-SIR) & -208.5 $\pm$ 2.1 \\
    HMC & -209.7 $\pm$ 1.0 \\
    \bottomrule
\end{tabular}
\end{small}
\end{center}
\vspace{-0.5cm}
\end{table}

Our second experiment is a sparse Bayesian hierarchical logistic regression on the German credit dataset \cite{Dua2019}, which has been used as a benchmark in recent papers \cite{Hoffman2019neutra, Grumitt2022, Cabezas2022}.
We trained an Inverse Autoregressive Flow (IAF) \cite{Papamakarios2017} using the procedure described in \cite{Hoffman2019neutra}. More details about the sampled distribution and the construction of the flow are given in App.~\ref{app:log_reg}. We sampled the posterior predictive distribution on a test dataset and reported the log-posterior predictive density values for these samples in Table~\ref{tbl:log_reg:pred_post}. \reparamalgs~methods achieve higher posterior predictive values compared to \propalgs~methods, which differ little from HMC. Note that \neutraflow, alternating between \propalgs~and \reparamalgs, does not improve upon \reparamalgs.

\subsection{Field system}

In our last experiment we investigate the 1-d $\phi^4$ model used as a benchmark in \cite{Gabrie2022}. This field system has two well-separated modes at the chosen temperature. Defined at the continuous level, the field can be discretized with different grid sizes, leading to practical implementations in different dimensions. We trained a RealNVP in $64$, $128$, and $256$ dimensions by minimizing an approximated forward KL (more details on this procedure in App.~\ref{app:phi_four}). Consistent with the results of Section~\ref{sec:mog}, \reparamalgs~(MALA)~chains remain in the modes in which they were initialized, \reparamalgs~(ESS)~ crosses over to the other mode rarely while \propalgs~ is able to mix properly (see Fig.~\ref{fig:phi_four:recap} left).
To further examine performance as a function of dimension, we considered
the distribution restricted to the initial mode only and calculated the sliced total variation of samplers' chain compared to exact samples (Fig. \ref{fig:phi_four:recap} right). \reparamalgs~methods appear to be less accurate here than \propalgs. Even within a mode, the global updates appear to allow for more effective exploration. Both approaches suffer as dimensions grow.

\begin{figure}[t]
    \centering
    \begin{subfigure}{0.5\linewidth}
        \centering
        \raisebox{0.6cm}{\includegraphics[width=0.90\linewidth]{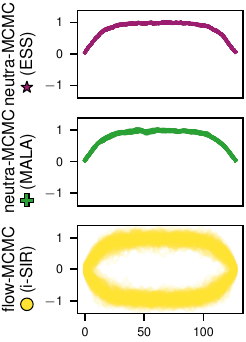}}
        \label{fig:phi_four:samples_128}
    \end{subfigure}%
    \begin{subfigure}{0.5\linewidth}
        \centering
        \includegraphics[width=\linewidth]{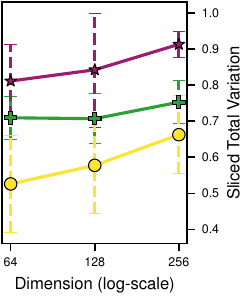}
        \label{fig:phi_four:sliced_tv}
    \end{subfigure}
    \vspace*{-10mm}
    \caption{\textbf{(Left) Sampled $\phi^4$ configurations} in dimension 128. \textbf{(Right)} Within-mode Sliced TV as a function of dimension.}
    \label{fig:phi_four:recap}
    \vspace{-0.5cm}
\end{figure}

\begin{table*}[h]
    \caption{\textbf{Informal summary of findings.} The symbol \checkm~indicates where algorithms can be expected to produce reliable samples, the symbol is starred \checkm$^*$~when faster than another suited algorithm for the task. The symbol $\mathbf{\sim}$ ~is used when the algorithm reaches its limits and may fail. A cross \textbf{\crossm}~indicates failure to sample the target. \label{table:summary}}
    \begin{center}
        \begin{small}
        \begin{tabular}{lcccccc}
        \toprule
         & 
       \multicolumn{3}{c}{Unimodal target} & \multicolumn{3}{c}{Multimodal target} \\
       \cmidrule(lr){2-4} \cmidrule(lr){5-7}
         & low-dim & mid-dim & high-dim & low-dim & mid-dim & high-dim \\
         & good map & fair map & poor map & good map &fair map & poor map \\
        \cmidrule(lr){2-4} \cmidrule(lr){5-7}
        \makecell{\propalgs \\ 
        \isalgs} & \checkm$^*$ &  \checkm & \textbf{\crossm} & \checkm &  \checkm & \textbf{\crossm} \\
        \midrule
        \reparamalgs & \checkm  & \checkm$^*$ &  $\mathbf{\sim}$ & $\mathbf{\sim}$  & \textbf{\crossm} &  \textbf{\crossm}  \\
        \bottomrule
        \end{tabular}
        \end{small}
    \end{center}
\end{table*}
 
\subsection{Run time considerations} 

In all experiments, algorithms were compared with a fixed sample-size,
yet wall-clock time and computational costs per iteration vary between samplers: \isalgs~and single-try \propalgs~require two passes through the flow per iteration, \reparamalgs s~require typically more. Multiple-try \propalgs~computational cost scales linearly with the number of trials yet can be parallelized. In App. \ref{app:computational} we report the run-time per iteration for the experiments of this section. Results show that \reparamalgs s are usually significantly slower per iteration than other methods. Nevertheless, expensive target evaluation such as in Molecular Dynamics impact in particular multiple-try \propalgs.

\section{Conclusion}
As a conclusion, we gather our findings in an informal summary table (\ref{table:summary}) of heuristics depending on the type of target and with respect to the quality of the map and dimension of the problem which typically go together (the lower the dimension, the better the learned map and vice-versa). In synthetic and real experiments, we show that 
NF provide significant advantage in sampling provided the method is chosen accordingly to the properties of the target. 
However, high-dimensional multimodal targets remain a challenge for NF-assisted samplers.

\section*{Acknowledgements}

We thank the anonymous reviewers for their valuable feedback on the paper. L.G. and M.G. acknowledge funding from Hi! Paris. The work was partly supported by ANR-19-CHIA-0002-01 “SCAI”. Part of this research has been carried out under the auspice of the Lagrange Center for Mathematics and Computing.
A.O.D. would like to thank the Isaac Newton Institute for Mathematical Sciences for support and hospitality during the programme \emph{The mathematical and statistical foundation of future data-driven engineering} when work on this paper was undertaken. This work was supported by: EPSRC grant number EP/R014604/1
\newpage
\clearpage
\bibliography{refs_marylou,refs_louis}
\bibliographystyle{icml2023}

\newpage
\appendix
\onecolumn
\section{Perfect flows between multimodal distributions and unimodal distributions} \label{app:bogachev}

In \cite{Bogachev2005}, the authors provide a recipe to build a triangular mapping \footnote{A function $T: \mathbb{R}^n \to \mathbb{R}^n$ is triangular if $T_i(x)$ only depends on $x_1, \ldots, x_i$ for $x \in \mathbb{R}^n$ and $1 \leq i \leq n$} from one distribution to another. They do this by using the inverse CDF method for the first coordinate and then iterate it on the conditional distribution of the other coordinates. \cite{Bogachev2005} shows that this bijection is the unique increasing one. We will illustrate this method by building a triangular map between a mixture of two Gaussians and a single Gaussian in 1D and 2D.

\paragraph{Unidimensional example}

Consider $\mu = \mathcal{N}(-a,\sigma^2)/2 + \mathcal{N}(a, \sigma^2) / 2$ with $a > 0$ and $\sigma > 0$ and $\nu = \mathcal{N}(0, \tilde{\sigma}^2)$ with $\tilde{\sigma} > 0$. We can build a bijective mapping $T_{\mu,\nu}$ between the two distribution by taking $T = F^{-1}_{\nu} \circ F_{\mu}$ where $F_{\nu}$ and $F_{\mu}$ are the cumulative distribution functions of $\nu$ and $\mu$ respectively. In our case, we have 

\begin{center}
    \begin{tabular}{c c}
     $F_{\mu}(z) = \frac{1}{4} \left(2 + \erf\left(\frac{z + a}{\sigma \sqrt{2}}\right) + \erf\left(\frac{z - a}{\sigma \sqrt{2}}\right)\right)$ & $F_{\nu}(z) = \frac{1}{2} \left(1 + \erf\left(\frac{z}{\tilde{\sigma} \sqrt{2}}\right)\right)$ \\
     $F^{-1}_{\nu}(y) = \tilde{\sigma} \sqrt{2} \erf^{-1}(2y - 1)$ & $T_{\mu,\nu}(z) = F^{-1}_{\nu}(F_{\mu}(z))$
    \end{tabular}
\end{center}

\begin{figure}[t]
    \centering
    \includegraphics[width=\linewidth]{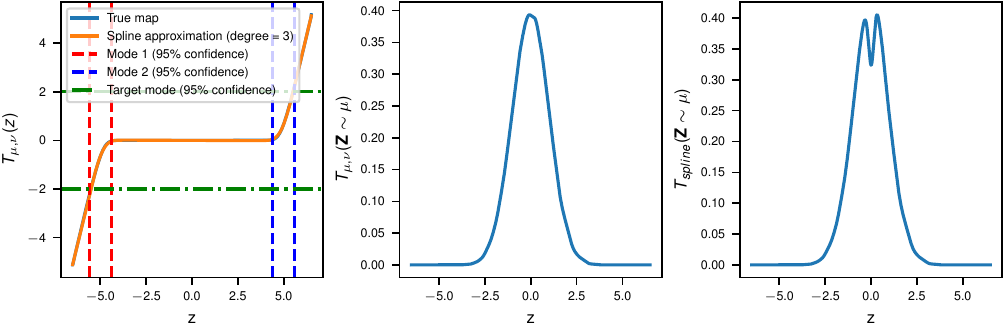}
    \caption{Flow $T_{\mu,\nu}$ in 1D - \textbf{(Left)} Map $T_{\mu,\nu}$ from the latent space (on the x-axis) to the data space (on the y-axis). Each pair of dotted lines highlight a mode in the latent space while the horinzontal line show the mode in the data space. \textbf{(Middle)} Kernel density estimation of the push forward of $\mu$ through the flow $T_{\mu,\nu}$ \textbf{(Right)} Kernel density estimation of the push forward of the base ($\mathcal{N}(0,1)$) through the flow a smooth cubic spline approximation of $T_{\mu,\nu}$}
    \label{fig:bogachev:1d_flow}
\end{figure}

Figure \ref{fig:bogachev:1d_flow} shows $T_{\mu,\nu}$ : it takes the mass from the left and push it to the middle and does the symmetric task on the other side. Note that at the edge of the modes, the curve is very sharp. This sharpness increases as the modes are getting further apart (i.e., $a \to \infty$) which makes smooth approximation (like the spline approximation shown here) more and more difficult. Those small errors on the edges of the mode lead to multimodality in the push-forward.

\paragraph{Bidimensional example}

We take a similar setup in 2 dimensions. We consider $\mu = \mathcal{N}(-a \mathbf{1}_2,\sigma^2 I_2) / 2 + \mathcal{N}(a \mathbf{1}_2, \sigma^2 I_2) / 2$ with $a > 0$ and $\sigma > 0$ and $\nu = \mathcal{N}(0, \tilde{\sigma}^2 I_2)$ with $\tilde{\sigma} > 0$. Following \cite{Bogachev2005} protocol, we define $T^{(1)}_{\mu,\nu}$ as our previous flow

$$T^{(1)}_{\mu,\nu}(z) = \tilde{\sigma} \sqrt{2} \erf^{-1}\left(\frac{1}{2} \left(2 + \erf\left(\frac{z + a}{\sigma \sqrt{2}}\right) + \erf\left(\frac{z - a}{\sigma \sqrt{2}}\right)\right) - 1\right)\eqsp,$$

because it is the canonical mapping between $\mu_1$ and $\nu_1$ which are the projections of $\mu$ and $\nu$ on the first axis. Let $\mu_x$ and $\nu_x$ be the projections of $\mu$ and $\nu$ on the second axis. We first compute $\rho_{\mu}^{x_1}$ which is the density of $\mu_{x_1}$, 
\begin{align*}
    \rho_{\mu}^{x_1}(x_2) &= \frac{\rho_{\mu,\nu}(x_1,x_2)}{\int_{\mathbb{R}} \rho_{\mu,\nu}(x_1,x_2) dx_2} \\
    &= \frac{\mathcal{N}(x_1, -a, \sigma) \mathcal{N}(x_2, -a, \sigma) / 2 + \mathcal{N}(x_1, a, \sigma) \mathcal{N}(x_2, a, \sigma) / 2}{\left(\mathcal{N}(x_1, -a, \sigma) + \mathcal{N}(x_1, -a, \sigma) \right)/2} \\
    &= w_{x_1,a,\sigma} \mathcal{N}(x_2, -a, \sigma) + (1 - w_{x_1,a,\sigma}) \mathcal{N}(x_2, a, \sigma)\eqsp,
\end{align*}

where $w_{x_1,a,\sigma} = \mathcal{N}(x_1, -a, \sigma) / (\mathcal{N}(x_1, -a, \sigma) + \mathcal{N}(x_1, -a, \sigma))$. So $\mu_{x_1}$ is a mixture between two Gaussians $\mathcal{N}(-a, \sigma)$ and $\mathcal{N}(a, \sigma)$ with weight $w_{x_1,a,\sigma}$. On the other hand, $\nu_{x_1}$ is simply $\mathcal{N}(0,\tilde{\sigma}^2)$ because it is isotropic. Following the same recipe as the unidimensional example, we can build $T^{(2)}_{\mu,\nu} = T_{\mu_{x_1},\nu_{x_1}}$ to map $\mu_{x_1}$ on $\nu_{x_1}$. Finaly, we define the bidimensional map $T_{\mu,\nu}$ to be $T_{\mu,\nu}(x,y) = (T^{(1)}_{\mu,\nu}(x), T^{(2)}_{\mu,\nu}(y))$. Figure \ref{fig:bogachev:2d_flow} show how the samples from $\mu$ are transported to $\nu$. Again, we find a very sharp border between the two modes which would lead to multimodality if not well approximated.

\begin{figure}[t]
    \centering
    \includegraphics[width=0.70\linewidth]{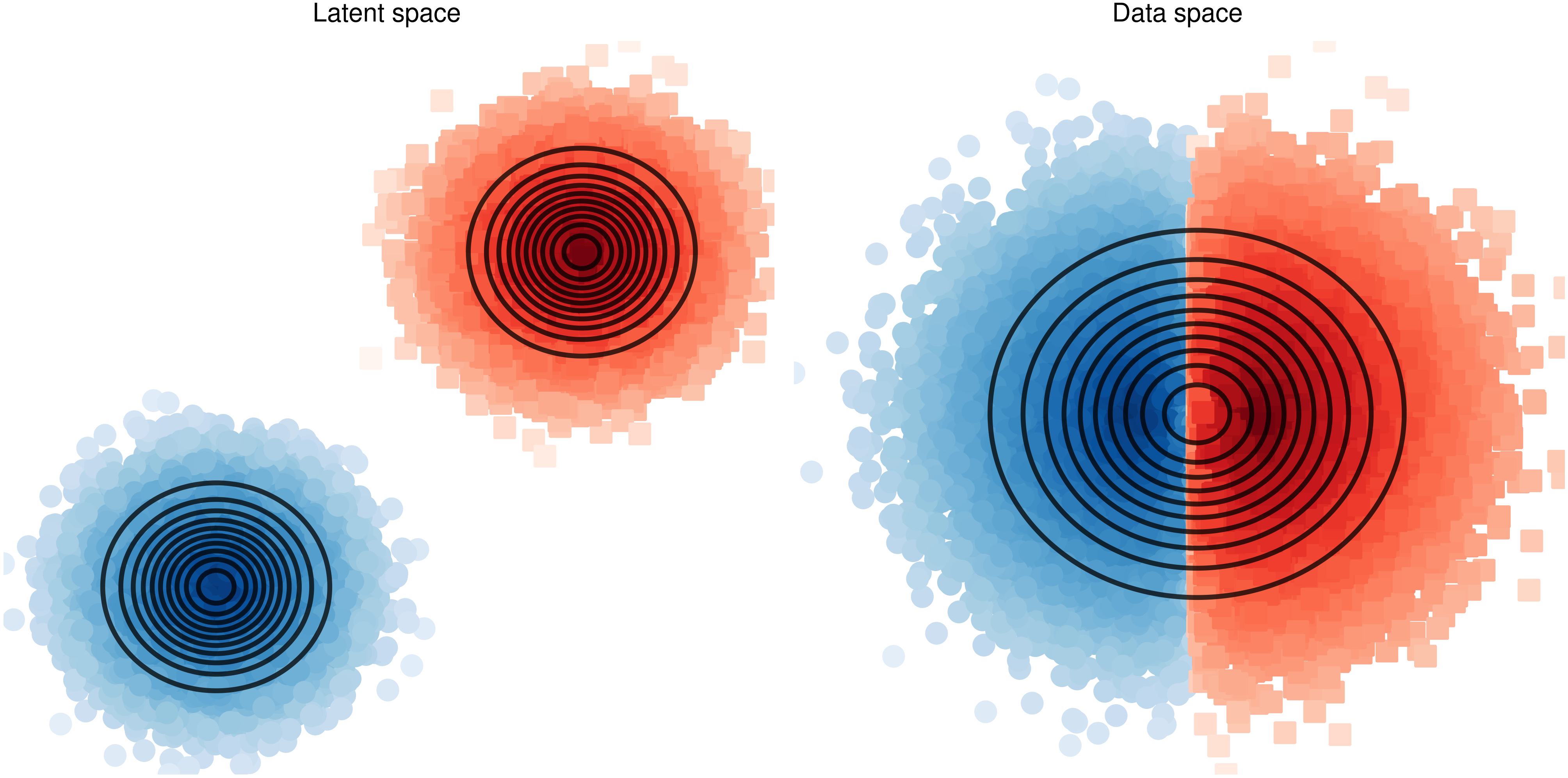}
    \caption{Flow $T_{\mu,\nu}$ in 2D - \textbf{(Left)} Samples from $\mu$ in the latent-space colored. The color of the samples is based on the closest mode. \textbf{(Right)} Samples from $\mu$ pushed through the flow $T_{\mu,\nu}$. The color of the samples correspond to their origin mode in the latent space.}
    \label{fig:bogachev:2d_flow}
\end{figure}

\paragraph{Theoretical explanation}

In \citep[Theorem~2.1]{Cornish2020}, the authors explain that if the support of the base and the target of a flow are different then in order to have a sequence $T_{\alpha_n}$ of homeomorphisms such $\pushforward{T_{\alpha_n}}{\base} \xrightarrow{\mathcal{D}} \pi$ (with the notations of Sec. \ref{sec:nf-background}) then
$$\lim_{n \to \infty} \mathrm{BiLip}(T_{\alpha_n}) = \infty\eqsp,$$
where $\mathrm{BiLip}(f)$ is the bi-Lipschitz constant of $f$ is defined as the infemum over $M \in [1,\infty]$ such that
$$M^{-1} \norm{z - z'} \leq \norm{f(z) - f(z')} \leq M \norm{z - z'}\eqsp.$$
Because classic deep normalizing flows use contraction mappings like ReLu as activation functions, their bi-Lipschitz function is necessarily bounded thus forbidding a perfect approximation of $\pi$. As mentionned by \cite{Cornish2020}, this is especially true for ResFlows \cite{Chen2019} which are built upon Lipschitz transformations.

The intuition of this theorem can be seen in the previous example, as the sharp edges at the border of the modes would require unbounded derivatives.

\paragraph{Proof of Proposition \ref{th:prop_bogachev_bilip}}

Because $d=1$, all real functions are triangular. Using \citep[Lemma~2.1]{Bogachev2005} we deduce the transport map $T_{\mu,\nu}$ that we built in the previous paragraph is actually the unique increasing flow. We now compute its bi-Lipschitz constant. We have that
\begin{align}
    \frac{\rmd F_{\mu}}{\rmd z}(z) &= \frac{1}{2\sigma\sqrt{2\pi}} \left(\exp\left(-\left(\frac{z+a}{\sigma \sqrt{2}}\right)^2\right) + \exp\left(-\left(\frac{z-a}{\sigma \sqrt{2}}\right)\right)^2\right)\eqsp, \label{eq:unidim_flow_diff_one}\\
    \frac{\rmd F_{\nu}^{-1}}{\rmd y}(y) &= \tilde{\sigma} \sqrt{2\pi} \exp([\erf^{-1}(2y-1)]^2)\eqsp, \label{eq:unidim_flow_diff_two}\\
    \frac{\rmd T_{\mu,\nu}}{\rmd z}(z) &=  \frac{\rmd F_{\nu}^{-1}}{\rmd y}(F_{\mu}(z)) \frac{\rmd F_{\mu}}{\rmd z}(z)\eqsp.\label{eq:unidim_flow_diff_three}
\end{align}
Using equations (\ref{eq:unidim_flow_diff_one})-(\ref{eq:unidim_flow_diff_three}) with $z = 0$, we have that
\begin{align*}
     \frac{\rmd T_{\mu,\nu}}{\rmd z}(0) &= \tilde{\sigma} \sqrt{2\pi} \exp\left(\left[\erf^{-1}\left(\frac{2}{4} \left(2 + \erf\left(\frac{a}{\sigma \sqrt{2}}\right) + \erf\left(\frac{a}{\sigma \sqrt{2}}\right)\right) -1\right)\right]^2\right) \\
     &\times \frac{1}{2\sigma\sqrt{2\pi}} \left(\exp\left(-\left(\frac{a}{\sigma \sqrt{2}}\right)^2\right) + \exp\left(-\left(\frac{a}{\sigma \sqrt{2}}\right)\right)^2\right) \\
     &= \frac{\tilde{\sigma}}{\sigma} \exp\left(\frac{-a^2}{2\sigma^2}\right)\eqsp.
\end{align*}
Using this result, we can derive a lower bound on $\mathrm{BiLip}(T_{\mu,\nu})$
\begin{align*}
    \mathrm{BiLip}(T_{\mu,\nu}) &\geq \mathrm{Lip}(T_{\mu,\nu}^{-1}) = 1 / \mathrm{Lip}(T_{\mu,\nu}) \geq \left(\frac{\rmd T_{\mu,\nu}}{\rmd z}(0)\right)^{-1} \\
                                &= \frac{\sigma}{\tilde{\sigma}} \exp\left(\frac{a^2}{2\sigma^2}\right)\eqsp.
\end{align*}
This shows that $\lim_{a \to \infty} \mathrm{BiLip}(T_{\mu,\nu}) = +\infty$. Proposition \ref{th:prop_bogachev_bilip} can be recovered by taking $\tilde{\sigma} = 1$.

\section{Local samplers can't cross energy barriers} \label{sec:two_modes:local}

We can illustrate that local samplers can't cross energy barriers by taking a simple uni-dimensional mixture of Gaussians as in Fig. \ref{fig:two_modes:dist_and_cv}. Figure \ref{fig:two_modes:dist_and_cv} shows that, as the energy barrier increases, it gets more and more difficult for a local sampler to sample the mixture. Independent proposal methods are able to overcome this issue despite a poorly chosen proposal.

\begin{figure}[t]
    \centering
    \begin{subfigure}{0.35\linewidth}
		\centering
		\raisebox{-2cm}{\includegraphics[width=0.95\linewidth]{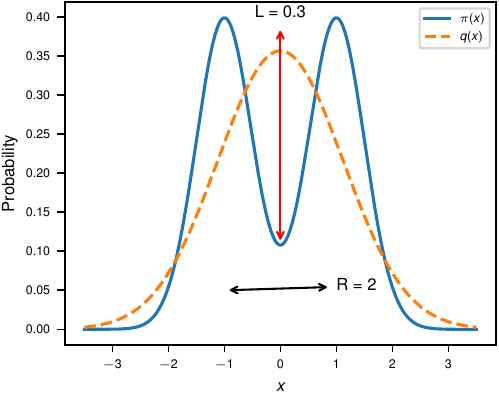}}
		\label{fig:two_modes:distributions}
    \end{subfigure}%
    \begin{subfigure}{0.55\linewidth}
		\centering
		\raisebox{-2cm}{\includegraphics[width=0.95\linewidth]{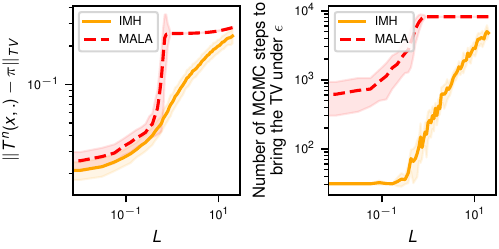}}
		\label{fig:two_modes:exp_convergence}
    \end{subfigure}
    \caption{Toy experiment with a mixture of two uni-dimensional Gaussians - \textbf{(Left)} Target distribution $\pi$ and proposal $Q = \mathcal{N}(0, \sqrt{\sigma^2 + a^2})$. The distance $L$ is the depth of the gap between the two modes. \textbf{(Middle)} Total variation metrics of 256 MCMC chains sampling $\pi$ with different values of $L$ after 8192 steps. \textbf{(Right)} Number of steps needed to bring the total variation distance of the chain being built under $\epsilon = 8 \times 10^{-2}$ for different values of $L$. Note that curves eventually plateau due to the fact that we can't sample infinitely long MCMC chains.}
    \label{fig:two_modes:dist_and_cv}
\end{figure}

\section{Flows typically do not erase potential barriers in the latent space} \label{app:two_moons}

The toy experiment from App. \ref{sec:two_modes:local} explained the difficulty of sampling multi-modal distributions directly in the data space, but could the flow kill this multi-modality in the latent space like it killed the bad conditioning before ? We trained popular deep normalizing flows on the two moons multimodal target and observed the push-forward and push-backward space. Figure \ref{fig:many_flows} shows that the flows are not erasing energy barriers and can even worsen the conditioning of the modes in the latent space compared to the data space. However, they successfully put all the mass under the support of the base of the flow which should improve the quality flow proposal based methods. The intuition behind this difficulty is hinted in App. \ref{app:bogachev}.

\begin{figure}[t]
    \centering
    \includegraphics[width=\linewidth]{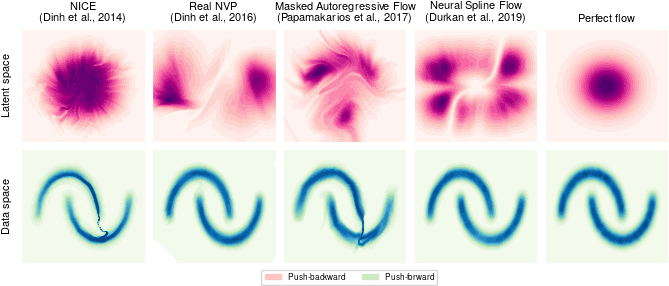}
    \caption{Push-backward and push-forward of popular deep normalizing flows targeting the two moons distribution}
    \label{fig:many_flows}
\end{figure}

\section{Additional details on synthetic examples}

\subsection{Computation of sliced distribution metrics} \label{app:sliced_metrics}

In section \ref{sec:exp_intuitions}, we used two different distances : the Total Variation (TV) distance and the Kolmogorov-Smirnov (KS) distance. They are computed between distributions $\mu$ and $\nu$ as follows
\begin{align}
    D_{\TV}(\mu,\nu) &= \sup_{A \in \mathcal{F}} \abs{\mu(A) - \nu(A)} \label{app:def_tv}\eqsp,\\
    D_{\KS}(\mu, \nu) &= \sup_x \abs{F_{\mu}(x) - F_{\nu}(x)} \label{app:def_ks}\eqsp,
\end{align}
where $F_{\mu}$ and $F_{\nu}$ are the cumulative distribution functions of $\mu$ and $\nu$ respectively. If we consider sliced versions of those metrics in Sec. \ref{sec:exp_intuitions}, it's because both of them cannot be computed in high dimension : (\refeq{app:def_tv}) require the computation of an intractable integral and (\refeq{app:def_ks}) require the multidimensional generalisation of the cumulative distribution function which is also intractable. Moreover, since $\mu$ will always be the distribution of the MCMC samples which is unknown it requires density estimation which is notoriously hard in high dimension.

To solve this problem, we project the samples in 1D before computing the metric. Let $(X_i)_{i = 1}^N$ and $(Y_i)_{i = 1}^M$ be samples from $\mu$ and $\nu$ respectively, for every random normal projection $P : \mathbb{R}^d \to \mathbb{R}$ (i.e., $P_i(x) = p_i x$ where $p_i \sim \mathcal{N}(0,1)$), we compute $\tilde{X}_i = P(X_i)$ and $\tilde{Y}_i = P(Y_i)$ and then
\begin{itemize}
    \item \emph{Sliced} Total Variation
    \begin{itemize}
        \item we perform density estimation on the obtained samples leading to densities $\tilde{\mu}$ and $\tilde{\nu}$;
        \item we compute $D_{\TV}(\tilde{\mu},\tilde{\mu})$ by doing
            $$D_{\TV}(\tilde{\mu},\tilde{\mu}) = \frac{1}{2} \int \abs{\tilde{\mu}(x) - \tilde{\nu}(x)} dx\eqsp.$$
    \end{itemize}
    \item \emph{Sliced} Kolmogorov-Smirnov
    \begin{itemize}
        \item we compute the empirical cumulative distributions of those projected samples $\hat{F}_{\mu_{proj}}(x) = 1/n \sum_{i=1}^n 1_{(-\infty,x]}(x_i)$ where $x$ browse the union support of $\mu_{proj}$ and $\nu_{proj}$ (same goes for $\hat{F}_{\nu_{proj}}(x)$);
        \item we compute the supremum of $\abs{\hat{F}_{\mu_{proj}} - \hat{F}_{\nu_{proj}}}\eqsp.$
    \end{itemize}
\end{itemize}

To be accurate this procedure requires many random projections. In this work, we always use 128 random projections. Moreover, the density estimation task needed for the sliced total variation is performed using kernel density estimation (with a gaussian kernel) where the bandwitch is selected with Scott method for all sampling algorithms except IMH and IS which use the Sheather-Jones algorithm \footnote{This is because when the flow is very far from the target, those samples tend to stagnate breaking Scott or Silverman rules.}. This density estimation task (just like the estimation of the cumulative distribution function) require to have $M \gg N$ if $\nu$ represent the true distribution and $\mu$ is an approximate distribution. In pratice, we use $M = 10 N$.

Finally, we have chosen the use of the Kolmogorov-Smirnov distance for the experiments involving Neal's Funnel as we wanted to highlight the sampling behavior in the tails of the distribution. 

To circumvent the need of slicing dimensions, a reviewer suggested to use the \emph{Maximum Mean Discrepancy} (MMD). Given a feature map $\phi : \R^d \to \mathcal{F}$ and two distributions $P$ and $Q$, the MMD is defined as
$$
	\operatorname{MMD}^2(P,Q) = \norm{\mu_P - \mu_Q}_{\mathcal{F}}^2\eqsp,
$$
where $\mu_P = (\mathbb{E}_{P}[\phi(X)_1], \ldots, \mathbb{E}_{P}[\phi(X)_m])^T$. Using the kernel trick, one can find a kernel $k : \R^d \times \R^d \to \R$ that satisfies
$$
	<\mu_P,\mu_Q>_{\mathcal{F}} = \mathbb{E}_{P,Q}[k(X,Y)]\eqsp.
$$
In the following, we'll empirically compute the MMD between two datasets $X$ and $Y$ (as feature matrix) with the following formula
$$
	MMD^{2}(X,Y) = \frac{1}{m (m-1)} \sum_{i} \sum_{j \neq i} k(x_{i}, x_{j}) - 2 \frac{1}{m.m} \sum_{i} \sum_{j} k(x_{i}, y_{j}) + \frac{1}{m (m-1)} \sum_{i} \sum_{j \neq i} k(y_{i}, y_{j})
$$
with a Gaussian kernel $k(x,y) = \exp(-\norm{x-y}^2 / (2\sigma^2)$. In practice, we choose the bandwidth $\sigma$ as the mean of the pairwise distances between each sample of $X$ and $Y$.

\subsection{Experimental details on the interpolated Gaussians} \label{app:3flows}

\paragraph{Design of the target $\pi$}

The target $\pi$ is a unimodal centered Gaussian $\mathcal{N}(0,\Sigma)$ with a badly conditioned covariance matrix $\Sigma$. We take $\Sigma = R_{\pi/4} \mathrm{diag}(\sigma_1^2, \ldots, \sigma_d^2) R_{\pi/4}^T$ where $R_{\pi/4}$ is the rotation of angle $\pi/4$ in the plan $(x_1,x_d)$ (first and last axis) and $\sigma_i$ are logarithmically evenly distributed between $10^{-1}$ and $10^1$. 

\paragraph{Design of $T_t$}

The imperfect flows are built with a piecewise linear interpolation $T_t$ using multiple canonical transformations between different multivariate Gaussians. If $\sqrt{A}$ denotes the Cholesky factor of a matrix $A$ (i.e., $\sqrt{A}\sqrt{A}^T = A$), then the flow indexed by $t$ can be expressed as  

$$
T_t : z \mapsto \begin{cases}
    ((1-2t) \sqrt{\sigma_1^2 I_d} + 2t \sqrt{\Sigma}) z & \text{ if } t < 1/2 \\
    \sqrt{\Sigma} z & \text{ if } t = 1/2 \\
    ((2t-1) \sqrt{\sigma_d^2 I_d} + 2 (1-t) \sqrt{\Sigma})z & \text{ if } t > 1/2
\end{cases}\eqsp,
$$

$$
T_t^{-1} : x \mapsto \begin{cases}
    [(1-2t) \sqrt{\sigma_1^2 I_d} + 2t \sqrt{\Sigma}]^{-1} x & \text{ if } t < 1/2 \\
    \sqrt{\Sigma}^{-1} x & \text{ if } t = 1/2 \\
    [(2t-1) \sqrt{\sigma_d^2 I_d} + 2 (1-t) \sqrt{\Sigma}]^{-1} x & \text{ if } t > 1/2
\end{cases}\eqsp.
$$

The flow $T_t$ is a linear map and its jacobians are the determinant of the scaling factors. This flow is built so that the conditioning number of the push-backward is canceled when the flow is perfect and can be as high as the one from the target if $t \neq 0.5$ (Fig. \ref{fig:3flows:cond}).

\begin{figure}[t]
    \centering
    \includegraphics[width=0.35\linewidth]{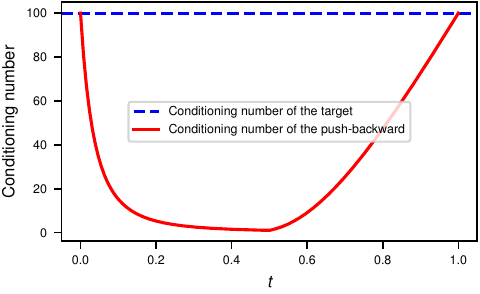}
    \caption{Conditioning of the push-backward of $\pi$ through $T_t$}
    \label{fig:3flows:cond}
\end{figure}

\paragraph{Sampling details}\label{sec:sampling_details_general}

Sampling hyperparameters where chosen by doing a grid-search for each algorithm and maximizing the median (over different quality of flows) of the sliced TV. The number of MCMC steps $n$ was also chosen so that $n$ is twice the number of steps needed to bring the $\hat{R}$ diagnostic of the faster algorithm (to converge) bellow $1.1$. The number of particles used in importance sampling is a quarter of the total number of particles used by i-SIR. This division was chosen to avoid memory problems when using non-sequential importance sampling. All details are available in table \ref{table:3flows:spl_params}. 256 chains were sampled in parallel to compute the metrics and were started by samples from the flow. 128 random projections are used to compute the sliced total variation (more details about this metric in App. \ref{app:sliced_metrics}). The step size of the local steps was chosen to maintain the acceptance rate at 75\%.

\begin{table}[t]
    \caption{Sampling hyper-parameters for the interpolated Gaussians experiment - $n$ is the number of MCMC steps in the chain, $n_{local}$ is the number of interleaved local steps in global/local samplers and $N$ is the number of particles used in importance sampling.}
    \label{table:3flows:spl_params}
    \begin{center}
        \begin{small}
            \begin{tabular}{c | c | c | c | c | c | c }
                \toprule
                \thead{Dimension} & \thead{$n$} & \thead{$N$ \\ (\propalgs)} & \thead{$N$ \\ (\neutraflow)} & \thead{$N$ \\ (IS)} & \thead{$n_{local}$ \\ (\propalgs)} & \thead{$n_{local}$ \\ (\neutraflow)} \\ 
                \midrule
                16 & 1100 & 80 & 80 & 22000 & 50 & 50 \\
                32 & 1200 & 80 & 80 & 24000 & 50 & 50 \\
                64 & 1300 & 80 & 80 & 26000 & 50 & 50 \\
                128 & 1400 & 80 & 80 & 28000 & 50 & 50 \\
                256 & 1500 & 80 & 80 & 30000 & 50 & 50 \\
                \bottomrule
            \end{tabular}
        \end{small}
    \end{center}
\end{table}

\paragraph{Maximum Mean Discrepancy distance}

\begin{figure}[t]
	\centering
	\includegraphics[width=0.5\linewidth]{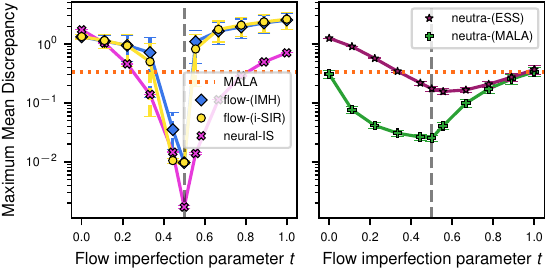}
	\caption{Replicate of Fig. \ref{fig:3flows:main} (Left) on the interpolated Gaussians with MMD distance.}
	\label{fig:3flows:mmd}
\end{figure}

As suggested by one of the reviewer, we computed the MMD distance - see end of App. \ref{app:sliced_metrics} - instead of the Sliced TV - see Fig. \ref{fig:3flows:main} (Left). Fig. \ref{fig:3flows:mmd} shows no apparent differences.

\subsection{Experimental details on Neal's Funnel} \label{app:funnel}

\begin{figure}[t]
    \centering
    \begin{subfigure}{0.40\linewidth}
        \centering
        \raisebox{-2cm}{\includegraphics[width=\linewidth]{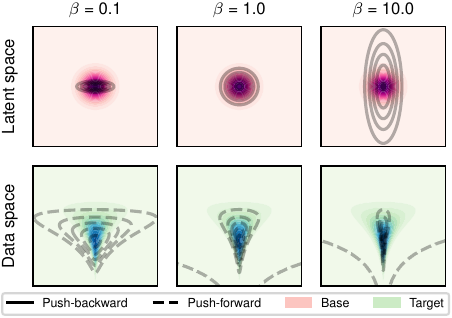}}
        \label{fig:funnel:beta}
    \end{subfigure}
    \begin{subfigure}{0.55\linewidth}
        \raisebox{-2cm}{\includegraphics[width=\linewidth]{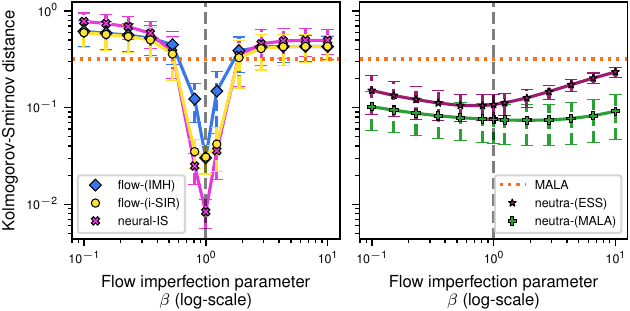}}
        \label{fig:funnel:ks_dist_dim_128}
    \end{subfigure}
    \caption{\textbf{(Left)} \textbf{Push-backwards $\pushforward{T_t}{\base}$ and push-forwards
    $\pushforward{T^{-1}_t}{\pi}$ as a function of the flow imperfection parameter $t$.} Bottom row compares ill-conditioned Gaussian target (blue levels) and flow push-forward (black lines) projected on smallest and largest variance axes. Top row compares base distribution $\mathcal{N}(0,I_d)$ (pink level) with target push-backward (black lines). \textbf{(Right)} \textbf{Kolmogorov-Smirnov (KS) distances between target and empirical samples depending on the quality of the flow $\beta$} using 512 chains of length 600 initialized with the flow $T_{\beta}$. \isalgs~was evaluated with 9750 samples. }
    \label{fig:funnel:main}
\end{figure}

\paragraph{Design of the target $\pi$}

Neal's funnel is a distribution which closely resembles the posterior of hierarchical Bayesian models. It depends on 2 positive parameters $a$, $b$ and is defined as follows

$$
    X \sim \pi(a,b) \iff \begin{cases}
        X_1 &\sim \mathcal{N}(0,a) \\
        X_i &\sim \mathcal{N}(0,\mathrm{e}^{b X_1}), \forall i \in \{2, \ldots, d\}
    \end{cases}\eqsp.
$$

\paragraph{Design of the flow $T_{\beta}$}

We define an analytical parametric flow $T_{a,b,\alpha}$ as follows 

$$T_{a,b,\alpha} : z \mapsto \begin{pmatrix}
\sqrt{\frac{a}{\alpha}} z_1\\
\vdots\\
\frac{1}{\sqrt{\alpha}} \exp(\frac{b \sqrt{\frac{a}{\alpha}} z_1}{2}) z_i \\
\vdots
\end{pmatrix}\eqsp, ~~ T_{a,b,\alpha}^{-1} : x \mapsto \begin{pmatrix}
\sqrt{\frac{\alpha}{a}} x_1 \\
\vdots\\
\sqrt{\alpha} \exp(-\frac{b}{2}x_1) x_i \\
\vdots
\end{pmatrix}\eqsp,
$$

\begin{align*}
    \log \abs{\det(J_{T_{a,b,\alpha}}(z))} = \log(\sqrt{a}) - \frac{d}{2} \log(\alpha) + \frac{d-1}{2} b \sqrt{\frac{a}{\alpha}} z_1\eqsp, \\
    \log \abs{\det(J_{T_{a,b,\alpha}^{-1}}(x))} = -\log(a) + \frac{d}{2} \log(\alpha) - \frac{d-1}{2} b x_1\eqsp.
\end{align*}

$T_{a,b,\alpha=1}$ is the natural flow which transports $N(0,I_d)$ to $\pi(a,b)$. The parameter $\alpha$ was introduce for two purposes : control the variance of the push-backward and calibrate the mode of the push-forward. Indeed, it's easy to see that

$$T^{-1}_{a,b,\alpha}(T_{a,b,1}(z)) = T^{-1}_{a,b,\alpha}\left(\begin{pmatrix}
\sqrt{a} z_1\\
\vdots\\
 \exp(\frac{b \sqrt{a} z_1}{2}) z_i \\
\vdots
\end{pmatrix}\right) = \begin{pmatrix}
\sqrt{\frac{\alpha}{a}}\left(\sqrt{a} z_1\right)\\
\vdots\\
\sqrt{\alpha} \exp\left(-\frac{b}{2} \sqrt{a} z_1\right)\left(\exp(\frac{b \sqrt{a} z_1}{2}) z_i\right) \\
\vdots
\end{pmatrix} = \sqrt{\alpha} z\eqsp.$$

So if $X \sim \pi(a,b)$, then $T_{a,b,\alpha}^{-1}(X) \sim \mathcal{N}(0,\alpha I_d)$. Moreover, we can compute the mode of $T_{a,b,\alpha}(\mathcal{N}(0,I_d))$ using the change of variable formula,

\begin{align*}
    \log \lambda_{T_{a,b,\alpha}}^{\mathcal{N}(0,I_d)}(x) &=
    \log(\mathcal{N}(T^{-1}_{a,b,\alpha}(x); 0, I_d)) + \log \det \abs{J_{T^{-1}_{a,b,\alpha}}(x)} \\
    &= -\frac{\norm{T^{-1}_{a,b,\alpha}(x)}^2}{2} - \frac{d}{2} \log(2\pi) - \log(a) + \frac{d}{2} \log(\alpha) - \frac{d-1}{2} b x_1 \\
    &= -\frac{1}{2} \left(\left(\sqrt{\frac{\alpha}{a}} x_1\right)^2 + \sum_{i=2}^d \left(\sqrt{\alpha} \exp\left(-\frac{b}{2} x_1\right) x_i\right)^2\right) - \frac{d-1}{2} x_1 + K \\
    &= -\frac{1}{2} \left(\frac{\alpha}{a} x_1^2 + \alpha \exp(-b x_1) \sum_{i=2}^d  x_i^2 \right) - \frac{d-1}{2} x_1 + K \eqsp,\\
\end{align*}

where $K$ is a constant. Canceling the gradient of $\log \lambda_{T_{a,b,\alpha}}^{\mathcal{N}(0,I_d)}$ leads to

\begin{align*}
    \begin{cases}
        \frac{\partial \log \lambda_{T_{a,b,\alpha}}^{\mathcal{N}(0,I_d)}(x)}{\partial x_1}(x) = 0 &\\
        \frac{\partial \log \lambda_{T_{a,b,\alpha}}^{\mathcal{N}(0,I_d)}(x)}{\partial x_j}(x) = 0 &, (j \neq 1)
    \end{cases}
    &\iff
    \begin{cases}
        -\frac{1}{2}\left(\frac{2\alpha}{a}x_1 - b\alpha\left(\sum_{i=2}^d x_i^2\right) \exp(-bx_1)\right) - \frac{d-1}{2} b = 0 &\\
        -\alpha \exp(-bx_1) x_j = 0 & (j \neq 0)
    \end{cases} \\
    &\iff \begin{cases}
        -\frac{\alpha}{a}x_1 = \frac{d-1}{2} b &\\
        x_j = 0 & (j \neq 0)
    \end{cases} \\
    &\iff z = \begin{pmatrix}
        -\frac{ab}{2\alpha} (d-1) \\
        0 \\
        \vdots \\
        0
    \end{pmatrix}\eqsp.
\end{align*}

Let $\beta > 0$, $a^{\star} = 3$, $b^{\star} = 1$ then if $a = \beta a^{\star}$, $b = b^{\star}$ and $\alpha = \beta$ the mode of the push-forward of $\mathcal{N}(0,I_d)$ through $T_{a,b,\alpha}$ and the mode of $\pi(a^{\star}, b^{\star})$ will always coincide for any $\beta > 0$. We use this principle to create the flow $T_{\beta} = T_{\beta a^{\star}, b^{\star}, \beta}$ which is a perfect mapping from $\mathcal{N}(0,I_d)$ to $\pi(a^{\star}, b^{\star})$ if $\beta = 1$.

\paragraph{Sampling details}

Sampling procedure is the same as in Sec. \ref{sec:sampling_details_general} with the Kolmogorov-Smirnov distance as target metric, only some hyper-parameters change (see table \ref{table:funnel:spl_params}).

\begin{table}[t]
    \caption{Sampling hyper-parameters for Neal's Funnel experiment.}
    \label{table:funnel:spl_params}
    \begin{center}
        \begin{small}
            \begin{tabular}{c | c | c | c | c | c | c }
                \toprule
                \thead{Dimension} & \thead{$n$} & \thead{$N$ \\ (\propalgs)} & \thead{$N$ \\ (\neutraflow)} & \thead{$N$ \\ (IS)} & \thead{$n_{local}$ \\ (\propalgs)} & \thead{$n_{local}$ \\ (\neutraflow)} \\ 
                \midrule
                16 & 500 & 60 & 60 & 7500 & 10 & 40 \\
                32 & 550 & 60 & 100 & 8250 & 10 & 40 \\
                64 & 600 & 60 & 100 & 9000 & 10 & 40 \\
                128 & 650 & 60 & 100 & 9750 & 10 & 40 \\
                256 & 700 & 60 & 120 & 10500 & 10 & 40 \\
                \bottomrule
            \end{tabular}
        \end{small}
    \end{center}
\end{table}

\subsection{High acceptance rates of pure \propalgs~algorithms} \label{app:wierd_energies}

\begin{figure}[t]
    \centering
    \begin{subfigure}{\linewidth}
		\centering
		\includegraphics[width=\linewidth]{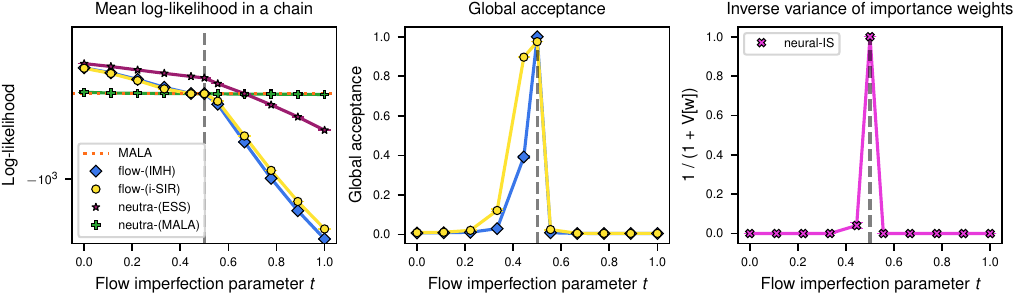}
		\label{fig:3flows:wierd_energies}
    \end{subfigure}
    \begin{subfigure}{\linewidth}
		\centering
		\includegraphics[width=\linewidth]{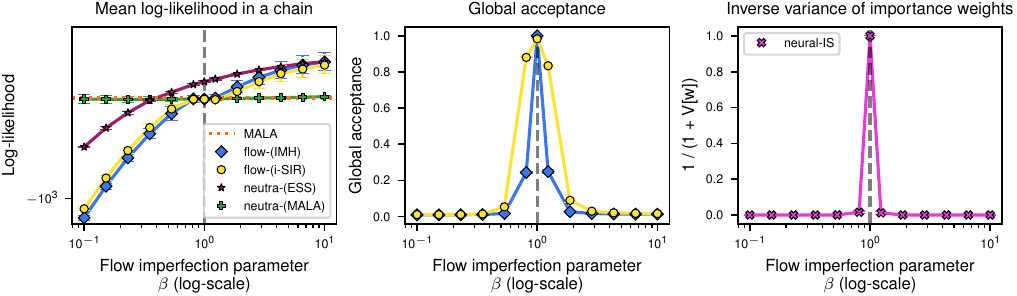}
		\label{fig:funnel:wierd_energies}
    \end{subfigure}%
    \caption{\textit{(Left)} Likelihoods levels \textit{(Middle)} / Global acceptance of pure \propalgs~methods / \textit{(Right)} Participation ratio of importance sampling, depending on the quality of the flow - \textbf{(Top)} Interpolated gaussians flow \textbf{(Bottom)} Neal's Funnel flow}
    \label{fig:3flows_funnel:wierd_energies}
\end{figure}

Figure \ref{fig:3flows_funnel:wierd_energies} shows how global \propalgs~samplers - IMH and i-SIR - behave depending on the quality of the flow. The middle and right columns show that the acceptance of IMH/i-SIR and the participation ratio of IS are crashing as soon as $\pi$ and $\rho$ aren't aligned perfectly. Note the the crashing rate of i-SIR is slower than the one of IMH. Fig. \ref{fig:3flows_funnel:wierd_energies} also highlights that \propalgs~algorithms as well as \reparamalgs~with ESS produce samples with very different likelihoods compared to \reparamalgs~methods which are known to perform better (see Fig. \ref{fig:3flows:main} and Fig. \ref{fig:funnel:main} of the main paper).

\subsection{Experimental details for the high dimensional Gaussian mixture} \label{app:highdim_mog}

\paragraph{Design of $\pi$}

$\pi$ is a mixture of 4 isotropic Gaussians i.e., $\pi = 1/4 \sum_{i=1}^4 \mathcal{N}(\mu_i, I_d)$ where the $\mu_i$ are defined as $\mu_1 = a \times (1, 1, 1, \ldots, 1, 1, 1)$, $\mu_2 = a \times (-1, -1, -1, \ldots, 1, 1, 1)$, $\mu_2 = - \mu_2$ and $\mu_4 = -\mu_1$ and $a = 0.5919$. This specific value of $a$ guarantees that if $X \sim \mathcal{N}(\mu_i,I_d)$ then $\forall j \neq i, \mathbb{P}(\|X - \mu_j\| < \|X - \mu_i\|) \leq 10^{-10}$ for any dimension $d$.

\paragraph{Training the flow}

\noindent The flows used here are RealNVPs. The base of the flow is $\rho = \mathcal{N}(0, \sigma^2 I_d)$ where $\sigma^2$ is the maximum variance of $\pi$ along each dimension \footnote{Using the total variation formula $\forall i \in \{1, \ldots, d\}, \sigma^2 = \sigma_i^2 = 1 + \sum_{j=1}^4 \frac{1}{4} (\mu_j)_i^2$}. The flow was trained using Adam \cite{Kingma2014} optimizer at a progressive learning rate. All the coupling layers had 3 hidden layers initialized with very small weights $(\simeq 10^{-6})$. The batch size was 8192 and the decay rate was $0.99$. The other hyper-parameters can be found in table \ref{table:highdim_mog:flow_params}. An indicator of the flows qualities can be found in Fig. \ref{fig:highdim_mog:global_acc_avg_kl}. 

\begin{table}[t]
    \caption{RealNVP training hyperparameters for the Gaussian mixture experiment}
    \label{table:highdim_mog:flow_params}
    \begin{center}
        \begin{small}
            \begin{tabular}{c | c | c | c | c | c}
                \toprule
                \thead{Dimension} & \# \thead{Iteration} & \thead{Patience of learning \\ rate scheduler} & \thead{Learning rate} & \thead{Size of hidden layers} & \thead{\# RealNVP blocks} \\
                \midrule
                16 & 2500 & 100 & $10^{-2}$ & 64 & 4 \\
                32 & 2640 & 106 & $7.36 \times 10^{-3}$ & 76 & 4 \\
                64 & 2900 & 120 & $3.98 \times 10^{-3}$ & 102 & 4 \\
                128 & 3440 & 146 & $1.17 \times 10^{-3}$ & 153 & 5 \\
                256 & 11250 & 200 & $10^{-4}$ & 256 & 8 \\
                \bottomrule
            \end{tabular}
        \end{small}
    \end{center}
\end{table}

\begin{figure}[t]
    \begin{subfigure}{0.50\linewidth}
        \centering
    \includegraphics[width=0.80\linewidth]{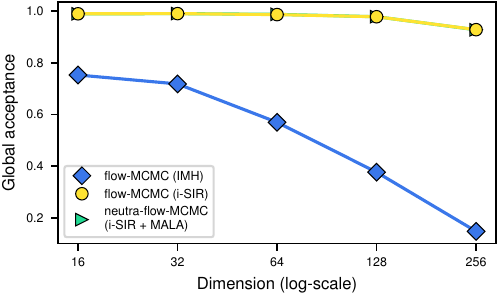}
        \label{fig:highdim_mog:global_acc}
    \end{subfigure}
    \begin{subfigure}{0.50\linewidth}
        \centering
        \includegraphics[width=0.80\linewidth]{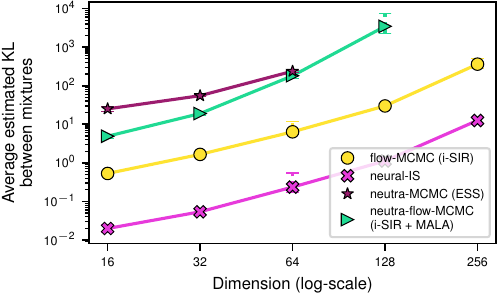}
        \label{fig:highdim_mog:avg_kl}
    \end{subfigure}
    \caption{\textbf{(Left)} Global acceptance of the RealNVP flow obtained after training on the mixture of Gaussians in increasing dimension \textbf{(Right)} Averaged forward Kullback-Leiber for the Gaussian mixture.}
    \label{fig:highdim_mog:global_acc_avg_kl}
\end{figure}

\paragraph{More \reparamalgs~algorithms for the 2D mixture}

\begin{figure}
    \centering
    \includegraphics[width=0.80\linewidth]{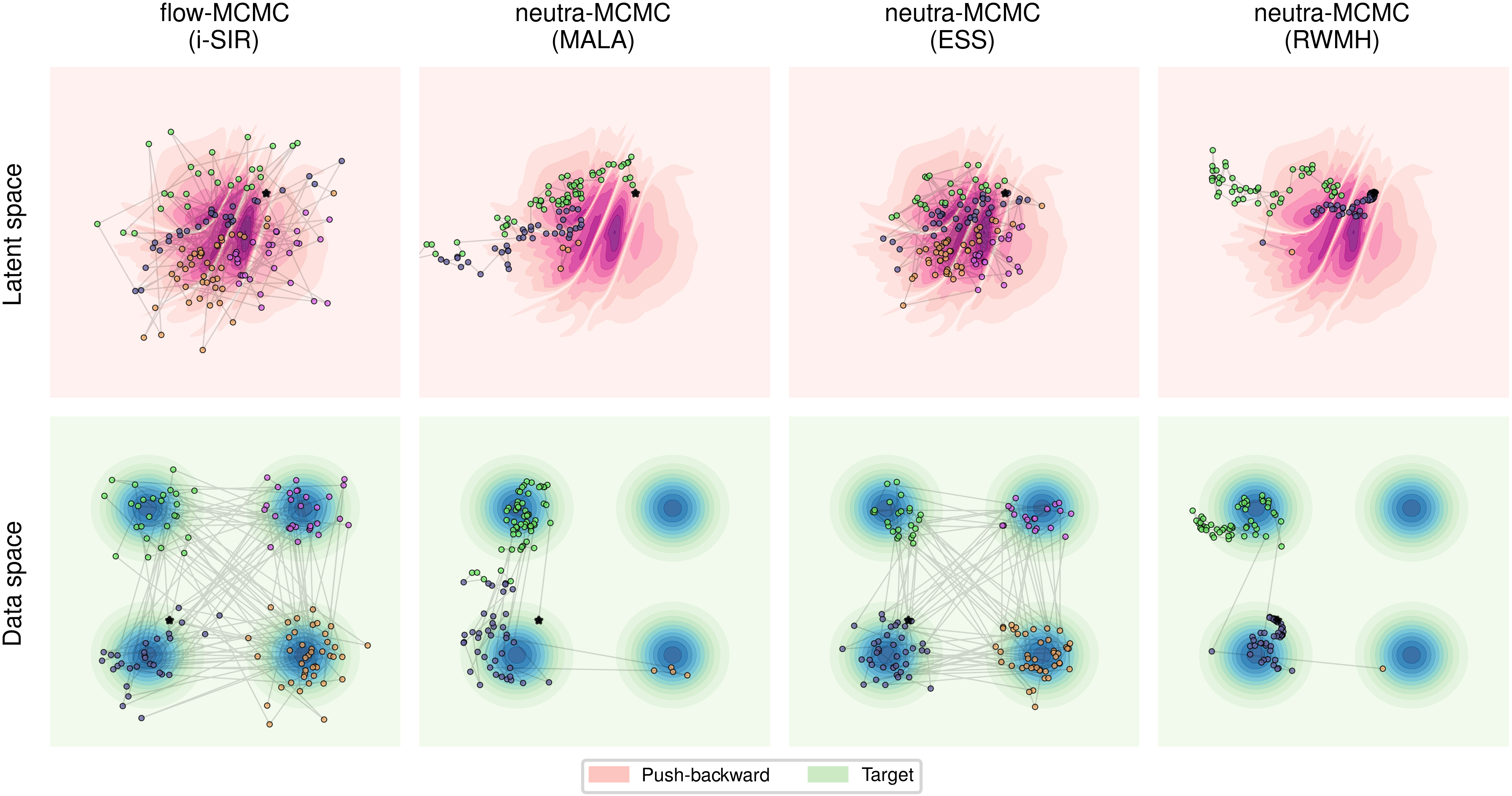}
    \caption{Extended version of Fig. \ref{fig:highdim_mog:recap}}
    \label{fig:highdim_mog:samples_neutra_extended}
\end{figure}

Figure \ref{fig:highdim_mog:samples_neutra_extended} extends Fig. \ref{fig:highdim_mog:recap} with more \reparamalgs~methods. It shows that changing the reparametrized sampler could allow switching modes in the latent space. For instance, ESS is able to do so.
Our conclusion is that using local samplers in this pathological latent space make mixing between modes much longer and less accurate compared to using \propalgs~methods. 

\paragraph{The case of mode collapse}

\begin{figure}[t]
    \centering
    \begin{subfigure}{0.33\linewidth}
        \centering
        \includegraphics[width=\linewidth]{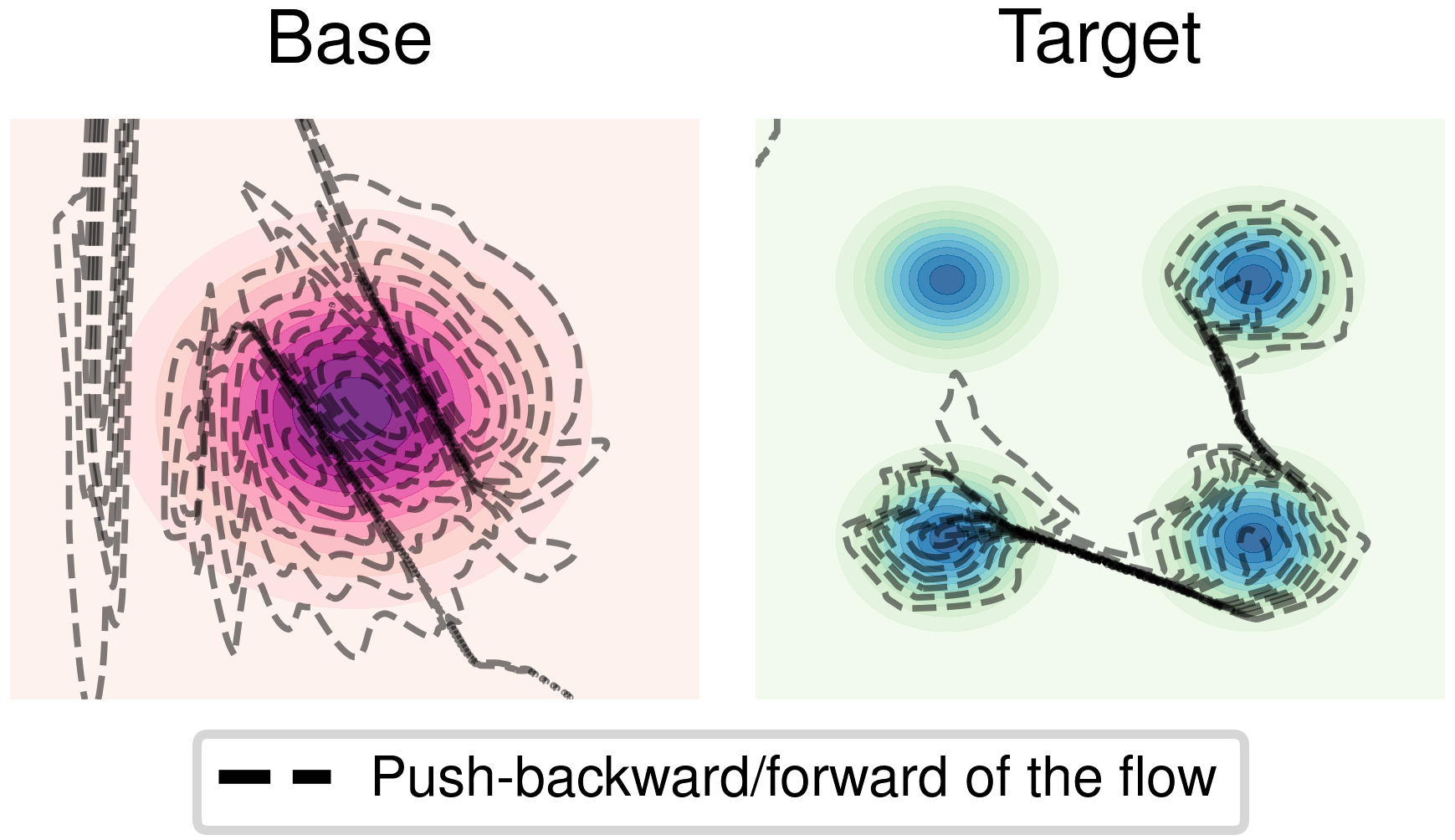}
    \end{subfigure}%
    \begin{subfigure}{0.33\linewidth}
        \centering
        \includegraphics[width=\linewidth]{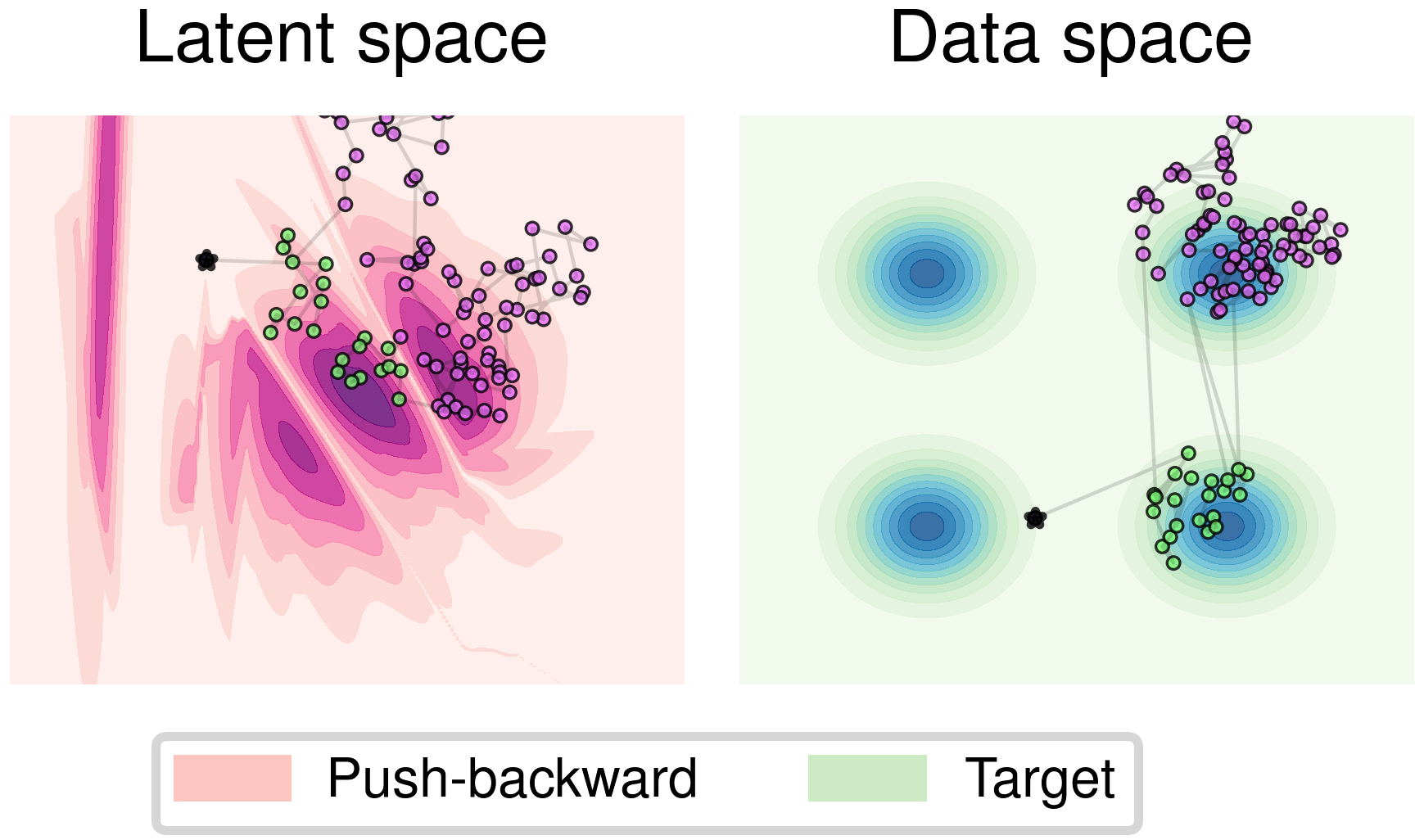}
    \end{subfigure}%
    \begin{subfigure}{0.33\linewidth}
        \centering
        \includegraphics[width=\linewidth]{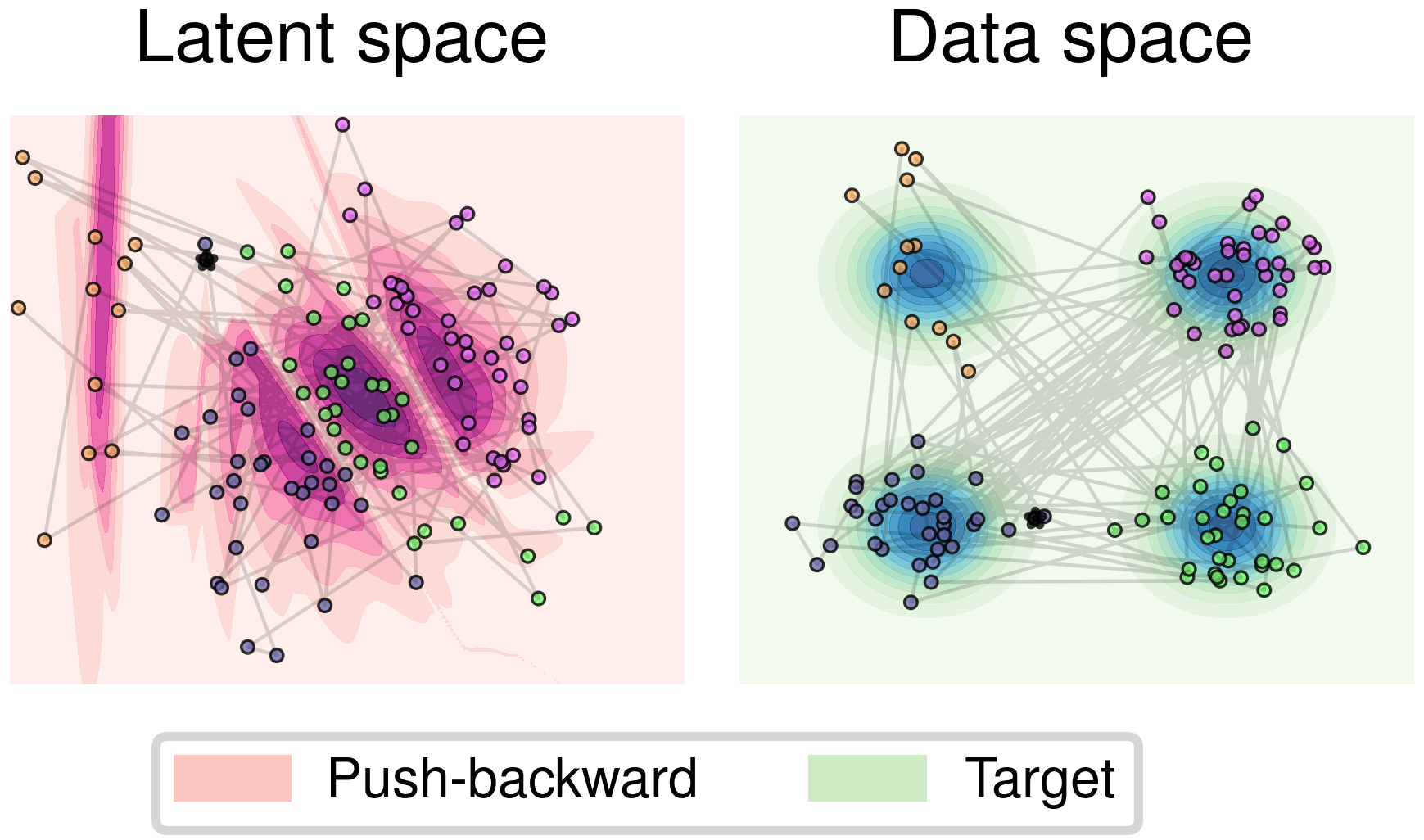}
    \end{subfigure}
    \caption{\textbf{An illustration of sampling with a mode-collapsed flow. (Left)} RealNVP trained with backward KL torwards a 2d target mixture of 4 Gaussians. \textbf{(Middle) \reparamalgs} and \textbf{(Right) \propalgs} Samplers using the flow on the left. The 128-step MCMC chain is colored according to the closest mode in the data space (bottom row) with corresponding location in the latent space (top row). MALA's step-size was chosen to reach 75\% of acceptance. \label{fig:mode-collapse}}
\end{figure}

As recalled in the introduction, normalizing flows trained with the backward KL objective notoriously suffer from mode collapse. Based on this observation, we extended the conclusions of Fig. \ref{fig:highdim_mog:recap} Left using a flow which suffered from mode collapse. Fig. \ref{fig:mode-collapse} shows that \propalgs~methods sometimes reach the uncovered mode while \reparamalgs~never do. One can also think of the following thought experiment: consider a target as a two-component mixture of Gaussian and a Gaussian base distribution. In a mode-collapsed situation, we can imagine that an affine flow sends the base distribution perfectly to one of the modes. All types of NF-enhanced samplers will fail. On the one hand, \propalgs~will never manage to propose in the other mode. On the other hand, the push-backward of the target sampled by \reparamalgs~will be an affine transformation of the original Gaussian mixture, which a local sampler cannot properly sample.

\paragraph{\propalgs~methods exploit modes}

If figure \ref{fig:highdim_mog:recap} shows the exploration capability that \propalgs~methods have and \reparamalgs~methods lack, it doesn't show how much the mass within the modes is exploited. By cutting the Markov chains depending on the closest mode and computing the mean and covariance of each part, we can compute the forward KL between each mode and the Gaussian approximated in each part. Averaging those metrics leads to Fig. \ref{fig:highdim_mog:global_acc_avg_kl} which shows that in moderate dimensions the flow is good enough so purely global methods can exploit the modes. Here, we can see the
dimension dependence of all methods.

\paragraph{Sampling details}

Sampling procedure is the same as in Sec. \ref{sec:sampling_details_general} and targets the mean score between the average KL and the mode mixing metric. Note that unlike Sec. \ref{sec:sampling_details_general}, 1024 chains were used. Only some hyper-parameters change and can be found in table \ref{table:highdim_mog:spl_params}.

\begin{table}[t]
    \caption{Sampling hyper-parameters for the Gaussian mixture experiment.}
    \label{table:highdim_mog:spl_params}
    \begin{center}
        \begin{small}
            \begin{tabular}{c | c | c | c | c | c | c }
                \toprule
                \thead{Dimension} & \thead{$n$} & \thead{$N$ \\ (\propalgs)} & \thead{$N$ \\ (\neutraflow)} & \thead{$N$ \\ (IS)} & \thead{$n_{local}$ \\ (\propalgs)} & \thead{$n_{local}$ \\ (\neutraflow)}  \\ 
                \midrule
                16 & 700 & 120 & 120 & 21000 & 5 & 5 \\
                32 & 900 & 140 & 140 & 31500 & 5 & 5 \\
                64 & 1100 & 160 & 160 & 44000 & 5 & 5 \\
                128 & 1300 & 180 & 180 & 58500 & 5 & 5 \\
                256 & 1500 & 200 & 200 & 75000 & 5 & 5 \\
                \bottomrule
            \end{tabular}
        \end{small}
    \end{center}
\end{table}

\subsection{Experimental details on the banana distribution} \label{app:banana}

\paragraph{Design of $\pi$}

The banana distribution depends on two positive parameters $a$ and $b$ and is expressed as follows

$$
    X \sim \pi(a,b) \iff  \forall i \in \{0, \ldots, d/2\}, \begin{cases}
        X_{2i} &= a Z_{2i}\\
        X_{2i+1} &= Z_{2i+1} + b a^2 Z^2_{2i} - a^2 b
    \end{cases}\eqsp,
$$

where $Z \sim \mathcal{N}(0, I_d)$ and $d$ is even. This leads to a natural bijection $T_{a,b}$ between $\mathcal{N}(0,I_d)$ and $\pi(a,b)$

$$T_{a,b} : z \mapsto \begin{pmatrix}
\vdots\\
a z_{2i} \\
z_{2i+1} + b a^2 z^2_{2i} - a^2 b \\
\vdots
\end{pmatrix}\eqsp, ~~ T_{a,b}^{-1} : x \mapsto \begin{pmatrix}
\vdots \\
x_{2i} / a \\
x_{2i+1} - b x_{2i}^2 + a^2 b \\
\vdots
\end{pmatrix}\eqsp,
$$

$$\log |\det(J_{T_{a,b}}(z))| = \frac{d}{2} \log(a)\eqsp, ~~~ \log |\det(J_{T_{a,b}^{-1}}(x))| = -\frac{d}{2} \log(a)\eqsp.$$

In the following, we take $a = 10$ and $b = 0.02$ (see Fig. \ref{fig:banana:dist}).

\begin{figure}[t]
    \centering
    \includegraphics[width=0.5\linewidth]{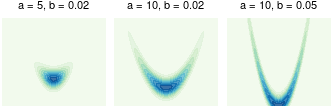}
    \caption{Banana distribution with different parameters}
    \label{fig:banana:dist}
\end{figure}

\paragraph{Flows, training and sampling}

The normalizing flows used are RealNVPs \cite{Dinh2017density} with 3 layers deep neural networks initialized with small weights ($\simeq 10^{-6}$). The specific architecture of the neural networks and the training hyperparameters depending on the dimension of the problem can be found in table \ref{table:banana:flow_params} (the learning rate scaling is $0.98$). The sampling algorithms with i-SIR using $N = 60$ particles, $n_{local} = 10$ and the target acceptance of local steps is 75\%. 128 random projections where used to compute the sliced total variation. There are 128 MCMC chains of length 1024 sampled in parallel.

\begin{table}[t]
    \caption{RealNVP hyper-parameters for the Banana experiment.}
    \label{table:banana:flow_params}
    \begin{center}
        \begin{small}
            \begin{tabular}{c | c | c | c | c | c}
                \toprule
                \thead{Dimension} & \# \thead{Iteration} & \thead{Patience of learning \\ rate scheduler} & \thead{Learning rate} & \thead{Size of hidden layers} & \thead{\# RealNVP blocks} \\
                \midrule
                16 & 1000 & 75 & $10^{-2}$ & 64 & 3 \\
                32 & 1193 & 82 & $8 \times 10^{-3}$ & 74 & 3 \\
                64 & 1580 & 96 & $5 \times 10^{-3}$ & 96 & 3 \\
                128 & 2354 & 125 & $2 \times 10^{-3}$ & 139 & 5 \\
                256 & 3903 & 183 & $3 \times 10^{-4}$ & 226 & 7 \\
                512 & 7000 & 300 & $1 \times 10^{-5}$ & 400 & 12 \\
                \bottomrule
            \end{tabular}
        \end{small}
    \end{center}
\end{table}

\section{Mixing time for IMH for log concave distribution} \label{app:proof_cond}

\subsection{General Theorem}

Here we consider Independent Metropolis-Hastings (IMH) with a target $\pi$ on $\rset^d$ and a proposal $Q$ with density denoted by $q$ with respect to the Lebesgue measure. We denote by $P$ the resulting Markov kernel given for any $x \in \rset^d$ and Borel set $\msa$ by
\begin{equation}
    P(x,\msa) = \int_{\msa} Q(\rmd y) \mathrm{acc}(x, y) + \left(1 - \int Q(\rmd y) \mathrm{acc}(x, y) \right)  \delta_x(\msa) \eqsp.
\end{equation}
We define the mixing time of $P$ with respect to an initial distribution $\mu$ and a precision target $\epsilon >0$ as
\begin{equation}
    \tau_{\mix}(\mu,\epsilon) = \inf \{n \in \mathbb{N} ~:~ \norm{\mu P^n - \pi}_{\TV} \leq \epsilon \} \eqsp,
\end{equation}
where the total variation distance between two distributions $P_1$ and $P_2$ is defined as
\begin{equation}
    \norm{P_1 - P_2}_{\TV} = \sup_{A \in \mathcal{B}(\rset^d)} \abs{P_1(A) - P_2(1)}\eqsp.
\end{equation}
The quantity $\tau_{\mix}(\mu,\epsilon)$ provides the minimum number of MCMC steps needed to bring the total variation distance between the chain and the target $\pi$ below a precision $\epsilon$ when the chain has initial distribution $\mu$. Now we introduce the $s$-conductance which will be later use to bound the mixing time. Let $s \in (0, 1/2]$, we define the $s$-conductance as
\begin{equation}
\phi_s = \inf_{\mss, \pi(\mss) \in (s,1/2]} \frac{\int_{\mss} P(x,\mss^c) \pi(\rmd x)}{\pi(\mss) - s} \eqsp,
\end{equation}
where $\mss^c$ denotes the complement of $\mss$. The $s$-conductance quantifies the probability of transitioning from a set $\mss$ charged with a probability $s$ to its complementary. This quantity is of interest since it can be used for bounding the mixing time  very useful because of Theorem \ref{th:lovasz} and its corollary taken from \cite{Lovasz1993}.
\begin{equation}
\tau_{mix}(\mu,\epsilon) = \inf \{n \in \mathbb{N} ~:~ \norm{P^n(\mu,.) - \pi}_{\TV} \leq \epsilon \} \eqsp, \text{ where } \epsilon > 0 \eqsp.
\end{equation}
$\tau_{mix}(\mu,\epsilon)$ provides the minimum number of MCMC steps needed to bring the total variation distance between the chain and the target $\pi$ bellow an accuracy $\epsilon$ when the chain started with distribution $\mu$. 

\begin{theorem}[Corollary 1.5 \citep{Lovasz1993}]\label{th:lovasz}
A Markov chain with $\pi$ as unique invariant distribution and $\mu$ a $\beta$-warm start with respect to $\pi$ (i.e., for any Borel set $\mse$, $\mu(\mse) \leq \beta \pi(\mse)$) verifies that for all $n \in \mathbb{N}$,
\begin{equation}
    \norm{P^n(\mu,.) - \pi}_{\TV} \leq \beta s + \beta \left(1 - \frac{\phi_s^2}{2}\right)^n \leq \beta s + \beta \exp\left(-\frac{n}{2} \phi_s^2\right) \eqsp.
  \end{equation}
  In particular,  for $0 < \epsilon < 1$, then
\begin{equation}
    \tau_{\mix}(\mu,\epsilon) \leq \frac{2}{\phi_s^2} \log\left(\frac{2\beta}{\epsilon}\right) \eqsp.
\end{equation}
\end{theorem}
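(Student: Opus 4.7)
The plan is to combine Theorem \ref{th:lovasz} with a lower bound on the $s$-conductance of the IMH kernel $P_\sigma$. Setting $s = \epsilon/(2\beta)$, the $\beta s$ term in the Lovász--Simonovits bound equals $\epsilon/2$, so it suffices to prove $\phi_s^{-2} \leq 64 \max(1, 128^2 C_R^2 / (\log(2)^2 m))$, which yields the claimed mixing time via $\tau_{\mix}(\mu,\epsilon) \leq (2/\phi_s^2)\log(2\beta/\epsilon)$.

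Three ingredients feed the conductance estimate. (i) \emph{Tail control.} Assumption \ref{ass:strong_convex_main} gives sub-Gaussian concentration of $\pi$ at scale $1/\sqrt{m}$, while the isotropic Gaussian proposal $\mathcal{N}(0,\sigma^2 \Idd)$ concentrates at scale $\sigma$; requiring both $\pi(\ball(0,R)^c)$ and the analogous mass under the proposal to be at most $s$ forces $R$ of order $\sqrt{d}\max(\sigma, 1/\sqrt{m})$ with a logarithmic correction $\abs{\log(\beta/\epsilon)}^\alpha/d^{\alpha/2}$. (ii) \emph{Acceptance lower bound.} On $\ball(0,R)$, Assumption \ref{ass:main_flow_quality} translates directly into $\mathrm{acc}(x,y) = \min(1, w_\sigma(y)/w_\sigma(x)) \geq \exp(-C_R\abs{x-y})$, so any two points within distance $\log(2)/C_R$ accept each other with probability at least $1/2$. (iii) \emph{Isoperimetry.} Strong log-concavity gives $\pi$ a Cheeger-type inequality with constant of order $\sqrt{m}$.

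With these in hand I would follow the classical three-set decomposition: for a measurable $\mss$ with $\pi(\mss) \in (s, 1/2]$, put $\mss_1 = \{x \in \mss \cap \ball(0,R) : P_\sigma(x,\mss^c) < 1/4\}$ and $\mss_2 = \{y \in \mss^c \cap \ball(0,R) : P_\sigma(y,\mss) < 1/4\}$. If $\pi(\mss \setminus \mss_1) \geq (\pi(\mss)-s)/2$, the flow out of $\mss$ is at least $(\pi(\mss)-s)/8$, giving the first branch $\phi_s \geq 1/8$. Otherwise, ingredients (ii) and (iii) combine: any $x \in \mss_1$ and $y \in \mss_2$ must satisfy $\abs{x-y} \geq \log(2)/(2C_R)$ (else the acceptance bound would push $P_\sigma(x,\mss^c)$ or $P_\sigma(y,\mss)$ above $1/4$ through the proposal overlap), and invoking the Cheeger inequality on the $\log(2)/(2C_R)$-neighborhood separating $\mss_1$ from $\mss_2$ yields a flow of order $\sqrt{m}\log(2)/C_R \cdot (\pi(\mss)-s)$, giving the second branch. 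The main technical obstacle is making this separation-plus-isoperimetry argument quantitative with the correct constants so that the explicit threshold $C_R \leq \log(2)\sqrt{m}/32$ and the factor $128$ appear naturally; a secondary but delicate issue is propagating the $\pi(\ball(0,R)^c)$ leakage through the estimates without degrading $\phi_s$ below what the chosen $s$ tolerates, which is precisely what drives the exponent $\alpha$ and the additive term in the required radius $R$.
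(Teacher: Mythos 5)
Your proposal does not prove the statement it was asked to prove---it assumes it. Theorem \ref{th:lovasz} is the Lov\'asz--Simonovits result itself: the assertion that for \emph{any} Markov kernel $P$ with invariant distribution $\pi$ and any $\beta$-warm start, the total variation distance decays as $\beta s + \beta(1-\phi_s^2/2)^n$, with the mixing-time bound $\tau_{\mix}(\mu,\epsilon) \leq (2/\phi_s^2)\log(2\beta/\epsilon)$ as a consequence (upon choosing $s \leq \epsilon/(2\beta)$ so that $\beta s \leq \epsilon/2$ and solving $\beta\exp(-n\phi_s^2/2) \leq \epsilon/2$ for $n$). Your very first sentence---``combine Theorem \ref{th:lovasz} with a lower bound on the $s$-conductance of the IMH kernel''---invokes both displays of the statement as known facts, which makes the argument circular as a proof of Theorem \ref{th:lovasz}. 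Moreover, all three of your ingredients (sub-Gaussian tail control under Assumption \ref{ass:strong_convex_main}, the acceptance lower bound from Assumption \ref{ass:main_flow_quality}, and isoperimetry of strongly log-concave measures) are properties of a \emph{specific} kernel and target; they can only lower-bound $\phi_s$ for IMH, and no amount of such kernel-specific analysis can establish the kernel-generic implication ``conductance $\Rightarrow$ mixing'' that the theorem asserts. The theorem makes no log-concavity, Gaussian-proposal, or weight-regularity assumptions whatsoever.

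A genuine proof of Theorem \ref{th:lovasz} runs along entirely different lines (and is why the paper simply cites \citep[Corollary~1.5]{Lovasz1993} rather than reproving it): one introduces, for each level $t \in [0,1]$, the quantity $h_n(t) = \sup\{\mu P^n(g) - \pi(g) : 0 \leq g \leq 1,\ \pi(g) = t\}$, shows $h_n$ is concave in $t$, uses the warm start to get $h_0(t) \leq \beta\min(t,1-t) + \beta s$ up to the $s$-truncation, and derives from the definition of $\phi_s$ the one-step contraction $h_{n+1}(t) \leq \tfrac12\bigl(h_n(t - \phi_s \min(t-s, 1-t)) + h_n(t + \phi_s \min(t-s,1-t))\bigr)$, which iterates to the stated geometric decay. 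What you have actually sketched is, in outline, the paper's proof of Theorem \ref{th:asympto_bound_main}: your three-set decomposition with the acceptance/Lipschitz bound and isoperimetry is essentially the paper's Theorem \ref{th:cond_imh} (conductance lower bound for IMH), your tail-control step matches Theorem \ref{th:nice_th} (via \citep[Lemma~1]{Dwivedi2018}), and your choice $s = \epsilon/(2\beta)$ with $\psi(\pi) = \log 2\sqrt{m}$ reproduces Corollaries \ref{th:cond_imh_cor} and \ref{th:nice_th_cor}. That downstream argument is broadly on target, but it is the proof of a different theorem; for the statement at hand it fills in nothing.
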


Here we follow now a common strategy to derive quantitative mixing times for Metropolis-Hastings algorithms applied to log-concave target by applying \Cref{th:lovasz} \cite{Dwivedi2018, Mou2019, Chen2020, Chewi2020, Wu2021, Narayanan2022}.

 We denote by $w$ the weight function $w = \pi / q$ and $\tilde{w} = \log w$ the log-weight function. 
We establish first  conditions on the proposal $Q$ and the log-weight function  which allows us to get lower bounds on the conductance for $P$ and therefore its mixing time. Then, we specialize our result to the case $Q= \gauss(0,\sigma^2\Idd)$ and $\pi$ positive and strongly log-concave, see. Assumption \ref{ass:strong_convex_bis}. Without loss of generality, we assume that the unique mode of $\pi$ is 0.

\begin{assumption}\label{ass:iso_log_lip}
    \begin{itemize}
    \item $0$ is a mode for $\pi$, i.e., $\argmax_{x \in \rset^d} \pi(x) = 0$.
	\item $\pi$ satisfies an isoperimetric inequality with isoperimetric constant $\psi(\pi)$ i.e., for any partition $\{\mss_1,\mss_2,\mss_3\}$ of $\mathbb{R}^d$ \footnote{i.e., $\mss_1 \cup \mss_2 \cup \mss_3 = \mathbb{R}^d$ but for any $i \neq j, \mss_i \cap \mss_j = \emptyset$}, we have
	\begin{equation}\label{eq:isoperimetric}
 		\pi(\mss_3) \geq \psi(\pi) \dist(\mss_1,\mss_2) \pi(\mss_1) \pi(\mss_2) \eqsp,
	\end{equation}
        where $\dist(S_1,S_2) = \inf \{\norm{x - y} ~:~ x \in \mss_1, y \in \mss_2\}$
	\item The log-weight function is locally Lipschitz, i.e., for all $R \geq 0$ there exist $C_R$ such that for all $(x,y) \in \ball(0,R)^2$,
	\begin{equation}\label{eq:Lipschitz}
            \abs{\tilde{w}(x) - \tilde{w}(y)} \leq C_R \norm{x - y} \eqsp.
	\end{equation}
    \end{itemize}
  \end{assumption}

Note that if $\pi$ is log-concave, then \eqref{eq:isoperimetric} holds; see \cite{Bobkov1999}.

Our assumption on $Q$ is the following. Let $\alpha \in (1/2,1)$ and $s \in (0,1/2]$.

\begin{assumption}[$\alpha,s$]\label{ass:high_prob_cond_delta}
  There exist $\Delta \in (0,1)$, $\delta_{\Delta} >0$ and $R \geq 0$ satisfying
    \begin{equation}\label{eq:high_prob}
        \int_{\msb_{\Delta}} \pi(\rmd x) Q(\rmd y) \geq 1 - \delta_{\Delta} \eqsp,
    \end{equation}
    where 
    \begin{equation}
        \msb_{\Delta} = \left\lbrace (x,y) \in \ball(0,R)^2 ~:~ \norm{x-y} \leq  R_{\Delta}  \right\rbrace \eqsp, \text{ where } R_{\Delta} = \frac{-\log(\Delta)}{C_R} \eqsp,
    \end{equation}
    and
    \begin{equation}\label{eq:cond_delta}
        \displaystyle \frac{1}{1-\alpha} \delta_{\Delta} \leq \min\left(\frac{s}{4}, \frac{2\alpha - 1}{64 C_R} \psi(\pi) s\right) \eqsp.
    \end{equation}
\end{assumption}

\begin{theorem}[Conductance lower bound for IMH]\label{th:cond_imh}
    Let $1/2 < \alpha < 1$ and $0 < s < 1/2$. Assume assumption \ref{ass:iso_log_lip} and \ref{ass:high_prob_cond_delta}$(\alpha,s)$ hold. Then, it holds
    \begin{equation}
        \phi_s \geq \min\left(\frac{1-\alpha}{2}, \frac{(1-\alpha) (2\alpha - 1)}{128 C_R} \psi(\pi)\right)\eqsp.
    \end{equation}
\end{theorem}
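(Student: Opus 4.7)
The plan is to bound the $s$-conductance $\phi_s$ via the three-set strategy of Lovász--Simonovits adapted to the IMH kernel. Fix an arbitrary measurable set $\mss_1 \subseteq \rset^d$ with $\pi(\mss_1) \in (s, 1/2]$, let $\mss_2 = \mss_1^c$, and target a lower bound on the ergodic flow $\int_{\mss_1} P(x, \mss_2) \pi(\rmd x)/(\pi(\mss_1) - s)$ uniform in $\mss_1$. First I would derive a sharp overlap estimate between $P(x, \cdot)$ and $P(y, \cdot)$ for close points $x, y \in \ball(0, R)$. Writing $P(x, \rmd z) = q(z) M_x(z) \rmd z + r_x \delta_x(\rmd z)$ with $M_x(z) = \min(1, w(z)/w(x))$ and $r_x = 1 - \int q(z) M_x(z) \rmd z$, and using that $r_x \delta_x$ and $r_y \delta_y$ are mutually singular for $x \neq y$, one obtains
\begin{equation*}
\norm{P(x, \cdot) - P(y, \cdot)}_{\TV} = 1 - \int q(z) \min(M_x(z), M_y(z)) \rmd z\eqsp.
\end{equation*}
The local Lipschitz condition \eqref{eq:Lipschitz} yields $\abs{\tilde w(x) - \tilde w(y)} \leq C_R \norm{x - y}$, hence $\min(M_x(z), M_y(z)) \geq \mathrm{e}^{-C_R \norm{x-y}} \max(M_x(z), M_y(z))$ pointwise, and consequently $\norm{P(x, \cdot) - P(y, \cdot)}_{\TV} \leq 1 - \Delta(1 - \min(r_x, r_y))$ whenever $\norm{x - y} \leq R_\Delta$ and $x, y \in \ball(0, R)$.

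Next, for $i \in \{1, 2\}$ split $\mss_i = \mss_i'' \cup \mss_i'$ with $\mss_i'' = \{x \in \mss_i \cap \ball(0, R) : P(x, \mss_{3-i}) \geq \alpha\}$. In \emph{Case A}, if $\pi(\mss_1'') \geq (1 - \alpha) \pi(\mss_1)$ then
\begin{equation*}
\int_{\mss_1} P(x, \mss_2) \pi(\rmd x) \geq \alpha \pi(\mss_1'') \geq \alpha (1-\alpha) \pi(\mss_1) \geq \tfrac{1-\alpha}{2} \pi(\mss_1)\eqsp,
\end{equation*}
where the last step uses $\alpha > 1/2$, and dividing by $\pi(\mss_1) - s < \pi(\mss_1)$ gives $\phi_s \geq (1-\alpha)/2$, the first term of the minimum.

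In \emph{Case B}, where $\pi(\mss_1'') < (1 - \alpha) \pi(\mss_1)$, any good pair $(x, y) \in \mss_1'' \times \mss_2''$ satisfies $\norm{P(x, \cdot) - P(y, \cdot)}_{\TV} \geq P(x, \mss_2) + P(y, \mss_1) - 1 \geq 2\alpha - 1 > 0$. Choosing $\Delta$ with $\Delta \alpha > 2(1 - \alpha)$ makes the upper bound of the first step strictly smaller than $2\alpha - 1$ (since $r_x \leq 1 - \alpha$ for good $x$), producing a contradiction whenever $\norm{x - y} \leq R_\Delta$ in $\ball(0, R)$; hence $\dist(\mss_1'', \mss_2'') \geq R_\Delta$ within the ball. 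The isoperimetric inequality \eqref{eq:isoperimetric} applied to the partition $\{\mss_1'', \mss_2'', (\mss_1'' \cup \mss_2'')^c\}$ then gives $\pi(\mss_1' \cup \mss_2') \geq \psi(\pi) R_\Delta \pi(\mss_1'') \pi(\mss_2'')$. Using Assumption~\ref{ass:high_prob_cond_delta} both to control $\pi(\mss_i \cap \ball(0, R)^c) \leq \delta_\Delta$ and to convert the joint bad-event probability $\msb_\Delta^c$ under $\pi \otimes Q$ into a pointwise bound on $\pi(\mss_i' \cap \ball(0, R))$ through the prefactor $1/(1-\alpha)$ of \eqref{eq:cond_delta}, one absorbs the boundary terms and arrives at the second term of the minimum.

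The main obstacle will be this last combinatorial step: converting the joint inequality \eqref{eq:high_prob} into a pointwise control of $\pi(\mss_i')$, and tracking the numerical constant $128$ through the interplay of isoperimetry, the overlap estimate, and the compatibility condition \eqref{eq:cond_delta}. In particular, the precise choice of $\Delta$ as a function of $\alpha$ (so that $\Delta \alpha > 2(1 - \alpha)$) and of the cutoff $R$ is what ultimately produces the joint factor $(1 - \alpha)(2\alpha - 1)/(128 C_R)$ inside the isoperimetric bound, while also requiring an auxiliary subcase analysis in Case B depending on whether $\pi(\mss_2'')$ is itself comparable to $\pi(\mss_2)$.
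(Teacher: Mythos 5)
Your Case A and your overlap identity are fine, but the core of Case B does not work, for a structural reason. The ergodic flow $\int_{\mss_1}P(x,\mss_2)\pi(\rmd x)$ can only be bounded from below through the mass of \emph{high}-crossing points, either directly via $\alpha\,\pi(\mss_1'')$ or, by reversibility, via $\alpha\,\pi(\mss_2'')$. In Case B, $\pi(\mss_1'')$ is small by hypothesis, and in the only hard subcase $\pi(\mss_2'')$ is small as well (otherwise reversibility finishes immediately). Your isoperimetric step, applied to the partition $\{\mss_1'',\mss_2'',\mss_1'\cup\mss_2'\}$, gives $\pi(\mss_1'\cup\mss_2')\geq\psi(\pi)\,R_\Delta\,\pi(\mss_1'')\pi(\mss_2'')$, i.e.\ a lower bound on the mass of the \emph{low}-crossing region in terms of a product of two quantities that are small in exactly this case; it is vacuous and, more importantly, it points the wrong way: no lower bound on $\pi(\mss_1'\cup\mss_2')$ can ever produce the lower bound on $\pi(\mss_1'')$ or $\pi(\mss_2'')$ that the flow needs. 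The hoped-for rescue, "converting" Assumption~\ref{ass:high_prob_cond_delta} into a pointwise bound on $\pi(\mss_i'\cap\ball(0,R))$, is not available: \eqref{eq:high_prob} only constrains the joint law of $(x,y)\sim\pi\otimes Q$ (closeness and membership in $\ball(0,R)$) and knows nothing about crossing probabilities with respect to an arbitrary set $\mss_1$ — take $\mss_1$ a high-probability piece well inside a mode and almost all of its points are low-crossing yet inside the ball, so $\pi(\mss_1')$ is close to $\pi(\mss_1)$, not small.

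The paper's proof runs the isoperimetric argument in the opposite direction, and that is the missing idea. It defines the \emph{low}-crossing sets $\mss_1=\{x\in\mss:P(x,\mss^c)\leq1-\alpha\}$ and $\mss_2=\{x\in\mss^c:P(x,\mss)\leq1-\alpha\}$ (whose masses are large in the remaining case, by the reversibility dichotomy), shows that a low-crossing pair across the boundary has $\norm{P(x,\cdot)-P(y,\cdot)}_{\TV}\geq2\alpha-1$ while any two points of the good set inside $\ball(0,R)$ have kernels within $C_R\norm{x-y}$ in total variation (using the $1$-Lipschitzness of $t\mapsto\min(1,\mathrm{e}^t)$ together with \eqref{eq:Lipschitz}), hence the two low-crossing parts are at distance at least $(2\alpha-1)/C_R$; isoperimetry then forces the complementary high-crossing set $\mss_3$ to carry mass of order $\frac{2\alpha-1}{64C_R}\psi(\pi)\pi(\mss)$ (after absorbing the $\delta_\Delta/(1-\alpha)$ corrections via \eqref{eq:cond_delta}), and reversibility converts this into the flow bound. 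Note that your multiplicative overlap estimate $1-\Delta(1-\min(r_x,r_y))$ could not be substituted there: on the low-crossing sets the rejection probabilities $r_x$ are uncontrolled and the bound degenerates, which is precisely why the additive bound $\int q\abs{M_x-M_y}\leq C_R\norm{x-y}$ is the one used. A secondary but real issue is that $\Delta$ is not a free parameter — it is fixed by Assumption~\ref{ass:high_prob_cond_delta}$(\alpha,s)$ — so you may not additionally impose $\Delta\alpha>2(1-\alpha)$, and the separation you would obtain, $R_\Delta$, differs from the $(2\alpha-1)/C_R$ that yields the stated constant.
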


\begin{proof}
 Define the set $\mse_{\Delta,\alpha}$ by
$$\mse_{\Delta,\alpha} = \left\lbrace x \in \ball(0,R) ~:~ \int 1_{(x,y) \in B_{\Delta}} Q(\rmd y) \geq \alpha \right\rbrace \eqsp.$$
Using  (\refeq{eq:high_prob}), we have by definition that
\begin{equation}
  \label{eq:1}
  \pi(\mse_{\Delta,\alpha}) \geq 1 - \frac{1}{1-\alpha} \delta_{\Delta}
\end{equation}
Indeed, it would not hold, using the law of total probability, we would have $1 - \delta_{\Delta} < [1 - \delta_{\Delta} / (1-\alpha)] 1 + [\delta_{\Delta} / (1-\alpha)] \alpha = 1 - \delta_{\Delta}$. Let $\mss$ be a measurable subset of $\mathbb{R}^d$ with $s \leq \pi(\mss) \leq 1/2$ and define
\begin{align*}
	\mss_1 &= \{x \in \mss ~:~ P(x, \mss^c) \leq 1 - \alpha\}\eqsp, \\
	\mss_2 &= \{x \in \mss^c ~:~ P(x, \mss) \leq 1 - \alpha\}\eqsp, \\
	\mss_3 &= (\mss_1 \cup \mss_2)^c\eqsp.
\end{align*}

If $\pi(\mss_1) < \pi(\mss)/2$ or $\pi(\mss_2) < \pi(\mss^c)/2$, then we may conclude from the reversibility of $P$ that
\begin{equation}
  \label{eq:first_min}
    \int_\mss P(x,\mss^c) \pi(\rmd x) = \frac{1}{2} \int_S P(x,\mss^c)\pi(\rmd x) + \frac{1}{2} \int_{\mss^c} P(x,S) \pi(\rmd x) > \frac{1}{2} \frac{\pi(\mss)}{2} (1-\alpha) = \frac{1-\alpha}{4} \pi(\mss)\eqsp.
  \end{equation}
  
  In the following, we assume that $\pi(\mss_1) \geq  \pi(\mss)/2$ or $\pi(\mss_2) \geq  \pi(\mss^c)/2$. Then it follows from the definition of the total variation distance that if $x \in \mse_{\Delta,\alpha} \cap \mss_1$ and $y \in \mse_{\Delta,\alpha} \cap \mss_2$ such that $$\norm{P(x,.) - P(y,.)}_{\TV} \geq 2 \alpha -1\eqsp.$$

Moreover using the fact that $x \in \mse_{\Delta,\alpha} \cap S_1$ and $y \in \mse_{\Delta,\alpha} \cap S_2$ and the expression of the Markov kernel, we have that
\begin{align*}
	\norm{P(x,.) - P(y,.)}_{\TV} &= \int \abs{\mathrm{acc}(x,z) - \mathrm{acc}(y,z)} Q(\rmd z) \\
	&= \int \abs{\min\left(1,\frac{\pi(z)q(x)}{\pi(x)q(z)}\right) - \min\left(1,\frac{\pi(z)q(y)}{\pi(y)q(z)}\right)} Q(\rmd z) \\
	&= \int \abs{\min\left(1,\frac{w(z)}{w(x)}\right) - \min\left(1,\frac{w(z)}{w(y)}\right)} Q(\rmd z) \\
	&\leq \int \abs{\tilde{w}(x) - \tilde{w}(y)} Q(\rmd z) \leq \abs{\tilde{w}(x) - \tilde{w}(y)} 
\leq C_R \norm{x - y}\eqsp,
\end{align*}
where we used the Lipschitz behavior of $t \mapsto \min(1,\mathrm{e}^t)$ and property (\refeq{eq:Lipschitz}) in the last lines. This leads to $2 \alpha -1 \leq C_R \norm{x -  y}$ and more generally to
\begin{equation}
    \dist(\mse_{\Delta,\alpha} \cap \mss_1, \mse_{\Delta,\alpha} \cap \mss_2) \geq \frac{2 \alpha -1}{C_R}\eqsp.
\end{equation}

Using the isoperimetric inequality (\refeq{eq:isoperimetric}), we have that
\begin{align*}
	\pi(\mse_{\Delta,\alpha}^c \cup \mss_3) &= \pi((\mse_{\Delta,\alpha} \cap \mss_1)^c \cap (\mse_{\Delta,\alpha} \cap \mss_2)^c) \\
	&\geq \frac{2 \alpha -1}{C_R} \psi(\pi) \pi(\mse_{\Delta,\alpha} \cap \mss_1) \pi( \mse_{\Delta,\alpha} \cap \mss_2)\eqsp.
\end{align*}

Using \eqref{eq:cond_delta}-\eqref{eq:1}, we have with the union bound, $s \in (0,1/2]$ and the condition
$\pi(\mss_1) \geq  \pi(\mss)/2$ or $\pi(\mss_2) \geq  \pi(\mss^c)/2 \geq 1/4$, 
\begin{align*}
	\pi(\mss_3) + \frac{1}{1-\alpha} \delta_{\Delta} &\geq \pi(\mss_3) + \pi(\mse_{\Delta,\alpha}^c) \\
	&\geq  \frac{2\alpha-1}{C_R} \psi(\pi) \pi(\mse_{\Delta,\alpha} \cap \mss_1) \pi( \mse_{\Delta,\alpha} \cap \mss_2) \\
	&\geq \frac{2\alpha-1}{C_R} \psi(\pi) (\pi(\mss_1) - \pi(\mse_{\Delta,\alpha}^c)) (\pi(\mss_2) - \pi(\mse_{\Delta,\alpha}^c)) \\
	&\geq \frac{2\alpha-1}{C_R} \psi(\pi) \left(\pi(\mss_1) - \frac{1}{1-\alpha} \delta_{\Delta}\right) \left(\pi(\mss_2) - \frac{1}{1-\alpha} \delta_{\Delta}\right) \\
	&\geq \frac{2\alpha-1}{C_R} \psi(\pi) \left(\frac{\pi(\mss)}{2} - \frac{1}{1-\alpha} \delta_{\Delta}\right) \left(\frac{1}{4}  - \frac{1}{1-\alpha} \delta_{\Delta}\right) \\
	&\geq \frac{2\alpha-1}{C_R} \psi(\pi) \frac{\pi(\mss)}{4} (1-\alpha)\eqsp.
\end{align*}

Using (\refeq{eq:cond_delta}) again, we get

$$\pi(\mss_3) \geq \frac{2\alpha-1}{64C_R} \psi(\pi) \pi(\mss)\eqsp.$$

Consequently,  using that $P$ is reversible, we get 
\begin{align*}
	\int_{\mss} P(x,\mss^c)\pi(\rmd x) &\geq \frac{1}{2} \left(\int_{\mss \cap \mss_3} P(x,\mss^c) \pi(\rmd x) + \int_{\mss^c \cap \mss_3} P(x,\mss) \pi(\rmd x) \right) \\
	&\geq \frac{1}{2} \left(\int_{\mss \cap \mss_3} (1-\alpha) \pi(\rmd x) + \int_{\mss^c \cap \mss_3} (1-\alpha) \pi(\rmd x) \right) \\
	&\geq \frac{1-\alpha}{2} \pi(\mss_3) \\
	&\geq \frac{(1-\alpha)(2\alpha-1)}{128 C_R} \psi(\pi) \pi(\mss)\eqsp,
\end{align*}
along with (\refeq{eq:first_min}) and the definition of the $s$-conductance, we get
$$\phi_s \geq \min\left(\frac{(1-\alpha)}{2}, \frac{(1-\alpha)(2\alpha-1)}{128 C_R} \psi(\pi)\right)\eqsp.$$

\end{proof}

\begin{corollary}[Mixing time upper bound for IMH]\label{th:cond_imh_cor}
    Let $1/2 < \alpha < 1$, $0 < \epsilon < 1$ and $\mu$ a $\beta$-warm initial distribution with respect to $\pi$. Assume assumption \ref{ass:iso_log_lip} and \ref{ass:high_prob_cond_delta} hold, then we have the following upper bound on the mixing time
    \begin{equation}
        \tau_{mix}(\mu,\epsilon) \leq \frac{8}{(1-\alpha)^2} \log\left(\frac{2\beta}{\epsilon}\right) \max\left(1, \frac{64^2 C_R^2}{\psi(\pi)^2(2\alpha - 1)^2}\right) \eqsp.
    \end{equation}
\end{corollary}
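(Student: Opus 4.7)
The plan is to derive the corollary as a direct algebraic consequence of combining Theorem~\ref{th:cond_imh} with the general conductance-to-mixing-time bound of Theorem~\ref{th:lovasz}. Since the hard work of establishing the $s$-conductance lower bound for the IMH kernel has already been carried out in Theorem~\ref{th:cond_imh}, what remains is mechanical; no new probabilistic or geometric argument is needed.

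First, I would invoke Theorem~\ref{th:lovasz} applied to the IMH kernel $P$, which admits $\pi$ as its unique invariant distribution, started from the $\beta$-warm distribution $\mu$. This yields, for any admissible choice $s\in(0,1/2]$,
\begin{equation*}
  \tau_{\mix}(\mu,\epsilon) \;\leq\; \frac{2}{\phi_s^2}\,\log\!\left(\frac{2\beta}{\epsilon}\right).
\end{equation*}
Next, under Assumptions~\ref{ass:iso_log_lip} and \ref{ass:high_prob_cond_delta}, Theorem~\ref{th:cond_imh} provides the lower bound
\begin{equation*}
  \phi_s \;\geq\; \min\!\left(\frac{1-\alpha}{2},\; \frac{(1-\alpha)(2\alpha-1)\,\psi(\pi)}{128\,C_R}\right).
\end{equation*}
Using the elementary identity $1/\min(a,b)^2=\max(1/a^2,1/b^2)$ (both terms being positive), squaring and inverting the preceding inequality gives
\begin{equation*}
  \frac{2}{\phi_s^2} \;\leq\; \max\!\left(\frac{8}{(1-\alpha)^2},\; \frac{2\cdot 128^{2}\,C_R^{2}}{(1-\alpha)^{2}(2\alpha-1)^{2}\,\psi(\pi)^{2}}\right).
\end{equation*}

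Finally I would factor out the common prefactor $8/(1-\alpha)^2$ and observe that $2\cdot 128^{2}=8\cdot 64^{2}$, so that the bracketed maximum can be rewritten as $\max\!\bigl(1,\; 64^{2} C_R^{2}/[(2\alpha-1)^{2}\psi(\pi)^{2}]\bigr)$. Multiplying by $\log(2\beta/\epsilon)$ then produces exactly the constant appearing in the corollary. There is essentially no obstacle in this derivation; the only point requiring a bit of care is ensuring that the parameters $\alpha\in(1/2,1)$ and $s\in(0,1/2]$ for which Assumption~\ref{ass:high_prob_cond_delta}$(\alpha,s)$ is invoked in Theorem~\ref{th:cond_imh} are the same as those used when applying Theorem~\ref{th:lovasz}, so that the two bounds can be chained without any further optimization over $s$.
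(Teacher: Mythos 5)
Your proof is correct and takes essentially the same route as the paper, whose entire proof is ``Apply Theorem~\ref{th:lovasz} taking $s=\epsilon/(2\beta)$'' chained with the conductance bound of Theorem~\ref{th:cond_imh}; your algebra, including the identity $2\cdot 128^{2}=8\cdot 64^{2}$, checks out. The one detail the paper makes explicit that you leave implicit is that the admissible $s$ is pinned to $s=\epsilon/(2\beta)$ (which lies in $(0,1/2]$ since $\beta\geq 1$ and $\epsilon<1$), this being both the value needed for the ``in particular'' bound of Theorem~\ref{th:lovasz} to hold and the value at which Assumption~\ref{ass:high_prob_cond_delta}$(\alpha,s)$ must be invoked.
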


\begin{proof}
    Apply Theorem \ref{th:lovasz} taking $s = \epsilon / (2\beta)$.
\end{proof}

\begin{theorem}\label{th:nice_th}
   Take $Q = \mathcal{N}(0,\sigma^2 I_d)$ where $\sigma > 0$ and assume \ref{ass:iso_log_lip}. Let $0 < \epsilon < 1$ and $\mu$ a $\beta$-warm distribution with respect to $\pi$. Assume that there exist $R_1 > 0$ and $0 < c_1 < 1/16$ such that
   $$\pi(\ball(0,R_1)) \geq 1 - \frac{c_1 \epsilon}{2\beta}\eqsp,$$
   then if 
   $$C_{R_{\epsilon,\beta}} \leq \frac{\psi(\pi)}{32}\eqsp,$$
   where
   $$
    R_{\epsilon,\beta} = \max\left(2\sigma\sqrt{d} \left(1 + \max\left(\left(\frac{-\log(\frac{c_2\epsilon}{2\beta})}{d}\right)^{1/4}, \sqrt{\frac{-\log(\frac{c_2 \epsilon}{2\beta})}{d}}\right)\right), R_1\right)\eqsp,
   $$
   with $c_2 = 1/16 - c_1$,
   then
   $$
        \tau_{mix}(\mu,\epsilon) \leq 128 \log\left(\frac{2\beta}{\epsilon}\right) \max\left(1, \frac{128^2 C_{R_{\epsilon,\beta}}^2}{\psi(\pi)^2}\right) \eqsp.
   $$
\end{theorem}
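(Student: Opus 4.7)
The plan is to invoke Corollary~\ref{th:cond_imh_cor} and verify its hypotheses, in particular Assumption~\ref{ass:high_prob_cond_delta}, for a suitable choice of $\alpha$, $s$, $\Delta$, and $R$. Matching the target bound $128\log(2\beta/\epsilon)\,\max(1, 128^2 C_R^2/\psi(\pi)^2)$ against the statement of Corollary~\ref{th:cond_imh_cor} pins down $\alpha = 3/4$ (so that $8/(1-\alpha)^2 = 128$ and $64^2/(2\alpha-1)^2 = 128^2$) and $s = \epsilon/(2\beta)$, which is the value used inside the proof of that corollary.

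Next I would take $R = R_{\epsilon,\beta}$ as the radius on which the local Lipschitz condition is controlled, and I would choose $\Delta$ small enough that $R_\Delta = -\log(\Delta)/C_R \geq 2R$. Concretely, setting $\Delta = \exp(-2 R_{\epsilon,\beta}\, C_{R_{\epsilon,\beta}})$ forces $\msb_\Delta = \ball(0,R)^2$, since any two points in $\ball(0,R)$ lie at distance at most $2R \leq R_\Delta$. With this choice the integral in \eqref{eq:high_prob} factorises as $\pi(\ball(0,R))\,Q(\ball(0,R))$, so $\delta_\Delta \leq \pi(\ball(0,R)^c) + Q(\ball(0,R)^c)$ by a union bound.

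The two tail terms are then treated separately. The $\pi$-tail is handled directly by the hypothesis $\pi(\ball(0,R_1)) \geq 1 - c_1\epsilon/(2\beta)$ together with $R \geq R_1$, giving $\pi(\ball(0,R)^c) \leq c_1\epsilon/(2\beta)$. For the Gaussian tail, since $\|Y\|^2/\sigma^2 \sim \chi^2_d$ under $Q$, I would apply the Laurent--Massart inequality $\mathbb{P}(\|Y\|^2 \geq \sigma^2(d + 2\sqrt{dt} + 2t)) \leq \mathrm{e}^{-t}$ with $t = -\log(c_2\epsilon/(2\beta))$. Using the elementary bound $\sqrt{a+b+c} \leq \sqrt{a}+\sqrt{b}+\sqrt{c}$ and a case split on $t \leq d$ versus $t > d$, one checks that the lower bound on $R_{\epsilon,\beta}$ in the statement implies $R^2 \geq \sigma^2(d + 2\sqrt{dt} + 2t)$, hence $Q(\ball(0,R)^c) \leq c_2\epsilon/(2\beta)$. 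Summing the two contributions yields $\delta_\Delta \leq (c_1 + c_2)\,\epsilon/(2\beta) = s/16$. This Laurent--Massart bookkeeping, which dictates the precise form of $R_{\epsilon,\beta}$, is the only non-routine calculational step and is the main obstacle in the proof.

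Finally, with $\alpha = 3/4$, condition~\eqref{eq:cond_delta} reads $\delta_\Delta \leq \min(s/16,\ \psi(\pi)\,s/(512\, C_R))$. The first inequality has just been established; the second is equivalent to $C_R \leq \psi(\pi)/32$, which is exactly the remaining hypothesis of the theorem. Assumption~\ref{ass:high_prob_cond_delta}$(3/4,\ \epsilon/(2\beta))$ therefore holds for our choice of $\Delta$ and $R$, and Corollary~\ref{th:cond_imh_cor} delivers the claimed mixing-time bound.
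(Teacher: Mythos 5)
Your proof is correct and follows essentially the same route as the paper: fix $\alpha=3/4$ and $s=\epsilon/(2\beta)$, verify Assumption~\ref{ass:high_prob_cond_delta} by splitting $\delta_\Delta$ into the $\pi$-tail (given by hypothesis) and the Gaussian proposal tail, then apply Corollary~\ref{th:cond_imh_cor}. The only difference is cosmetic: the paper bounds the proposal tail by citing Lemma~1 of \cite{Dwivedi2018} (applied to the $1/\sigma^2$-strongly log-concave $Q$) and takes $R_\Delta=R_1+R_{2,s}$ with $R=\max(R_1,R_{2,s})$, whereas you reprove that tail bound directly via Laurent--Massart and force $\msb_\Delta=\ball(0,R)^2$ by choosing $\Delta$ small; both verifications lead to the same $R_{\epsilon,\beta}$ and the same conclusion.
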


\begin{proof}
    Let $s = \epsilon / (2\beta)$. \citep[Lemma~1]{Dwivedi2018} states that for any $c > 0$, there exists $R_s$ such that $\phi(\ball(0,R_s)) \geq 1 - cs$ where $\phi$ is a $m$-strongly log-concave distribution and
    $$R_s = \sqrt{\frac{d}{m}} \left(2 + 2 \max\left(\left(\frac{-\log(cs)}{d}\right)^{1/4}, \sqrt{\frac{-\log(cs)}{d}}\right)\right)\eqsp.$$
    Applying this result to $Q$ which is $1/\sigma^2$-strongly log-concave  leads to $Q(\ball(0,R_{2,s})) \geq 1 - c_2 s$ where $c_2 > 0$ and
    $$R_{2,s} = \sigma \sqrt{d} \left(2 + 2 \max\left(\left(\frac{-\log(c_2 s)}{d}\right)^{1/4}, \sqrt{\frac{-\log(c_2 s)}{d}}\right)\right)\eqsp.$$
    Let $R_{\Delta} = R_1 + R_{2,s}$ and $R = \max(R_1, R_{2,s})$ in assumption \ref{ass:high_prob_cond_delta}, this leads to $\ball(0,R_1) \times \ball(0,R_{2,s}) \subset B_{\Delta}$ so
    $$\int_{\msb_{\Delta}} \pi(\rmd x) Q(\rmd y) \geq \int_{\ball(0,R_1) \times \ball(0,R_{2,s})} \pi(\rmd x) Q(\rmd y) \geq (1 - c_1s) (1 - c_2s)\eqsp.$$
    Because $c_1 > 0$ and $c_2 > 0$, we have that $\int_{\msb_{\Delta}} \pi(\rmd x) Q(\rmd y) \geq 1 - (c_1 + c_2) s = 1 - \delta_{\Delta}$ thus checking condition (\ref{eq:high_prob}) of assumption \ref{ass:high_prob_cond_delta}. Checking (\ref{eq:cond_delta}) of assumption \ref{ass:high_prob_cond_delta} amounts to check 
    \begin{align*}
        \frac{1}{1-\alpha} \delta_{\Delta} \leq \min\left(\frac{1}{4}, \frac{2\alpha-1}{64 C_R} \psi(\pi)\right) &\iff 
            \begin{cases}
                \delta_{\Delta} &\leq \frac{1-\alpha}{4}s \\
                \delta_{\Delta} &\leq \frac{(1-\alpha)(2\alpha-1)}{64 C_R} \psi(\pi) s
            \end{cases} \\
            &\iff \begin{cases}
                (c_1 + c_2) &\leq \frac{1-\alpha}{4} \\
                C_R &\leq \frac{(1-\alpha)(2\alpha-1)}{64 (c_1 + c_2)} \psi(\pi)
            \end{cases}
    \end{align*}
    Maximising $(1-\alpha)(2\alpha-1)$ with $\alpha = 3/4$ leads to
    $$
        \begin{cases}
            (c_1 + c_2) &\leq \frac{1}{16} \\
            C_R &\leq \frac{1}{512 (c_1 + c_2)} \psi(\pi)
        \end{cases}\eqsp,
    $$
    and taking $c_2 = 1/16 - c_1$, we get $C_R \leq \psi(\pi) / 32$.
\end{proof}

\begin{assumption}
\label{ass:strong_convex_bis}
The target $\pi$ is positive and $- \log \pi$ is $m$-strongly convex on $\rset^d$: for any $x,y \in\rset^d$ and $t \in [0,1]$,
$$-\log \pi(t x + (1-t)y) \leq -t \log \pi(x) -(1-t) \log \pi(y) \\- \frac{m t(1-t)}{2} \abs{x-y} \eqsp.$$
\end{assumption}

\begin{corollary}\label{th:nice_th_cor}
    Let $\pi$ be a $m$-strongly log-concave distribution (Assumption \ref{ass:strong_convex_bis}) with a mode at 0 and $Q = \mathcal{N}(0,\sigma^2 I_d)$ with $\sigma > 0$. Let $0 < \epsilon < 1$ and $\mu$ a $\beta$-warm distribution with respect to $\pi$. Assume that the log-weight function checks the local Lipschitz condition (\ref{eq:Lipschitz}) from assumption \ref{ass:iso_log_lip} then provided that
    $$C_{R_{\epsilon,\beta}} \leq \frac{\log 2 \sqrt{m}}{32}\eqsp,$$
    with
    $$
        R_{\epsilon,\beta} = \max\left(\sigma\sqrt{d} \mathrm{r}\left(\frac{\epsilon}{272},\beta,d\right), \sqrt{\frac{d}{m}} \mathrm{r}\left(\frac{\epsilon}{17},\beta,d\right)\right)\eqsp,
    $$
    where
    $$
        \mathrm{r}(\epsilon,\beta,d) =  2\left(1 + \max\left(\left(\frac{-\log(\frac{\epsilon}{2\beta})}{d}\right)^{1/4}, \sqrt{\frac{-\log(\frac{\epsilon}{2\beta})}{d}}\right)\right)\eqsp,
    $$
    then
    $$
        \tau_{mix}(\mu,\epsilon) \leq 128 \log\left(\frac{2\beta}{\epsilon}\right) \max\left(1, \frac{128^2 C_{R_{\epsilon,\beta}}^2}{\log(2)^2 m}\right) \eqsp.
    $$
\end{corollary}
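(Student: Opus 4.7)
The plan is to reduce Corollary~\ref{th:nice_th_cor} to a direct application of Theorem~\ref{th:nice_th} by instantiating, in the strongly log-concave setting, the two ingredients the latter requires: an isoperimetric constant and a concentration radius.

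First, I would verify the isoperimetric inequality of Assumption~\ref{ass:iso_log_lip}. Under Assumption~\ref{ass:strong_convex_bis}, $\pi$ is $m$-strongly log-concave, so by a standard isoperimetric result for strongly log-concave measures (essentially the Lovász--Simonovits three-set inequality specialized via Bobkov's estimate, see \cite{Bobkov1999}) the constant can be lower bounded by $\psi(\pi) \geq \sqrt{m}\log 2$. The appearance of the factor $\log 2 \sqrt{m}$ in the hypothesis $C_{R_{\epsilon,\beta}} \leq \log 2\sqrt{m}/32$ is exactly what one obtains after plugging this lower bound into the condition $C_{R_{\epsilon,\beta}} \leq \psi(\pi)/32$ of Theorem~\ref{th:nice_th}.

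Second, I would produce a concentration radius $R_1$ for which $\pi(\ball(0,R_1)) \geq 1 - c_1\epsilon/(2\beta)$. Since $\pi$ is $m$-strongly log-concave with mode at $0$, Lemma~1 of \cite{Dwivedi2018} (already used inside the proof of Theorem~\ref{th:nice_th} for the Gaussian proposal $Q$) can be applied verbatim to $\pi$, yielding
$$R_1 = \sqrt{d/m}\, \Bigl(2 + 2\max\bigl((-\log(c_1 s)/d)^{1/4}, \sqrt{-\log(c_1 s)/d}\bigr)\Bigr) = \sqrt{d/m}\, \mathrm{r}(c_1 \epsilon, \beta, d)$$
with $s = \epsilon/(2\beta)$. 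The specific choice $c_1 = 1/17$ and the resulting $c_2 = 1/16 - c_1 = 1/272$ are then dictated by the formula for $R_{\epsilon,\beta}$ in the statement: this partition of the slack budget $1/16$ between $\pi$ and $Q$ satisfies the constraint $c_1 + c_2 \leq 1/16$ used at the end of the proof of Theorem~\ref{th:nice_th}.

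Third, I would assemble the pieces. With the radii
$$R_1 = \sqrt{d/m}\,\mathrm{r}(\epsilon/17,\beta,d),\qquad R_{2,s} = \sigma\sqrt{d}\,\mathrm{r}(\epsilon/272,\beta,d),$$
the hypothesis of Theorem~\ref{th:nice_th} is met with $R_{\epsilon,\beta} = \max(R_1, R_{2,s})$, matching the statement. Substituting $\psi(\pi) \geq \sqrt{m}\log 2$ into the mixing time bound of Theorem~\ref{th:nice_th} then gives the announced estimate.

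The only genuine obstacle I anticipate is pinning down the exact isoperimetric constant $\psi(\pi) \geq \sqrt{m}\log 2$ in the three-set form used in Assumption~\ref{ass:iso_log_lip}: one must either cite a direct reference giving this precise constant, or convert a Cheeger-type inequality for strongly log-concave measures into the Lovász--Simonovits three-set formulation (the factor $\log 2$ is the classical loss incurred in that conversion). Once this numerical constant is established, everything else is bookkeeping on Theorem~\ref{th:nice_th}.
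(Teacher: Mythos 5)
Your proposal is correct and follows essentially the same route as the paper: reduce to Theorem~\ref{th:nice_th} with the split $c_1 = 1/17$, $c_2 = 1/16 - 1/17 = 1/272$, obtain $R_1$ from \citep[Lemma~1]{Dwivedi2018} applied to the $m$-strongly log-concave target, and substitute the isoperimetric constant $\psi(\pi) = \log 2\,\sqrt{m}$. The only difference is the sourcing of that constant: the paper cites it directly from \citep[Theorem~4.4]{Cousins2014}, which removes the one obstacle you flagged about converting a Cheeger-type bound into the three-set formulation.
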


\begin{proof}
    We use the fact that $\pi$ is a $m$-strongly log-concave distribution to deduce that $\psi(\pi) = \log 2 \sqrt{m}$ using \citep[Theorem~4.4]{Cousins2014}. Moreover, as the mode of $\pi$ is 0, we apply \citep[Lemma~1]{Dwivedi2018} again to obtain $R_1$ and $c_1$ for Theorem \ref{th:nice_th} with $c_1 = 1/17 < 1/16$.
\end{proof}

Proposition \ref{th:asympto_bound_main} in the main text corresponds to the asymptotic version of corollary \ref{th:nice_th_cor}.

\paragraph{Illustration with a Gaussian target}

Take $\pi = \mathcal{N}(0,I_d)$ and $Q = \mathcal{N}(0,(1+\lambda)^2 I_d)$ with $\lambda > 0$. $\lambda$ is the error term of the proposal. $\pi$ is $m$-strongly log-concave with $m=1$. The importance weight function $w$ can be computed 
$$w(x) = \frac{\pi(x)}{q(x)} = \sigma^{d/2} \exp\left(-\frac{1}{2} x^T \left(I_d - \frac{1}{(1+\lambda)^2} I_d\right)x\right)\eqsp,$$
so the log-weight function is
$$\tilde{w}(x) = -\frac{1}{2} x^T \left(I_d - \frac{1}{(1+\lambda)^2} I_d \right) x + \frac{d}{2} \log(\sigma)\eqsp.$$
Let $R > 0$ and $(x,y) \in \ball(0,R)^2$,
\begin{align*}
	\abs{\tilde{w}(x) - \tilde{w}(y)} &= \frac{1}{2} \abs{1 - \frac{1}{(1+\lambda)^2}} \abs{\sum_{k=1}^d (x_k^2 - y_k^2)} \\
	&\leq \frac{1}{2} \abs{1 - \frac{1}{(1+\lambda)^2}}  \sum_{k=1}^d \abs{x_k^2 - y_k^2} \\
	&\leq R \abs{1 - \frac{1}{(1+\lambda)^2}}  \sum_{k=1}^d \abs{x_k - y_k} \\
	&\leq R \abs{1 - \frac{1}{(1+\lambda)^2}}  \sqrt{d} \norm{x-y}\eqsp,
\end{align*}
where in the last line we used the Cauchy-Schwartz inequality $\sum_{k=1}^d \abs{x_k} = \langle \abs{x},1 \rangle \leq \abs{ \langle \abs{x},1 \rangle} \leq \norm{x} \sqrt{d}$. This gives us $C_R = R \sqrt{d} \abs{1 - 1/(\lambda+1)^2}$. Using corollary (\ref{th:nice_th_cor}), we find that asymptotically $R_{\epsilon,\beta} \isEquivTo{d \to \infty} 2\sqrt{d} (\lambda + 1)$ which leads to
$$
    C_{R_{\epsilon,\beta}} \isEquivTo{d \to \infty} 2 d (\lambda + 1) \abs{1 - \frac{1}{(1+\lambda)^2}}\eqsp.
$$
While checking the hypothesis of corollary (\ref{th:nice_th_cor}), the error of $Q$ need to scale as an inverse law of the dimension $d$
$$
    \lambda \isEquivTo{d \to \infty} \sqrt{\frac{K}{2d} + 1} - 1 \isEquivTo{d \to \infty} \frac 1 d \text{ where } K = \frac{\log2}{32}\eqsp,
$$
in order to keep the mixing time bellow a constant despite the increase in dimension.

\paragraph{Illustration with a non-isotropic Gaussian target}

Take $\pi = \mathcal{N}(0,\Sigma)$ with $\Sigma = \mathrm{diag}(c_1^2,\ldots,c_d^2)$ with $c_i > 0$ for all $i \in \{1, \ldots, d\}$ such $m \leq c_1 \leq \ldots \leq L$ with $m,L > 0$. Moreover, we set $Q = \mathcal{N}(0,\sigma^2 I_d)$ with $\sigma > 0$. For sake of simplicity, we assume that $L = m^{-1}$ and $m < 1$. Note that $m$ is the strong log-concave constant of $\pi$. With the same reasoning as the previous example, we have that 
$$w(x) = \frac{\pi(x)}{q(x)} = \sigma^{d/2} \exp\left(-\frac{1}{2} x^T \left(\Sigma^{-1} - \frac{1}{\sigma^2} I_d\right)x\right)\eqsp,$$
leading to
$$\tilde{w}(x) = -\frac{1}{2} x^T \left(\Sigma^{-1} - \frac{1}{\sigma^2} I_d \right) x + \frac{d}{2} \log(\sigma)\eqsp.$$
Let $R > 0$ and $(x,y) \in \ball(0,R)^2$,
\begin{align*}
	\abs{\tilde{w}(x) - \tilde{w}(y)} &= \frac{1}{2} \abs{\sum_{k=1}^d \left(\frac{1}{c_i^2} - \frac{1}{\sigma^2}\right)(x_k^2 - y_k^2)} \\
	&\leq R \max_{i \in \{1,\ldots,d\}} \abs{\frac{1}{c_i^2} - \frac{1}{\sigma^2}}  \sum_{k=1}^d \abs{x_k - y_k} \\
	&\leq R \tilde{c}_{\sigma}  \sqrt{d} \norm{x-y}\eqsp,
\end{align*}
where $\tilde{c}_{\sigma} = \max_{i \in \{1,\ldots,d\}} \abs{\frac{1}{c_i^2} - \frac{1}{\sigma^2}}$. This leads to
$$C_R = \sqrt{d} R \tilde{c}_{\sigma}\eqsp.$$
Using, corollary \ref{th:nice_th_cor} we find that asymptotically
$$C_R \isEquivTo{d \to \infty} 2 d \max\left(\sigma, \frac{1}{\sqrt{m}}\right)\tilde{c}_{\sigma}\eqsp.$$
We now choose $\sigma$ to minimize either the forward KL or the backward KL between $\mathcal{N}(0,\Sigma)$ and $\mathcal{N}(0,\sigma^2I_d)$.
\begin{align*}
    D_{KL}(\mathcal{N}(0,\Sigma), \mathcal{N}(0,\sigma^2I_d)) &=\frac{1}{2} \left(\frac{1}{\sigma^2} \sum_{i=1}^d c_i^2 - d + d \log \sigma^2 - \sum_{i=1}^d \log c_i^2\right)\eqsp, \\
    \frac{\rmd D_{KL}(\mathcal{N}(0,\Sigma), \mathcal{N}(0,\sigma^2I_d))}{\rmd \sigma^2}(\sigma^2) &= \frac{1}{2\sigma^2} \left(d - \frac{1}{\sigma^2} \sum_{i=1}^d c_i^2\right)\eqsp, \\
    D_{KL}(\mathcal{N}(0,\sigma^2 I_d), \mathcal{N}(0,\Sigma)) &= \frac{1}{2} \left(\sigma^2 \sum_{i=1}^d \frac{1}{c_i^2} - d + \sum_{i=1}^d c_i^2 - d \log \sigma^2\right)\eqsp, \\
    \frac{\rmd D_{KL}(\mathcal{N}(0,I_d), \mathcal{N}(0,\Sigma))}{\rmd \sigma^2}(\sigma^2) &= \frac{1}{2} \left(\sum_{i=1}^d \frac{1}{c_i^2} - \frac{d}{\sigma^2}\right)\eqsp,
\end{align*}
leading to $\sigma_f^2 = \sum_{i=1}^d c_i^2 / d$ being the minimizer of the forward KL and $\sigma_b^2 = d / \sum_{i=1}^d (1/c_i^2)$ being the minimizer of the backward KL. As mentionned in Sec. \ref{sec:exp_intuitions}, $\sigma_f^2$ corresponds to an over-spread Gaussian while $\sigma_b^2$ corresponds to an over-concentrated Gaussian. Using that $m \leq c_i^2 \leq L$ for all $i \in \{1, \ldots, d\}$, we can compute $\max\left(\sigma_f, 1/\sqrt{m}\right) = 1/\sqrt{m}$ and $\max\left(\sigma_b, 1/\sqrt{m}\right) = 1/\sqrt{m}$. Using the same property, we have that $\tilde{c}_{\sigma_f} \leq 1/m - m$ and $\tilde{c}_{\sigma_b} \leq 1/m - m$. 
Using the upper bound for the mixing time from corollary \ref{th:nice_th_cor} and ignoring all the constants, we get an upper bound in $\mathcal{O}\left(d^2 (1-m)^2 / m^4\right)$ for both $\sigma_f$ and $\sigma_b$ which is much more dimension sensitive than MALA's bound $\mathcal{O}(\sqrt{d} / m^2)$ \cite{Dwivedi2018}.

\section{Additional details on real world examples}

\subsection{Computational considerations} \label{app:computational}

\begin{figure}
    \centering
    \includegraphics[width=0.5\linewidth]{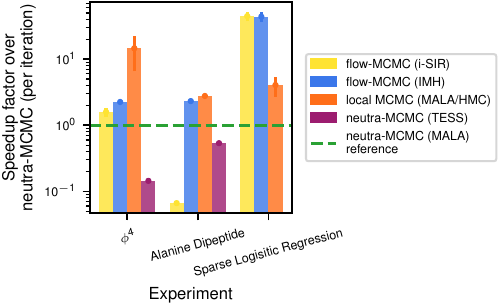}
    \caption{Speedup factor compared to \reparamalgs~on real world experiments}
    \label{fig:real_exp:speedup}
\end{figure}

\reparamalgs~methods should have a much higher computational cost compared to other methods. Figure \ref{fig:real_exp:speedup} shows that in two of our real world experiments, \reparamalgs~methods were significantly more expensive than \propalgs~methods. However, in the molecular system experiment (Sec. \ref{sec:aldp}) \propalgs~methods were more costly : that's because the distribution of this molecular system is very expensive to evaluate and breaks the parallel capabilities of \propalgs~methods such as i-SIR. Note that using ESS always led to high computational costs.

\subsection{Experimental details on logistic regression} \label{app:log_reg}

\paragraph{Logistic regression}

Consider a training dataset $\mathcal{D} = \{(x_k,y_k)\}_{j=1}^M$ where $x_k \in \mathbb{R}^d$ and $y_k \in \{-1,1\}$. In our case $\mathcal{D}$ is the german credit dataset ($M = 750$) where each $x_k$ represent 24 details about a person who takes a credit (age, sex, job, \ldots) and $y_k$ is $-1$ if this person is considered having bad credits risks and $1$ otherwise. We standardised this dataset so that each component of $x_k$ is between -1 and 1 and also added a constant $1$ component.

We consider that the likelihood of a pair $(x,y)$ is given by $p(y | x, w) = \mathrm{Bernoulli}(y ; \sigma(x^T w))$ where $w \in \mathbb{R}^d$ is a weight vector and $\sigma$ is the sigmoid function. Given a prior distribution $p(w)$, we sample the posterior distribution $p(w | \mathcal{D})$ and compute the posterior predictive distribution $p(y | x, \mathcal{D}) = \int p(y | x, w, \mathcal{D}) p(w | \mathcal{D}) dw \simeq 1/n \sum_{i=1}^n p(y | x, w_i, \mathcal{D}) p(w_i | \mathcal{D})$ for $(x,y) \in \mathcal{D}_{test}$.

Here, we consider a sparse prior suggested by \cite{Carvalho2009} which can be written as follows, $w = \tau \beta \circ \lambda$ and 
$$
    p(w) = p(\tau, \beta, \lambda) = \mathrm{Gamma}(\tau;\alpha = 0.5, \beta = 0.5)  \times \prod_{i=1}^d \mathrm{Gamma}(\lambda_i; \alpha = 0.5, \beta = 0.5)  \times \mathrm{Normal}(\beta_i; 0, 1)\eqsp.
$$
We log-transform the Gamma distributions by replacing $\mathrm{Gamma}$ with $\mathrm{Gamma}_{log}$ to ensure the positivity of the global scale $\tau$ and the local scale $\lambda$.
$$
    \mathrm{Gamma}_{log}(y;\alpha,\beta) = \mathrm{Gamma}(\exp{y};\alpha,\beta) \times \exp{y}\eqsp.
$$

\paragraph{Flow and training}

 The normalizing flow at stake is an Inverse Autoregressive Flow (IAF) \cite{Papamakarios2017} trained with the procedure described in \cite{Hoffman2019neutra} on a train dataset : it is a 3 layers deep flow using a residual neural networks wide of 51 neurons and deep of 2 layers with elu activation function. The flow was trained by optimizing the backward Kullback-Leiber with a learning rate of $10^{-2}$ (scaled by 10\% every 1000 optimization steps) using Adam optimizer for 5000 steps with a batch size of size 4096. 
 
\paragraph{Sampling details}

We used Pyro's \cite{Bingham2019} implementation of Hamilotonian Monte Carlo (HMC) to sample the previously described model. The warmup phase \footnote{In the following, warmup samples will be always discarded} lasted 64 steps and the MCMC chains were 256 steps long - this number was chosen as it guarantees a $\hat{R}$ close to 1. We automatically adapted both the step size and the mass matrix while the length of the trajectory (for the Leapfrog integrator) was frozen to 8 . Samplers involving i-SIR used $N = 100$ particles and importance sampling leveraged $N \times 256 = 25600$ particles. The global/local samplers interleaved 20 local steps between global steps.

\subsection{Experimental details on alanine dipeptide} \label{app:aldp}

\begin{figure}[t]
    \centering
    \includegraphics[width=0.25\linewidth]{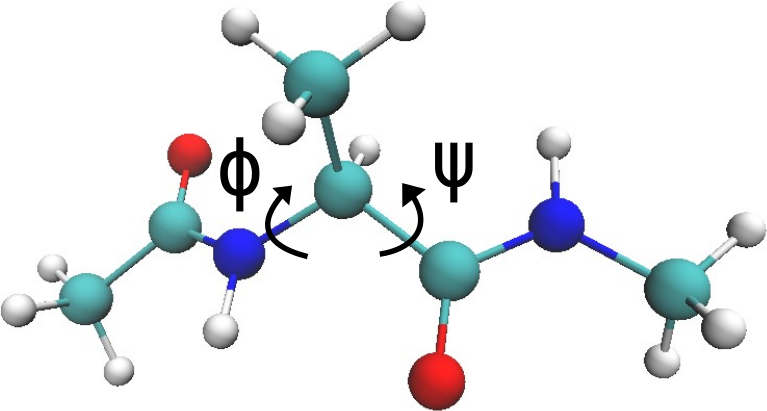}
    \caption{Visualization of alanine dipeptide and the dihedral angles $\phi$ and $\psi$ for the Ramachandran plot (taken from \cite{Midgley2022})}
    \label{fig:aldp:mol}
\end{figure}

The alanine dipeptide experiment aimed at sampling from the Boltzmann distribution on the 3D coordinates of the 22 atoms alanine dipeptide molecule (shown in Fig. \ref{fig:aldp:mol}). The details of the distribution can be found in \cite{Midgley2022}, we used their neural spline normalizing flow trained using FAB as the common flow in all flow-based samplers. The ground truth was obtained through parallel tempering MD simulations from the same paper. We used 512 chains of length 600 using the same hyperparameters as the mixture of Gaussians : $N = 128$, $n_{local} = 5$ and a target acceptance of 75\%. 

\subsection{Experimental details on $\phi^4$} \label{app:phi_four}

\paragraph{Target distribution}
Following \cite{Gabrie2022} we consider a 1d $\phi^4$ on a grid of $d$ points with Dirichlet boundary conditions at both extremeties of the field; that is $\phi_0 = \phi_{d+1} = 0$ where one configuration is the $d$-dimensional vector $( \phi_i )_{i=1}^d$. The negative logarithm of the density is given by
\begin{equation}
    - \ln \pi(\phi) = \beta \left( \frac{ad}{2}  \sum_{i=1}^{d+1} (\phi_i-\phi_{i-1})^2 + \frac{1}{4ad} \sum_{i=1}^d (1-\phi_i^2)^2 \right)
\end{equation}
with a flow a colored normal distribution as base built by keeping the quadratic terms from above: 
\begin{equation}
    - \ln \base(\phi) = \beta \left(\frac{ad}{2} \sum_{i=1}^{d+1} (\phi_i-\phi_{i-1})^2 + \frac{1}{2 ad} \sum_{i=1}^d \phi_i^2\right),
\end{equation}
also with Dirichlet boundary conditions at 0. We chose parameter values for which the system is bimodal: $a=0.1$ and inverse temperature $\beta=20$ (see Fig. \ref{fig:phi_four:recap} in the main paper).

\paragraph{Training of the normalizing flow}

The flows at stake are again RealNVPs and their hyper-parameters are given in table \ref{table:phi_four:flow_params}. We used Adam optimizer to minimize the forward KL. This forward KL was approximated by taken exactly 50\% of samples in each mode by running MALA locally. The size of those chains is $\frac{\text{batch size}}{\text{\# of parallel MCMC chains}}$.

\paragraph{Sampling parameters}

The sampling parameters were selected using a grid-search which optimized the quality of the mode weight estimation. The starting sample of each chain was taken to be in a single mode \footnote{This is achieved by running a gradient descent starting from a constant unit configuration}. We used 256 MCMC chains of length 512 where the first half of the chain was discarded for warmup purposes. The sampling hyperparameters are detailed in table \ref{table:phi_four:spl_params}.

\begin{table}[H]
    \caption{RealNVP hyperparameters for the $\phi^4$ experiment.}
    \label{table:phi_four:flow_params}
    \begin{center}
        \begin{small}
            \begin{tabular}{c | c | c | c | c | c | c | c | c}
                \toprule
                \thead{Dimension} & \thead{Depth of \\ the flow} & \thead{\# of layers \\ in NN} & \thead{\# of neuron \\ per layer} & \thead{\# of \\ training steps} & \thead{Learning rate} & \thead{Learning rate \\ decay factor} & \thead{Batch size} & \thead{\# MCMC \\ chains} \\
                \midrule
                64 & 5 & 3 & 128 & $10^4$ & $10^{-2}$ & 0.98 & 4096 & 64 \\
                128 & 6 & 256 & 3 & $2 \times 10^4$ & $10^{-2}$ & 0.98 & 4096 & 64 \\
                256 & 10 & 512 & 4 & $3.5 \times 10^4$ & $5 \times 10^{-3}$ & 0.98 & 8192 & 64 \\
                512 & 15 & 512 & 5 & $6 \times 10^4$ & $5 \times 10^{-4}$ & 0.99 & 8192 & 32 \\
                \bottomrule
            \end{tabular}
        \end{small}
    \end{center}
\end{table}

\begin{table}[H]
    \caption{Sampling hyperparameters for the $\phi^4$ experiment.}
    \label{table:phi_four:spl_params}
    \begin{center}
        \begin{small}
            \begin{tabular}{c | c | c | c | c | c | c }
                \toprule
                \thead{Dimension} & \thead{$n$} & \thead{$N$ \\ (\propalgs)} & \thead{$N$ \\ (\neutraflow)} & \thead{$N$ \\ (IS)} & \thead{$n_{local}$ \\ (\propalgs)} & \thead{$n_{local}$ \\ (\neutraflow)} \\ 
                \midrule
                64 & 256 ($\times 2$) & 60 & 60 & 3840 & 25 & 25 \\
                128 & 256 ($\times 2$) & 80 & 80 & 5120 & 25 & 25 \\
                256 & 256 ($\times 2$) & 100 & 100 & 6400 & 25 & 25 \\
                512 & 256 ($\times 2$) & 120 & 120 & 7680 & 25 & 25 \\
                \bottomrule
            \end{tabular}
        \end{small}
    \end{center}
\end{table}

\section{Sampling image distributions}\label{app:cifar10}

\begin{figure}[t]
    \centering
    \includegraphics{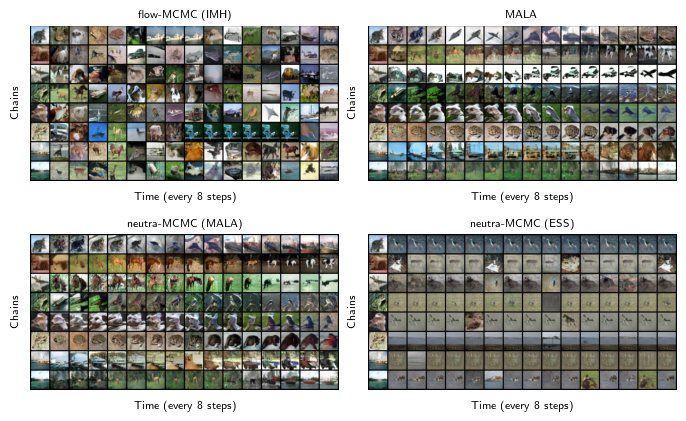}
    \caption{MCMC chains sampled from a SN-GAN as an energy-based model}
    \label{fig:samples}
\end{figure}

\begin{figure}[t]
    \centering
    \includegraphics[width=0.75\linewidth]{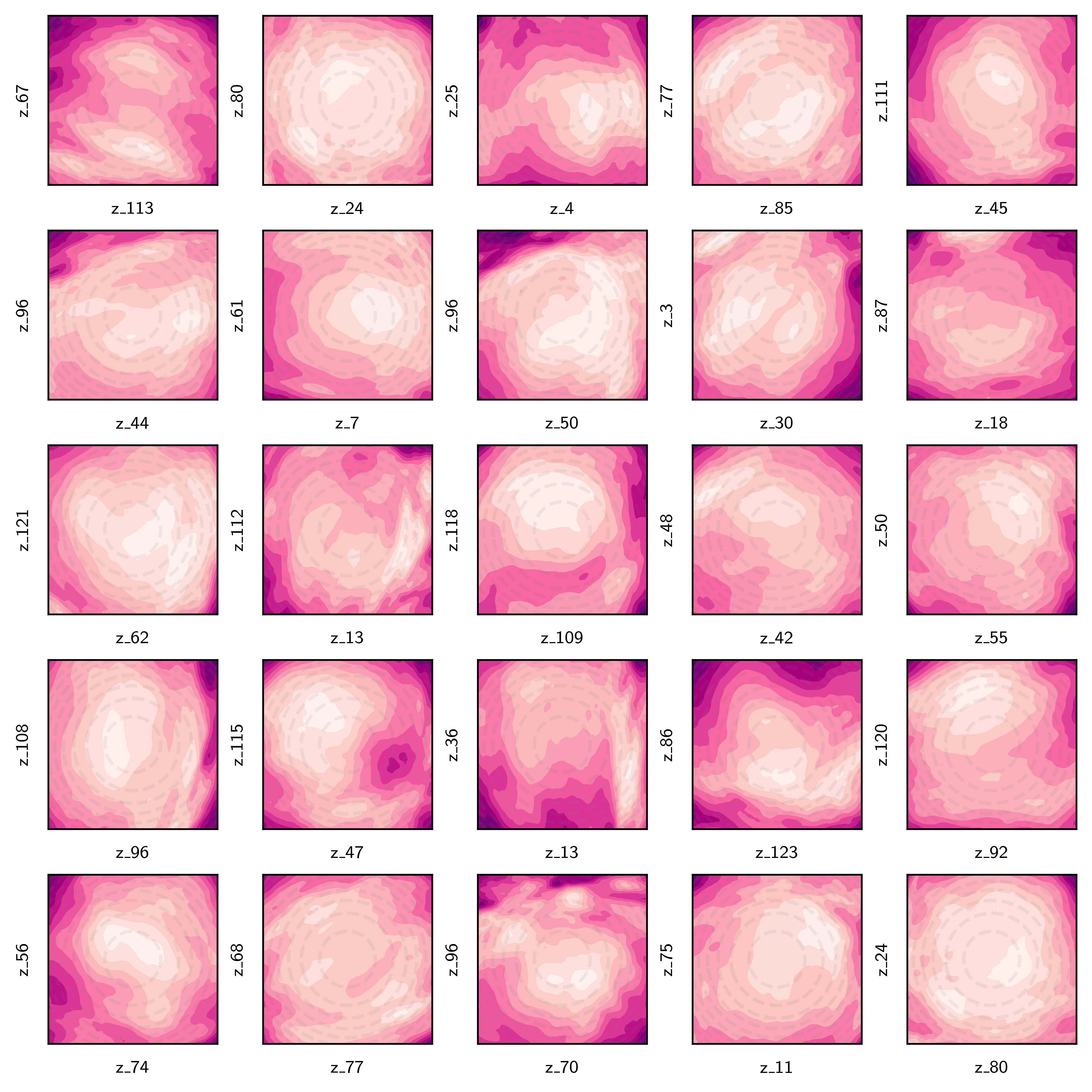}
    \caption{Random 2D slices of $E_{GAN}$. The purple colors reperensent the level lines of $E_{GAN}(z)$ and the grey lines are the level lines of $z \mapsto -\log p_0(z)$}
    \label{fig:density}
\end{figure}

\paragraph{Design of the target distribution}

We trained a SN-GAN \cite{sngan} on the CIFAR10 dataset for 100.000 epochs (implementation from \href{https://github.com/kwotsin/mimicry}{https://github.com/kwotsin/mimicry}). The latent space of the SNGAN is of dimension 128. We use a multivariate centered standard Gaussian as a base distribution. Following \cite{gan_ebm}, we define a probability measure as an energy-based model on the latent space of the GAN
$$
    p(z) = \exp(-E_{GAN}(z)) / Z_{\theta} \text { with } E_{GAN}(z) = -\log p_0(z) - \operatorname{logit}(D(G(z))),
$$
where $p_0$ is the base distribution, $G$ is the generator, $D$ is the discriminator and $\operatorname{logit}(y) = \log(y / (1-y))$ is the inverse sigmoid function. Slices of $E_{GAN}$ can be found on Fig. \ref{fig:density} : $p(z)$ is multimodal and has many bad geometries. The samples of $p(z)$ can be transformed in images (which belong to $\mathbb{R}^{3 \times 32 \times 32}$) by pushing them through $G$.

\paragraph{Sampling procedure}

We sampled $p(z)$ using 4 different MCMC algorithms. Fig. \ref{fig:samples} depicts 8 chains for each sampler started in the same state. The flow based samplers use a RealNVP normalizing flow which has 8 layers each one having a MLP wide of 128 neurons and deep of 3 layers. This normalizing flow was trained using the backward KL loss for 512 iterations using batches of size 1024. The chains are 128 samples long (on Fig. \ref{fig:samples}, the chains are subsampled every 8 steps) and have the same hyper-parameters as in the four Gaussians experiment in dimension 128.

Fig. \ref{fig:samples} shows that \propalgs~algorithm mixes between the modes of this moderately-high dimensional distribution ($d=128$) more often than its \reparamalgs~counterparts. This observation is consistent with the results of Sec. \ref{sec:exp_intuitions} and Sec. \ref{sec:realtasks}.

\section{Amount of computation and type of resources}

We used a single type of GPU, the Nvidia A100. Each experiment of Section~\ref{sec:exp_intuitions} took about 2 hours to run when distributed on 4 GPUs. Moreover, the training of the 5 RealNVPs for Section~\ref{sec:mog} took 6 hours on 4 GPUs. Regarding the real world experiments (see Section~\ref{sec:realtasks}), training the flow for the molecular experiment took 24 hours on a single GPU and 6 hours to perform the sampling part on 4 GPUs. Training the flows for the $\phi^4$ experiments took 12 hours on 2 GPUs and 1 hours on 4 GPUs for sampling. Finally, training the flow for the bayesian logistic regression took less than a minute but sampling with HMC lasted 6 hours on 4 GPUs. In total, this work required 55 hours of parallel GPU run-time.

\end{document}